\DeclarePairedDelimiter{\norm}{\lVert}{\rVert}
\newtheorem{theorem}{Theorem}
\newtheorem{lemma}[theorem]{Lemma}
\newtheorem{definition}[theorem]{Definition}
\newtheorem{fact}[theorem]{Fact}
\newtheorem{corollary}[theorem]{Corollary}
\newcommand{\ex}{\mathbb{E}}
\newcommand{\eat}[1]{}
\renewcommand{\norm}[1]{\left|\left|#1\right|\right|}
\newcommand{\bzero}{\textbf{0}}
\newcommand{\grad}{\nabla}
\newcommand{\psdleq}{\preccurlyeq}
\newcommand{\supp}{\text{supp}}
\newcommand{\cond}{\mathcal{E}}
\newcommand{\D}[1]{\text{d}#1}
\newcommand{\del}{\partial}
\newcommand{\Renyi}{R\'enyi\ }
\newcommand{\eps}{\ensuremath{\varepsilon}}
\newcommand{\privacyparam}{\zeta}
\title{Faster Differentially Private Samplers via R\'enyi Divergence Analysis of Discretized Langevin MCMC}
\author{%
  Arun Ganesh\thanks{
  Department of Electrical Engineering and Computer Sciences,
  UC Berkeley.
  \texttt{arunganesh@berkeley.edu}. 
  Supported in part by NSF Award CCF-1535989.
  Part of this work was done while the author was an intern at Google Brain.
  }
  \and
  Kunal Talwar\thanks{\texttt{kunal@kunaltalwar.org}. Part of this work was done while the author was at Google Brain.}
}
\date{}
\begin{document}
\maketitle
\begin{abstract}
Various differentially private algorithms instantiate the exponential mechanism, and require sampling from the distribution $\exp(-f)$ for a suitable function $f$. When the domain of the distribution is high-dimensional, this sampling can be computationally challenging. Using heuristic sampling schemes such as Gibbs sampling does not necessarily lead to provable privacy. When $f$ is convex, techniques from log-concave sampling lead to polynomial-time algorithms, albeit with large polynomials. Langevin dynamics-based algorithms offer much faster alternatives under some distance measures such as statistical distance. In this work, we establish rapid convergence for these algorithms under distance measures more suitable for differential privacy. For smooth, strongly-convex $f$, we give the first results proving convergence in R\'enyi divergence. This gives us fast differentially private algorithms for such $f$. Our techniques and simple and generic and apply also to underdamped Langevin dynamics.
\end{abstract}
\thispagestyle{empty}
\setcounter{page}{0}
\clearpage

\section{Introduction}


The Exponential Mechanism~\citep{McSherryT07} is a commonly-used mechanism in differential privacy~\citep{DworkRbook}. There is a large class of mechanisms in the differential privacy literature that instantiate the Exponential Mechanism with appropriate score functions, use it as a subroutine, or sample from $\exp(-f)$ for some function $f$. This family includes differentially private mechanisms for several important problems, such as PCA~\citep{ChaudhuriSS, KapralovT}, functional PCA~\citep{AwanKRS}, answering counting queries~\citep{HardtT}, robust regression~\citep{AsiD}, some combinatorial optimization problems~\citep{GuptaLMRT}, $k$-means clustering~\citep{FeldmanFKN}, optimization of dispersed functions~\citep{BalcanDV}, convex optimization~\citep{BassilyST, MinamiASN16}, Bayesian data analysis~\citep{Mir, DimitrakakisNMR, WangFS15, WassermanZ, FouldsGWC}, linear and quantile regression~\citep{ReimherrA}, etc.

Implementing these mechanisms requires sampling from a distribution given by $\exp(-f)$ from some domain $D$, for a suitable score function $f$. When the domain $D$ is finite and small, this sampling is straightforward. Several differentially private mechanisms instantiate the exponential mechanism where $D=\mathbb{R}^d$, in which case this sampling is not straightforward.

Such sampling problems are not new and often occur in statistics and machine learning settings. The common practical approach is to use heuristic MCMC samplers such as Gibbs sampling, which often works well in problems arising in practice. However, given that convergence is not guaranteed, the resulting algorithms may not be differentially private. Indeed one can construct simple score functions on the hypercube for which the natural Metropolis chain run for any polynomial time leads to a non-private algorithm~\citep{GaneshT}. There are also well-known complexity-theoretic barriers in exactly sampling from $\exp(-f)$ if $f$ is not required to be convex. 

Several applications however involve convex functions $f$ and this is the focus of the current work. Indeed this is the problem of sampling from a log-concave distribution, which has attracted a lot of interest. Here, there are two broad lines of work. The classical results in this line of work (e.g. ~\citep{ApplegateK91, LovaszV07}) show that given an oracle for computing the function, one can sample from a distribution that is $\eps$-close\footnote{The letter $\eps$ commonly denotes the privacy parameter in DP literature, and the  distance to the target distribution in the sampling literature. Since most of the technical part of this work deals with sampling, we will reserve $\eps$ for distance, and will let $\privacyparam$ denote the privacy parameter.} 
to the target distribution in time polynomial in $d$ and $\log \frac 1 \eps$. Here the closeness is measured in statistical distance. By itself, this does not suffice to give a differentially private algorithm, as DP requires closeness in more stringent notions of distance. The fact that the time complexity is logarithmic in $\frac 1 \eps$ however allows for an exponentially small statistical distance in polynomial time. This immediately yields $(\privacyparam, \delta)$-DP algorithms, and with some additional work can also yield $\privacyparam$-DP algorithms~\citep{HardtT}. Techniques from this line of work can also sometimes apply to non-convex $f$ of interest. Indeed~\citet{KapralovT} designed a polynomial time algorithm for the case of $f$ being a Rayleigh quotient to allow for efficient private PCA.

The runtime of these log-concave sampling algorithms however involves large polynomials. A beautiful line of work has reduced the dependence (of the number of function oracle calls) on the dimension from roughly $d^{10}$ in~\citet{ApplegateK91} to $d^3$ in~\citet{LovaszV06, LovaszV07}. Nevertheless, the algorithms still fall short of being efficient enough to be implementable in practice for large $d$. A second, more recent, line of work ~\citep{Dalalyan16,DurmusM19} have shown that ``first order'' Markov Chain Monte Carlo (MCMC) algorithms such as Langevin MCMC and Hamiltonian MCMC enjoy fast convergence, and have better dependence on the dimension. These algorithms are typically simpler and more practical but have polynomial dependence on the closeness parameter $\eps$. This polynomial dependence on $\eps$ makes the choice of distance more important. Indeed these algorithms have been analyzed for various measures of distance between distributions such as statistical distance, KL-divergence and Wasserstein distance.

These notions of distance however do not lead to efficient differentially private algorithms (see~\cref{app:distance_measures}). This motivates the question of establishing rapid mixing in \Renyi divergence for these algorithms. This is the question we address in this work, and show that when $f$ is smooth and strongly convex, discretized Langevin dynamics converge in iteration complexity near-linear in the dimension. This gives more efficient differentially private algorithms for sampling for such $f$.

\citet{VempalaW19} recently studied this question, partly for similar reasons. They considered the Unadjusted (i.e., overdamped) Langevin Algorithm and showed that when the (discretized) Markov chain satisfies suitable mixing properties (e.g. Log Sobolev inequality), then the discrete process converges in \Renyi divergence to {\em a} stationary distribution. However this stationary distribution of the discretized chain is different from the target distribution. The \Renyi divergence between the stationary distribution and $\exp(-f)$ is not very well-understood~\citep{RobertsT1996, Wibisono18}, and it is conceivable that the stationary distribution of the discrete process is {\em not} close in \Renyi divergence to the target distribution and thus may not be differentially private. Thus the question of designing fast algorithms that sample from a distribution close to the distribution $\exp(-f)$ in \Renyi divergence was left open.

In this work we use a novel approach to address these questions of fast sampling from $\exp(-f)$ using the discretized Langevin Algorithm. Interestingly, we borrow tools commonly used in differential privacy, though applied in a way that is not very intuitive from a privacy point of view. We upper bound the \Renyi divergence between the output of the discrete Langevin Algorithm run for $T$ steps, and the output of the continuous process run for time $T \eta$. The continuous process is known~\citep{VempalaW19} to converge very quickly in \Renyi divergence to the target distribution. This allows us to assert closeness (in \Renyi divergence) of the output of the discrete algorithm to the target distribution.  This bypasses the question of the bias of the stationary distribution of the discrete process. Moreover, this gives us a differentially private algorithm with iteration complexity near-linear in the dimension. Our result applies to log-smooth and strongly log-concave distributions. While results of this form may also be provable using methods from optimal transport, we believe that our techniques are simpler and more approachable to the differential privacy community, and may be more easily adaptable to other functions $f$ of interest.

Our approach is general and simple. We show that it can be extended to the {\em underdamped} Langevin dynamics which have a better dependence on dimension, modulo proving fast mixing for the continuous process. As a specific application, we show how our results lead to faster algorithms for implementing the mechanisms in~\citet{MinamiASN16}.


As is common in this line of work, we ignore numerical issues and assume real arithmetic. The results can be translated to the finite-precision arithmetic case by standard techniques, as long as the precision is at least logarithmic in $d$ and $T$. The real arithmetic assumption thus simplifies the presentation without affecting the generality of the results.

\subsection{On Distance Measures between Distributions}
\label{app:distance_measures}

Existing algorithms for sampling from logconcave distributions are known to output samples from a distribution that is close to the intended distribution. The closeness is typically measured in statistical distance, Wasserstein distance, or in KL divergence. Unfortunately, none of these distances are strong enough to ensure differential privacy for the resulting algorithm. The more stringent choice of distance in differential privacy is for a good reason: it is easy to construct examples of algorithms that ensure privacy with respect to one of these weaker notions of distance but are clearly unsatisfactory from a privacy point of view~\citep{DworkRbook}. This motivates the question of efficient sampling in terms of a stronger measure of distance such as $\infty$-divergence, or \Renyi divergence (both of which upper bound KL divergence and thus upper bound statistical distance and Wasserstein distances). Different distance notions can be related to each other and~\citet{HardtT} showed that an exponentially small statistical distance guarantee suffices to derive a differentially private algorithm. This allows for polynomial time algorithms using the classical logconcave samplers.

The faster sampling algorithms based on Langevin dynamics and relatives however have a polynomial dependence on the distance. In this case, convergence under the various notions of distance is not equivalent. None of the commonly used measures (Statistical distance, KL-divergence or Wasserstein distance) can be polynomially related to common distances of interest from a privacy point-of-view ($\infty$-divergence, \Renyi divergence). While $(\privacyparam, \delta)$-DP can be related via a polynomial in $\delta^{-1}$, this would lead to algorithms that have runtime polynomial in $\delta^{-1}$, which is undesirable as we often want $\delta^{-1}$ to be super-polynomial.

\subsection{Other Related Work} 
\citet{WangFS15} discuss the issue of privacy when using approximate samplers at length and consider two algorithms. The first one (OPS) that samples approximately from $\exp(-f)$ considers closeness in statistical distance and thus can only be efficient when coupled with the first kind of samplers above, i.e. those that have a logarithmic dependence on the closeness parameter. The second algorithm they analyze is a variant of Stochastic Gradient Langevin Dynamics (SGLD). The algorithm adds additional noise for privacy, and while it is shown to be private for suitable parameters, it does not ensure convergence to the target distribution. Differentially private approximations to SGLD have also been studied in~\cite{LiCLC}. Note that in contrast, we do not need to modify the Langevin dynamics which ensures convergence as well as privacy.

There is a large body of work on Langevin algorithms and their variants. We refer the reader to the surveys by~\citet{roberts2004} and~\citet{Vempala05geometricrandom}. There has been a recent spate of activity on analyzing these algorithms and their stochastic variants, under different kinds of assumptions on $f$ and we do not attempt to summarize it here.

\subsection{Results and Techniques}
\begin{figure}
\begin{center}
\begin{tabular}{|l|l|c|c|}
\hline
$f$ is $L$-smooth and & Process & $\eta$ & Iterations\\
\hline
1-strongly convex & Overdamped  & $\tilde{O}\left(\frac{1}{\tau L^4 \ln^2 \alpha} \cdot \frac{\eps^2}{d} \right)$ (Thm~\ref{lemma:conditionaldivergence}) & $\tilde{O}\left(\frac{d \tau^2 L^4 \ln^2 \alpha}{\eps^2} \right)$ \\
\hline
$B$-Lipschitz     & Overdamped  & $\tilde{O}\left(\frac{1}{\tau L^4 \ln^2 \alpha} \cdot \frac{\eps^2}{B^2 + d} \right)$ (Thm~\ref{lemma:conditionaldivergence-lipschitz})& $\tilde{O}\left(\frac{(B^2 + d) \tau^2 L^4 \ln^2 \alpha}{\eps^2} \right)$ \\
\hline
1-strongly convex & Underdamped &  $\tilde{O}\left(\frac{1}{\tau L \ln \alpha} \cdot \frac{\eps}{\sqrt{d}} \right)$ (Thm~\ref{lemma:conditionaldivergence-ud}) & $\tilde{O}\left(\frac{\sqrt{d} \tau^2 L \ln \alpha}{\eps} \right)$ \\
\hline
\end{tabular}
\end{center}
\caption{Summary of results. For each family of functions and process (either overdamped or underdamped Langevin dynamics), an upper bound is listed on the step size $\eta$ (and thus a bound on the iteration complexity) needed to ensure the $\alpha$-R\'enyi divergence between the discrete and continuous processes is at most $\eps$ after time $\tau$. Setting $\alpha = O(\ln(1/\delta)/\privacyparam), \eps = \privacyparam/2$ gives that the $\delta$-approximate max divergence is at most $\privacyparam$, i.e. $(\privacyparam, \delta)$-differential privacy.}
\label{fig:results}
\end{figure}

Our results are summarized in Figure~\ref{fig:results}. Combined with results from \cite{VempalaW19} on the convergence of the continuous process, the first result gives the following algorithmic guarantee, our main result:

\begin{theorem}\label{thm:maindpthm}
Fix any $\alpha \geq 1$. Let $R$ be a distribution satisfying $R(x) \propto e^{-f(x)}$ for 1-strongly convex and $L$-smooth $f$ with global minimum at 0. Let $P$ be the distribution arrived at by running discretized overdamped Langevin dynamics using $f$ with step size $\eta = \tilde{O}(\frac{1}{\tau L^4 \ln^2 \alpha} \cdot \frac{\eps^2}{d})$ for continuous time $\tau = O(\alpha \ln \frac{d \ln L}{\eps})$ (i.e. for $\tilde{O}(\frac{\alpha^2 L^4 d}{\eps^2})$ steps) from initial distribution $N(0, I_d)$. Then we have $D_\alpha(P || R), D_\alpha(R || P) \leq \eps$.
\end{theorem}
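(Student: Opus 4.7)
My strategy is to introduce the continuous-time overdamped Langevin diffusion as an intermediate distribution and apply a weak triangle inequality for \Renyi divergence. Let $Q_\tau$ denote the law of $X_\tau$, where $(X_t)_{t \geq 0}$ solves $dX_t = -\nabla f(X_t)\,dt + \sqrt{2}\,dW_t$ started from $X_0 \sim N(0, I_d)$; then $P$ is essentially its time-$\tau$ Euler--Maruyama discretization with step size $\eta$, and $R \propto e^{-f}$ is the stationary distribution of the SDE. I would control $D_\alpha(P \,\|\, R)$ by separately bounding the discretization error $D_\alpha(P \,\|\, Q_\tau)$ and the continuous-time convergence error $D_\alpha(Q_\tau \,\|\, R)$, each by roughly $\eps/2$, then combining via a weak triangle inequality for \Renyi divergence (which costs a modest inflation in the order $\alpha$ that can be absorbed into the $\ln^2 \alpha$ factors in the step-size bound).

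\textbf{Discretization piece.} This is exactly what Theorem~\ref{lemma:conditionaldivergence} is designed to deliver: plugging in the stated step size $\eta = \tilde{O}(\eps^2/(\tau L^4 d \ln^2 \alpha))$ and running for $\tau/\eta$ steps gives $D_{\alpha'}(P \,\|\, Q_\tau) \leq \eps/2$ at a slightly inflated order $\alpha' > \alpha$. I would apply this theorem as a black box.

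\textbf{Continuous-time piece.} Because $f$ is $1$-strongly convex, $R$ satisfies a log-Sobolev inequality with constant $1$ by Bakry--\'Emery. The Vempala--Wibisono result then yields exponential decay $D_{\alpha'}(Q_\tau \,\|\, R) \leq e^{-c \tau / \alpha'} D_{\alpha'}(Q_0 \,\|\, R)$ along the continuous flow. To bound the initial divergence $D_{\alpha'}(N(0, I_d) \,\|\, R)$, I would use that $0$ is the minimizer of $f$ together with $\norm{x}^2/2 \leq f(x) - f(0) \leq L \norm{x}^2 / 2$, so that $p_R$ and $p_{N(0, I_d)}$ are sandwiched between Gaussians with covariances $L^{-1} I_d$ and $I_d$; a direct computation of the \Renyi integral $\int p_N^{\alpha'} p_R^{1-\alpha'}$, together with the matching bound on the normalizing constant $Z_R \geq e^{-f(0)} (2\pi/L)^{d/2}$, yields a bound of order $O(d \ln L)$ up to factors that depend mildly on $\alpha'$. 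Choosing $\tau = O(\alpha \ln (d \ln L / \eps))$ drives this below $\eps/2$, and combining with the discretization bound via the weak triangle inequality yields $D_\alpha(P \,\|\, R) \leq \eps$.

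\textbf{Reverse direction and main obstacle.} The bound on $D_\alpha(R \,\|\, P)$ follows the same template, using the reverse-direction discretization bound (the statement of Theorem~\ref{lemma:conditionaldivergence} is phrased symmetrically as controlling \emph{the} \Renyi divergence between the two processes) together with exponential decay of $D_\alpha(R \,\|\, Q_\tau)$. The main obstacle I expect lies in this last piece: since \Renyi divergence is not symmetric, the Vempala--Wibisono argument for $D_\alpha(Q_\tau \,\|\, R)$ does not immediately furnish the reverse bound. A clean route is to verify that under LSI the reverse \Renyi divergence also enjoys exponential decay (via a Fisher-information / relative-entropy variational formulation in the reverse direction, or via Talagrand's $T_2$ inequality combined with exponential $W_2$ contraction); alternatively, one can exploit the strong log-concavity of both $Q_\tau$ (which, started from a Gaussian and run under a strongly convex potential, remains strongly log-concave) and $R$ to make the reverse \Renyi integral tractable by an argument analogous to the one used for the initial divergence. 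Once this reverse-direction convergence is in hand, the remainder of the argument is routine bookkeeping of constants.
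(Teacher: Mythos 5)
Your high-level decomposition (route through the time-$\tau$ law $Q_\tau$ of the continuous diffusion, apply Theorem~\ref{lemma:conditionaldivergence} for the discretization error, apply exponential decay along the flow for the continuous piece, combine via the weak triangle inequality, inflating the order $\alpha$) matches the paper's strategy. However, there are two genuine gaps, one of which is a real error rather than a missing detail.

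First, your treatment of the initial divergence $D_{\alpha'}(N(0,I_d)\,\|\,R)$ is wrong. You claim a direct computation of $\int p_N^{\alpha'} p_R^{1-\alpha'}$ gives $O(d\ln L)$ ``up to factors that depend mildly on $\alpha'$.'' This cannot be right: if $R = N(0, \tfrac{1}{L}I_d)$ (which is $1$-strongly convex and $L$-smooth), the integrand has exponent $\bigl(-\alpha' + (\alpha'-1)L\bigr)\norm{x}_2^2/2$, which is non-integrable once $\alpha' \geq L/(L-1)$. So for large $L$ the divergence $D_{\alpha'}(N(0,I_d)\,\|\,R)$ is \emph{infinite} already at $\alpha'$ barely above $1$, and certainly at the inflated order $2\alpha$ you need for the weak triangle inequality. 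The paper sidesteps this exactly by observing that $D_{1+1/L}(N(0,I_d)\,\|\,R)$ is finite (Lemma~\ref{lemma:privateinitialdistribution}), and then using a \emph{hypercontractivity} property of the flow (Lemma~\ref{lemma:hypercontractivity}, from~\cite{VempalaW19}): running for an additional $O(\log(\alpha L))$ amount of continuous time boosts the order from $1+1/L$ up to $2\alpha$ while keeping the divergence bounded by $d\log L$, after which the exponential decay of Lemma~\ref{lemma:divergencedecay} drives it down to $\eps/3$. This hypercontractivity step is entirely absent from your argument, and without it your bound on $D_\alpha(P\,\|\,R)$ does not close. (Note that this is also precisely why the one-sided Theorem~\ref{thm:mainthm} starts from $N(0,\tfrac{1}{L}I_d)$, for which $D_{\alpha'}(Q_0\,\|\,R) \leq \tfrac{d}{2}\ln L$ holds for all $\alpha'$, but the bidirectional theorem cannot: that choice makes $D_{\alpha'}(R\,\|\,Q_0)$ blow up instead.)

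Second, on the reverse direction $D_\alpha(R\,\|\,Q_\tau)$: you correctly identify this as the main new obstacle, and your first suggested route (establish a reverse-direction decay via log-Sobolev-type control) is the one the paper actually takes. To make it precise, the paper proves a new Lemma~\ref{lemma:divergencedecay2}: if $-\log Q_0$ is $1$-strongly convex, then $Q_t$ maintains a log-Sobolev constant bounded below by $1/2$ for all $t$ (shown by tracking how one gradient step plus Gaussian convolution affects the LSI constant, then taking the continuous limit), and this LSI for the \emph{intermediate} law $Q_t$ (not just for $R$) is what powers exponential decay of $D_\alpha(R\,\|\,Q_t)$ via the identity $D_{1-\alpha}(Q\|R) = \tfrac{1-\alpha}{\alpha}D_\alpha(R\|Q)$ and a swap of roles in the Fisher-information computation. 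Your second alternative -- bounding the reverse \Renyi integral directly by exploiting log-concavity of $Q_\tau$ and $R$ -- would not by itself produce a decay in $\tau$, and so would not let you drive the bound below $\eps$ by taking $\tau = O(\alpha\ln\tfrac{d\ln L}{\eps})$.
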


This is the first algorithmic result for sampling from log-smooth and strongly log-concave distributions with low error in R\'enyi divergence without additional assumptions. In particular, if for $\alpha = 1 + 2\log(1/\delta)/\privacyparam$ we have $D_\alpha(P||R), D_\alpha(R||P) \leq \privacyparam/2$, then by Fact~\ref{fact:renyitoapx} we have that $P, R$ satisfy the divergence bounds of $(\privacyparam, \delta)$-differential privacy. In turn, given any mechanism that outputs $R, R'$ on adjacent databases satisfying $(\privacyparam, \delta)$-differential privacy and the strong convexity and smoothness conditions, Theorem~\ref{thm:maindpthm} and standard composition theorems gives a mechanism that outputs $P, P'$ for these databases such that the mechanism satisfies $(3\privacyparam, 3\delta)$-differential privacy, $P, P'$ are efficiently sampleable, and $P, P'$ obtain utility guarantees comparable to those of $R, R'$.

All results in Figure~\ref{fig:results} are achieved using a similar analysis, which we describe here.
Instead of directly bounding the divergence between the discrete and continuous processes, we instead bound the divergence between the discrete processes using step sizes $\eta, \eta/k$. Our resulting bound does not depend on $k$, so we can take the limit as $k$ goes to infinity and the latter approaches the continuous process. Suppose within each step of size $\eta$, neither process moves more than $r$ away from the position at the start of this step. Then by smoothness, in each interval of length $\eta/k$ the distance between the gradient steps between the two processes is upper bounded by $Lr \frac{\eta}{k}$. Our divergence bound thus worsens by at most $D_\alpha(N(0, \frac{2\eta}{k}) ||  N(x, \frac{2\eta}{k}))$ where $x$ is a vector with $\norm{x}_2 \leq Lr \frac{\eta}{k}$. The divergence between shifted Gaussians is well-known, giving us a divergence bound. 

Of course, since the movement due to Brownian motion can be arbitrarily large, there is no unconditional bound on $r$. Instead, we derive tail bounds for $r$, giving a divergence bound (depending on $\delta$) between the two processes conditioned on a probability $1 - \delta$ event for every $\delta$. We then show a simple lemma which says that conditional upper bounds on the larger moments of a random variable give an unconditional upper bound on the expectation of that random variable. By the definition of R\'enyi divergence, $\exp((\alpha' - 1) D_{\alpha'}(P || Q))$ is a moment of $\exp((\alpha - 1) D_{\alpha}(P || Q))$ for $\alpha' > \alpha$, so we can apply this lemma to our conditional bound on $\alpha'$-R\'enyi divergence to get an unconditional bound on $\alpha$-R\'enyi divergence via Jensen's inequality. 

Finally, since our analysis only needs smoothness, the radius tail bound, and the fact that the process is a composition of gradient steps with Gaussian noise, our analysis easily extends to sampling from Lipschitz rather than strongly convex functions and analyzing the underdamped Langevin dynamics.

As an immediate application, we recall the work of~\citet{MinamiASN16}, who give a $(\privacyparam, \delta)$-differentially private mechanism that (approximately) samples from a Gibbs posterior with a strongly log-concave prior, for applications such as mean estimation and logistic regression. Their iteration complexity of $\tilde{O}(d^3/\delta^2)$ proved in~\citet[Prop. 13]{MinamiASN16} gets improved to $\tilde{O}(d/\privacyparam^4)$ using our main result. We note that the privacy parameters in $(\privacyparam, \delta)$-DP that one typically aims for are $\privacyparam$ being constant, and $\delta$ being negligible.  However, it is still an interesting open problem to improve the iteration complexity's dependence on $\privacyparam$.

We start with some preliminaries in~\cref{section:prelim}. We prove the main result in~\cref{section:bounded} ,~\cref{section:odconvergence}, and~\cref{section:bidirectional}, and prove the result for the underdamped case in~\cref{sec:underdamped}. We defer the proofs of some tail bounds to~\cref{section:tailbounds}. We discuss future research directions in~\cref{section:discussion}. 
\section{Preliminaries}
\label{section:prelim}
\subsection{Langevin Dynamics and Basic Assumptions}

For the majority of the paper we focus on the overdamped Langevin dynamics in $\mathbb{R}^d$, given by the following stochastic differential equation (SDE):
$$ \D{x_t} = -\grad f(x_t) \D{t} + \sqrt{2} \D{B_t},$$
Where $B_t$ is a standard $d$-dimensional Brownian motion. Under mild assumptions (such as strong convexity of $f$), it is known that the stationary distribution of the SDE is the distribution $p$ satisfying $p(x) \propto e^{-f(x)}$. Algorithmically, it is easier to use the following discretization with \textit{steps} of size $\eta$:
$$ \D{x_t} = -\grad f(x_{\lfloor \frac{t}{\eta}\rfloor \eta}) \D{t} + \sqrt{2} \D{B_t},$$
i.e., we only update the gradient used in the SDE at the beginning of each step. Restricted to the position at times that are multiples of $\eta$, equivalently:
$$x_{(i+1)\eta} = x_{i \eta} - \eta \grad f(x_{i \eta}) + \xi_i.$$
Where $\xi_i \sim N(0, 2\eta I_d)$ are independent samples. Throughout the paper, when we refer to the result of running a Langevin dynamics for \textit{continuous time $t$}, we mean the distribution $x_t$, \textit{not} the distribution $x_{t \eta}$. When the iteration complexity (i.e. number of steps) is of interest, we may refer to running a Langevin dynamics for continuous time $T\eta$ equivalently as the result of running it for $T$ steps (of size $\eta$).

A similarly defined second order process is the underdamped Langevin dynamics, given by the following SDE (parameterized by $\gamma, \mu > 0$):
$$\D v_t = -\gamma v_t \D t - \mu \grad f(x_t) \D t + \sqrt{2 \gamma \mu} \D B_t, \qquad \D x_t = v_t \D t.$$
Again, under mild assumptions it is known that the stationary distribution of this SDE is the distribution $p$ satisfying $p(x) \propto e^{-(f(x) + \norm{v}_2^2/2\mu)}$, so that the marginal on $x$ is as desired. Algorithmically, it is easier to use the following discretization:
\begin{equation}\label{eq:discreteud}
    \D v_t = -\gamma v_t \D_t - \mu \grad f(x_{\lfloor \frac{t}{\eta} \rfloor \eta}) \D t + \sqrt{2 \gamma \mu} \D B_t, \qquad \D x_t = v_t \D t.
\end{equation}
In the majority of the paper we consider sampling from distributions given by $m$-strongly convex, $L$-smooth functions $f$. To simplify the presentation, we also assume $f$ is twice-differentiable, so these conditions on $f$ can be expressed as: for all $x$, $mI \psdleq \grad^2 f(x) \psdleq LI$. We make two additional simplifying assumptions: The first is that the minimum point of $f$ is at $0$, as if $f$'s true minimum is $x^* \neq 0$, we can sample from $g(x) := f(x - x^*)$ and then shift our sample by $x^*$ to get a sample from $f$ instead ($x^*$ can be found using e.g. gradient descent). The second is that $m = 1$, as if $m \neq 1$, we can sample from $g(x) = f(\frac{1}{\sqrt{m}}x)$ and rescale our sample by $\sqrt{m}$ instead.

\subsection{R\'enyi Divergence}

We recall the definition of R\'enyi divergence:

\begin{definition}[R\'enyi Divergence]
For $0 < \alpha < \infty$, $\alpha \neq 1$ and distributions $\mu, \nu$, such that $\supp(\mu) = \supp(\nu)$ the $\alpha$-R\'enyi divergence between $\mu$ and $\nu$ is
$$ D_\alpha(\mu || \nu) = \frac{1}{\alpha - 1} \ln \int_{\supp(\nu)} \frac{\mu(x)^\alpha}{\nu(x)^{\alpha - 1}} \D{x} = \frac{1}{\alpha - 1} \ln \ex_{x \sim \mu}\left[ \frac{\mu(x)^{\alpha-1}}{\nu(x)^{\alpha-1}}\right] = \frac{1}{\alpha - 1} \ln \ex_{x \sim \nu}\left[ \frac{\mu(x)^\alpha}{\nu(x)^\alpha}\right].$$
The $\alpha$-R\'enyi divergence for $\alpha = 1$ (resp. $\infty$) is defined by taking the limit of $D_\alpha(\mu||\nu)$ as $\alpha$ approaches $1$ (resp. $\infty$) and equals the KL divergence (resp. max divergence). 

The definition of $\alpha$-R\'enyi divergence can be extended to negative $\alpha$ using the identity $D_{1-\alpha}(\mu||\nu) = \frac{1-\alpha}{\alpha} D_{\alpha}(\nu||\mu)$.
\end{definition}

Throughout the paper, we are often concerned with pairs of distributions whose supports are both $\mathbb{R}^d$, and so we will use the above definition without always stating this fact explicitly. R\'enyi divergence is a standard notion of divergence in information theory. The following properties of R\'enyi divergences are useful in our proofs:

\begin{fact}[Monotonicity {\cite[Theorem 3]{vanErvenH14}}]
For any distributions $P, Q$ and $\alpha_1 \leq \alpha_2$ we have $D_{\alpha_1}(P || Q)$ $\leq D_{\alpha_2}(P||Q)$. 
\end{fact}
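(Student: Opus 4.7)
The plan is to reduce the monotonicity of $D_\alpha$ to the convexity of the cumulant-generating function of the log-density ratio. Define
\[
  \psi(\alpha) \;=\; \ln \ex_{x \sim Q}\!\left[\left(\tfrac{P(x)}{Q(x)}\right)^{\alpha}\right] \;=\; \ln \ex_{x \sim Q}\!\left[e^{\alpha \ln (P(x)/Q(x))}\right],
\]
so that $D_\alpha(P\|Q) = \psi(\alpha)/(\alpha-1)$ for $\alpha \neq 1$. Since $\psi$ is the log moment generating function of the random variable $\ln(P/Q)$ under $Q$, a standard H\"older-type (or Cauchy--Schwarz-iterated) argument gives that $\psi$ is convex on the set where it is finite. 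Moreover $\psi(1) = \ln \ex_Q[P/Q] = \ln 1 = 0$.

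With these two facts in hand, monotonicity becomes a statement about secant slopes of a convex function through a fixed point. Concretely, I would prove the following elementary lemma: if $\psi$ is convex with $\psi(1) = 0$, then $\alpha \mapsto \psi(\alpha)/(\alpha - 1)$ is non-decreasing on $(1, \infty)$ and on $(-\infty, 1)$. For $1 < \alpha_1 < \alpha_2$, write $\alpha_1 = (1-t)\cdot 1 + t\cdot \alpha_2$ with $t = (\alpha_1 - 1)/(\alpha_2 - 1) \in (0,1)$; convexity gives $\psi(\alpha_1) \leq t\,\psi(\alpha_2)$, which rearranges exactly to $\psi(\alpha_1)/(\alpha_1 - 1) \leq \psi(\alpha_2)/(\alpha_2 - 1)$. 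The case $\alpha_1 < \alpha_2 < 1$ is symmetric: write $\alpha_2 = (1-s)\cdot 1 + s\cdot \alpha_1$ with $s \in (0,1)$ and use convexity in the same way, being careful that $\alpha_i - 1 < 0$ flips the inequality the correct number of times. For $\alpha_1 < 1 < \alpha_2$ one can chain through any intermediate point by taking the appropriate one-sided limit at $1$.

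The remaining boundary cases are routine limits. For $\alpha_1 = 1$ or $\alpha_2 = 1$, use the definition of $D_1$ as $\lim_{\alpha \to 1} D_\alpha(P\|Q)$ (KL divergence) and the monotone limit of the secant slopes. For $\alpha_2 = \infty$, use $D_\infty(P\|Q) = \lim_{\alpha \to \infty} D_\alpha(P\|Q)$ (max divergence) and again monotone convergence of the secant slopes of a convex function. The extension to negative $\alpha$ via $D_{1-\alpha}(\mu\|\nu) = \tfrac{1-\alpha}{\alpha} D_\alpha(\nu\|\mu)$ given in the definition then propagates monotonicity across the sign change in a bookkeeping step.

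I expect the main technical nuisance, rather than a true obstacle, to be the sign-handling when $\alpha < 1$: the denominator $\alpha - 1$ is negative, so convexity inequalities reverse and it is easy to make off-by-one errors. The convexity of $\psi$ itself is not really an obstacle; it is either quoted as the standard fact that the log-MGF is convex, or proved in one line by H\"older's inequality applied to $\ex_Q[(P/Q)^{\alpha}]$ with the split $\alpha = (1-t)\alpha' + t\alpha''$. Once that and $\psi(1) = 0$ are recorded, the whole statement follows from the single convex-secant lemma above.
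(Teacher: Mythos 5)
Your proof is correct, and it is the standard argument for this fact; the paper itself does not prove it but simply cites~\cite[Theorem~3]{vanErvenH14}, so there is no in-paper proof to compare against. Your reduction to convexity of the cumulant generating function $\psi(\alpha)=\ln\ex_Q[(P/Q)^\alpha]$ with $\psi(1)=0$ (and also $\psi(0)=0$) is exactly the textbook route, and the secant-slope reformulation $D_\alpha(P\|Q)=\bigl(\psi(\alpha)-\psi(1)\bigr)/(\alpha-1)$ makes the monotonicity a one-line consequence of the three-chords lemma for convex functions. One small simplification you may have missed: the case $\alpha_1<1<\alpha_2$ does not require any one-sided limits or chaining through $D_1$. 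Applying the three-chords inequality to the triple $\alpha_1<1<\alpha_2$ directly gives $\mathrm{slope}(\alpha_1,1)\le\mathrm{slope}(1,\alpha_2)$, i.e.\ $D_{\alpha_1}\le D_{\alpha_2}$, with no limiting argument. The limits are only genuinely needed for the boundary orders $\alpha\in\{1,\infty\}$, and there the monotone convergence of secant slopes of a convex function handles it as you describe. One more point worth making explicit for completeness: since $\psi$ is convex and finite at $0$ and $1$, the set $\{\alpha:\psi(\alpha)<\infty\}$ is an interval containing $[0,1]$, so if $\psi(\alpha_1)=\infty$ for some $\alpha_1>1$ then $\psi(\alpha_2)=\infty$ for all $\alpha_2\ge\alpha_1$ and the inequality holds trivially; this disposes of the potential issue of undefined secant slopes on the side where the moment generating function blows up.
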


\begin{fact}[Post-Processing {\cite[Theorem 9]{vanErvenH14}}]\label{fact:postprocessing}
For any sample spaces $\mathcal{X}, \mathcal{Y}$, distributions $X_1, X_2$ over $\mathcal{X}$, and any function $f:\mathcal{X} \rightarrow \mathcal{Y}$ we have $D_\alpha(f(X_1) ||f(X_2)) \leq D_\alpha(X_1 || X_2)$.
\end{fact}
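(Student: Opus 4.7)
The plan is to prove the post-processing inequality directly from the definition of R\'enyi divergence using Jensen's inequality on the conditional distribution over fibers of $f$. I will first treat deterministic $f$; a randomized $f$ can be written as a deterministic function of $(x, r)$ for independent randomness $r$, and since tensoring with a common independent distribution leaves R\'enyi divergence unchanged, this reduction is harmless.

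For the core argument, fix $\alpha \in (1, \infty)$ and let $P_1, P_2$ denote the laws of $X_1, X_2$ with respective densities $p_1, p_2$, and let $Q_1, Q_2$ denote the pushforward laws of $f(X_1), f(X_2)$ with densities $q_1, q_2$. The first step is to express the pushforward likelihood ratio as a conditional expectation of the original ratio: using the disintegration of $P_2$ along $f$, one has
\[
  \frac{q_1(y)}{q_2(y)} \;=\; \mathbb{E}_{X \sim P_2}\!\left[\frac{p_1(X)}{p_2(X)} \,\Big|\, f(X) = y\right].
\]
The second step is to raise both sides to the $\alpha$-th power and apply Jensen's inequality to the convex function $t \mapsto t^\alpha$ (with respect to the conditional law of $X_2$ given $f(X_2) = y$), which gives
\[
  \left(\frac{q_1(y)}{q_2(y)}\right)^{\!\alpha} \;\leq\; \mathbb{E}_{X \sim P_2}\!\left[\left(\frac{p_1(X)}{p_2(X)}\right)^{\!\alpha} \,\Big|\, f(X) = y\right].
\]
The third step is to multiply by $q_2(y)$, integrate over $y$, and use the tower property to collapse the outer integral with the conditional expectation. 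This yields
\[
  \int q_2(y)\!\left(\frac{q_1(y)}{q_2(y)}\right)^{\!\alpha} \D{y} \;\leq\; \int p_2(x)\!\left(\frac{p_1(x)}{p_2(x)}\right)^{\!\alpha} \D{x}.
\]
Taking $\tfrac{1}{\alpha - 1} \ln$ of both sides (which preserves the inequality since $\alpha > 1$) gives exactly $D_\alpha(f(X_1) \,\|\, f(X_2)) \leq D_\alpha(X_1 \,\|\, X_2)$.

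The remaining cases are handled by minor variants of the same argument. For $\alpha \in (0,1)$, the function $t \mapsto t^\alpha$ is concave so Jensen reverses, but $\tfrac{1}{\alpha - 1} < 0$ reverses the final inequality again, yielding the same conclusion. The endpoints $\alpha = 1$ (KL) and $\alpha = \infty$ (max divergence) follow by taking limits, using monotonicity of $\alpha \mapsto D_\alpha$ to justify the passage to the limit.

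The main obstacle is purely measure-theoretic: justifying the disintegration $\mathbb{E}[p_1(X)/p_2(X) \mid f(X) = y]$ for arbitrary measurable $f$ between general sample spaces $\mathcal{X}, \mathcal{Y}$, which requires the existence of regular conditional probabilities. On standard (e.g.\ Polish) sample spaces this is routine, and since the paper only uses this fact with Euclidean spaces and measurable $f$, no difficulty arises in our application; one can alternatively phrase the proof via the data-processing inequality for $f$-divergences with $\phi(t) = t^\alpha$, which avoids conditional densities entirely by working directly with the joint measure of $(X, f(X))$.
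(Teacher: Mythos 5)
Your proof is correct: the identity expressing the pushforward likelihood ratio as a conditional expectation, followed by Jensen's inequality for $t \mapsto t^\alpha$ and the tower property, is the standard data-processing argument for R\'enyi divergence, and your handling of the sign flip for $\alpha \in (0,1)$ and the limiting cases $\alpha \in \{1, \infty\}$ is sound. The paper gives no proof of its own for this fact --- it simply cites \citet[Theorem 9]{vanErvenH14} --- and your argument is essentially the one found there, so there is nothing further to compare.
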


The above is also known as the data processing inequality in information theory.

\begin{fact}[Gaussian Divergence {\cite[Example 3]{vanErvenH14}}]\label{fact:gaussiandivergence}
$$D_\alpha(N(0, \sigma^2 I_d) || N(x, \sigma^2 I_d)) \leq \frac{\alpha \norm{x}_2^2}{2\sigma^2}.$$
\end{fact}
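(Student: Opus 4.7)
The plan is to prove this by direct computation, reducing first to the one-dimensional case and then carrying out a single Gaussian integral. In fact, equality holds; the statement is phrased as an inequality but the computation yields the bound exactly.

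First, I would exploit the symmetry of the spherical covariance. The Rényi divergence is invariant under applying the same orthogonal transformation to both distributions (by the post-processing fact, applied in both directions for the orthogonal map and its inverse). So I can rotate so that $x = \|x\|_2 e_1$. Since both distributions are then products of independent one-dimensional Gaussians, and coordinates $2, \ldots, d$ have identical marginals in the two distributions, those coordinates contribute $1$ to the integrand $\int p^\alpha / q^{\alpha-1}$ and can be factored out. This reduces the problem to showing
\[
D_\alpha\bigl(N(0,\sigma^2) \,\|\, N(m,\sigma^2)\bigr) = \frac{\alpha m^2}{2\sigma^2}, \qquad m := \|x\|_2.
\]

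For the one-dimensional computation, I would write out the integrand explicitly. With $p(y) \propto \exp(-y^2/2\sigma^2)$ and $q(y) \propto \exp(-(y-m)^2/2\sigma^2)$, the exponent of $p(y)^\alpha / q(y)^{\alpha-1}$ becomes
\[
-\frac{\alpha y^2 - (\alpha-1)(y-m)^2}{2\sigma^2} = -\frac{y^2 + 2(\alpha-1)m y - (\alpha-1)m^2}{2\sigma^2}.
\]
Completing the square in $y$ converts this to $-\frac{(y + (\alpha-1)m)^2}{2\sigma^2} + \frac{\alpha(\alpha-1)m^2}{2\sigma^2}$. The normalization constants of $p^\alpha$ and $q^{\alpha-1}$ combine with the Gaussian integral in $y$ to give $1$, leaving $\int p^\alpha/q^{\alpha-1} = \exp(\alpha(\alpha-1)m^2/(2\sigma^2))$. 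Taking $\tfrac{1}{\alpha-1}\ln(\cdot)$ yields $\alpha m^2/(2\sigma^2)$, matching the claimed bound.

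There is no real obstacle here: the only things to be careful about are handling the limiting cases $\alpha = 1$ and $\alpha = \infty$ (by taking limits, as allowed by the definition), and verifying the reduction to one dimension is legitimate, which follows cleanly from rotation invariance and the product structure of the integrand. The result is a textbook consequence of the Gaussian moment generating function, which is why it is cited as a fact rather than proved in the paper.
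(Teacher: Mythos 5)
Your proof is correct, and it matches the standard derivation one finds in the cited reference (the paper states this as a fact with a citation rather than proving it). The algebra — rotation reduction to one dimension, exponent expansion to $y^2 + 2(\alpha-1)my - (\alpha-1)m^2$, completing the square to peel off $\alpha(\alpha-1)m^2/(2\sigma^2)$, and the observation that the remaining Gaussian integral cancels the normalization — is all right, and your remark that the bound is in fact an equality (for finite $\alpha$) is accurate; the $\leq$ in the paper is just safe phrasing that also covers the limiting case $\alpha \to \infty$.
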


\begin{fact}[Adaptive Composition Theorem {\cite[Proposition 1]{Mironov2017}}]\label{fact:composition}
Let $\mathcal{X}_0,$ $\mathcal{X}_1, \ldots, \mathcal{X}_k$ be arbitrary sample spaces. For each $i \in [k]$, let $\psi_i, \psi_i':\Delta(\mathcal{X}_{i-1}) \rightarrow \Delta(\mathcal{X}_i)$ be maps from distributions over $\mathcal{X}_{i-1}$ to distributions over $\mathcal{X}_i$ such that for any point mass distribution (a distribution whose support contains a single value) $X_{i-1}$ over $\mathcal{X}_{i-1}$, $D_\alpha(\psi_i(X_{i-1}) || \psi_i'(X_{i-1})) \leq \eps_i$. Then, for $\Psi, \Psi':\Delta(\mathcal{X}_0) \rightarrow \Delta(\mathcal{X}_k)$ defined as $\Psi(\cdot) = \psi_k(\psi_{k-1}( \ldots \psi_1(\cdot) \ldots )$ and $\Psi'(\cdot) = \psi'_k(\psi'_{k-1}( \ldots \psi'_1(\cdot) \ldots )$ we have $D_\alpha(\Psi(X_0) || \Psi'(X_0)) \leq \sum_{i=1}^k \eps_i$ for any $X_0 \in \Delta(\mathcal{X}_0)$.
\end{fact}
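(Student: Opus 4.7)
The plan is to prove the statement by induction on $k$, using a chain-rule-type bound for R\'enyi divergence as the key lemma. Specifically, I would establish: if $\mu,\nu$ are joint distributions on $(U,V)$ and for every $u$ in the support we have $D_\alpha(\mu_{V|U=u} || \nu_{V|U=u}) \leq \eps$, then $D_\alpha(\mu || \nu) \leq D_\alpha(\mu_U || \nu_U) + \eps$. This is exactly what is needed to absorb one more step of composition: conditioning on a fixed prefix reduces each $\psi_i$ versus $\psi_i'$ comparison to the point-mass-input case that the hypothesis covers.

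To prove the chain rule, I would expand $\exp((\alpha-1) D_\alpha(\mu || \nu))$ using the factorization $\mu(u,v) = \mu_U(u)\,\mu_{V|U}(v \mid u)$ (and analogously for $\nu$):
\[
\exp((\alpha-1) D_\alpha(\mu || \nu)) \;=\; \int \mu_U(u)^{\alpha} \nu_U(u)^{1-\alpha} \left( \int \mu_{V|U}(v \mid u)^{\alpha} \nu_{V|U}(v \mid u)^{1-\alpha} \D{v} \right) \D{u}.
\]
The inner integral equals $\exp((\alpha-1) D_\alpha(\mu_{V|U=u} || \nu_{V|U=u}))$, which is bounded by $\exp((\alpha-1)\eps)$ uniformly in $u$ by hypothesis. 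Pulling this constant outside the outer integral and taking logarithms (then dividing by $\alpha-1$, with the direction of the inequality handled separately for $\alpha>1$ vs.\ $\alpha<1$) yields the lemma.

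With the chain rule in hand, I would apply it inductively along the trajectory. Let $\mu_i$ (resp.\ $\nu_i$) be the distribution on the partial trajectory $(X_0, X_1, \ldots, X_i)$ produced by $\psi_1, \ldots, \psi_i$ (resp.\ $\psi_1', \ldots, \psi_i'$) started from the same $X_0$. Then $\mu_0 = \nu_0 = X_0$, so $D_\alpha(\mu_0 || \nu_0) = 0$. Conditioned on a realized prefix $(x_0, \ldots, x_{i-1})$, the next coordinate is distributed as $\psi_i(x_{i-1})$ under $\mu_i$ and as $\psi_i'(x_{i-1})$ under $\nu_i$, whose divergence is at most $\eps_i$ by the point-mass hypothesis. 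The chain rule therefore gives $D_\alpha(\mu_i || \nu_i) \leq D_\alpha(\mu_{i-1} || \nu_{i-1}) + \eps_i$, and induction yields $D_\alpha(\mu_k || \nu_k) \leq \sum_{i=1}^k \eps_i$.

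Finally, since $\Psi(X_0)$ and $\Psi'(X_0)$ are the $\mathcal{X}_k$-marginals of $\mu_k$ and $\nu_k$, the post-processing inequality (Fact~\ref{fact:postprocessing}) gives $D_\alpha(\Psi(X_0) || \Psi'(X_0)) \leq D_\alpha(\mu_k || \nu_k) \leq \sum_{i=1}^k \eps_i$. The main obstacle is the chain rule lemma itself; once the factorization is set up, the rest is clean induction plus one application of data processing. The edge cases $\alpha=1$ (KL) and $\alpha=\infty$ (max divergence) can be handled by taking limits or by invoking the classical chain rules for those divergences.
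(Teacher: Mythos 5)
The paper does not prove this statement; it is imported verbatim as Proposition~1 of \citet{Mironov2017}, so there is no in-paper proof to compare against. Your argument is correct and is essentially the standard proof of that proposition: the chain-rule lemma (expand $\exp((\alpha-1)D_\alpha(\mu\,||\,\nu))$ via the factorization into the marginal on the prefix and the conditional on the next coordinate, bound the inner integral uniformly by the point-mass hypothesis, and pull the constant out, minding the sign of $\alpha-1$), followed by induction along the trajectory and one application of post-processing to pass from the joint trajectory distributions to their final-coordinate marginals. One point worth making explicit: as literally stated, the $\psi_i$ are arbitrary maps on distributions constrained only on point masses, and your step ``conditioned on a realized prefix, the next coordinate is distributed as $\psi_i(x_{i-1})$'' silently assumes that $\psi_i$ is a Markov kernel, i.e., that $\psi_i(X) = \int \psi_i(\delta_x)\,\D{X}(x)$, so that its action on a general distribution is the mixture of its actions on point masses. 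Without that affineness the claim is false, but it is the intended reading (Mironov's proposition is about randomized mechanisms) and it is exactly how the paper instantiates the fact in Lemma~\ref{lemma:smallstepconvergence}, where $\psi,\psi'$ are defined pointwise on trajectories; your proof is fine once that assumption is stated.
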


\begin{fact}[Weak Triangle Inequality {\cite[Proposition 11]{Mironov2017}}]\label{fact:triangleineq}
For any $\alpha > 1$, $p, q > 1$ satisfying $1/p + 1/q = 1$ and distributions $P, Q, R$ with the same support:

$$D_\alpha(P||R) \leq \frac{\alpha - 1/p}{\alpha - 1}D_{p\alpha}(P||Q) + D_{q(\alpha - 1/p)}(Q||R).$$
\end{fact}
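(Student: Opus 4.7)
The plan is to deduce the inequality from a single application of Hölder's inequality to the integral defining $e^{(\alpha - 1) D_\alpha(P || R)}$, with $Q$ inserted as an intermediate factor. The task then reduces to splitting the integrand so that the two resulting Hölder factors match the defining integrals of the Rényi divergences $D_{p\alpha}(P || Q)$ and $D_{q(\alpha - 1/p)}(Q || R)$.

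Concretely, first I would write
$$\int \frac{P(x)^\alpha}{R(x)^{\alpha - 1}} \, dx \;=\; \int \frac{P(x)^\alpha}{Q(x)^\beta} \cdot \frac{Q(x)^\beta}{R(x)^{\alpha - 1}} \, dx$$
for a free parameter $\beta$ to be chosen, and apply Hölder's inequality with conjugate exponents $p, q$ to obtain
$$\int \frac{P^\alpha}{R^{\alpha - 1}} \, dx \;\leq\; \left(\int \frac{P^{p\alpha}}{Q^{p\beta}} \, dx\right)^{1/p} \left(\int \frac{Q^{q\beta}}{R^{q(\alpha - 1)}} \, dx\right)^{1/q}.$$
Second, I would set $\beta := \alpha - 1/p$. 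Then the first factor is exactly $\int P^{p\alpha}/Q^{p\alpha - 1} \, dx = e^{(p\alpha - 1) D_{p\alpha}(P || Q)}$, and a short check using $1/p + 1/q = 1$ shows $q(\alpha - 1) = q(\alpha - 1/p) - 1$, so the second factor equals $e^{(q(\alpha - 1/p) - 1) D_{q(\alpha - 1/p)}(Q || R)}$.

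Finally, I would take logarithms on both sides, giving
$$(\alpha - 1) D_\alpha(P || R) \;\leq\; \frac{p\alpha - 1}{p} D_{p\alpha}(P || Q) + \frac{q(\alpha - 1/p) - 1}{q} D_{q(\alpha - 1/p)}(Q || R),$$
simplify the coefficients via $\frac{p\alpha - 1}{p} = \alpha - 1/p$ and $\frac{q(\alpha - 1/p) - 1}{q} = \alpha - 1/p - 1/q = \alpha - 1$, and divide through by $\alpha - 1 > 0$ to recover the stated inequality. The only step requiring any real care is the choice of $\beta$: $\beta = \alpha - 1/p$ is precisely the value that forces \emph{both} Hölder factors to coincide with honest Rényi moments of the advertised orders $p\alpha$ and $q(\alpha - 1/p)$. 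This algebraic matching is the whole content of the proof; the remaining manipulations are routine bookkeeping with the conjugacy relation $1/p + 1/q = 1$.
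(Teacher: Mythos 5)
Your proof is correct and is essentially the standard argument for this result: the paper states it as a Fact without proof, citing Mironov (2017, Proposition 11), whose proof is exactly this Hölder-inequality decomposition with the intermediate distribution $Q$ inserted. The choice $\beta = \alpha - 1/p$, the identity $q(\alpha-1) = q(\alpha - 1/p) - 1$, and the final coefficient simplifications all check out.
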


Unlike other notions of distance between distributions, R\'enyi divergence bounds translate to differential privacy guarantees:
\begin{definition}[Approximate Differential Privacy]\label{definition:apxdp}
The $\delta$-approximate max divergence between distributions $\mu, \nu$ is defined as:
$$D_\infty^\delta(\mu || \nu) = \max_{S \subseteq \supp(\mu): \Pr_{x \sim \mu}[x \in S] \geq \delta} \left[\ln \frac{\Pr_{x \sim \mu}[x \in S] - \delta}{\Pr_{x \sim \nu}[x \in S]}\right].$$
\end{definition}
\begin{fact}[{\cite[Proposition 3]{Mironov2017}}]\label{fact:renyitoapx}
For $\alpha > 1$ if $\mu, \nu$ satisfy $D_\alpha(\mu || \nu) \leq \privacyparam,$ then for $0 < \delta < 1$:
$$D_\infty^\delta (\mu || \nu) \leq \privacyparam + \frac{\ln(1/\delta)}{\alpha - 1}.$$
\end{fact}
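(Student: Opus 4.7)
The plan is to reduce the claim to a relation between the $\mu$- and $\nu$-masses of an arbitrary event $S$, obtain such a relation by H\"older's inequality under $\nu$, and then massage it using the hypothesis $\Pr_\mu[S]\geq \delta$. Concretely, by unfolding Definition~\ref{definition:apxdp}, it suffices to show that for every measurable $S \subseteq \supp(\mu)$ with $p := \Pr_{x \sim \mu}[x \in S] \geq \delta$ and $q := \Pr_{x \sim \nu}[x \in S]$, we have $p - \delta \leq e^{\privacyparam + \ln(1/\delta)/(\alpha-1)} \, q$.

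The core step is to write $p = \ex_{x \sim \nu}[\mathbbm{1}_S(x) \cdot \mu(x)/\nu(x)]$ and apply H\"older's inequality under $\nu$ with conjugate exponents $\alpha$ and $\alpha/(\alpha-1)$. The first factor becomes $(\ex_{x\sim \nu}[(\mu(x)/\nu(x))^{\alpha}])^{1/\alpha} = e^{(\alpha-1) D_\alpha(\mu||\nu)/\alpha}$ by the third form of R\'enyi divergence in the Definition, and the second factor is $q^{(\alpha-1)/\alpha}$ since the indicator takes values in $\{0,1\}$. After raising to the $\alpha/(\alpha-1)$ power and invoking $D_\alpha(\mu||\nu) \leq \privacyparam$, this collapses to the clean inequality $p^{\alpha/(\alpha-1)} \leq e^{\privacyparam} q$, which is the only place the R\'enyi hypothesis enters the argument.

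The remaining work is the elementary inequality $p^{\alpha/(\alpha-1)} \geq \delta^{1/(\alpha-1)}(p-\delta)$ for $p \geq \delta \geq 0$, which after being raised to the $(\alpha-1)$-th power reduces to $p \cdot p^{\alpha-1} \geq \delta \cdot (p-\delta)^{\alpha-1}$ and follows factor by factor from $p \geq \delta$ and $p \geq p - \delta$. Chaining this with the previous display yields $q \geq e^{-\privacyparam} \delta^{1/(\alpha-1)}(p-\delta) = e^{-(\privacyparam + \ln(1/\delta)/(\alpha-1))}(p - \delta)$, as required. The only place genuine care is needed is the H\"older step, which forces $\alpha > 1$ so that the conjugate exponent $\alpha/(\alpha-1)$ is finite; this is precisely the hypothesis in the statement. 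The role of the ``$-\delta$'' slack in Definition~\ref{definition:apxdp} is exactly to upgrade the sublinear bound $p^{\alpha/(\alpha-1)} \leq e^{\privacyparam} q$ into the privacy-style linear-in-$p$ bound, at the cost of the $\delta^{1/(\alpha-1)} = e^{\ln(1/\delta)/(\alpha-1)}$ factor that appears in the statement.
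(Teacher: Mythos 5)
Your proof is correct, and it takes a genuinely different route from the one the paper relies on. The paper cites this as Proposition 3 of Mironov (2017); Mironov's argument is a tail-bound-plus-case-split: he defines the ``bad'' event $\hat S=\{x:\mu(x)/\nu(x)>e^{\eps'}\}$ with $\eps'=\privacyparam+\ln(1/\delta)/(\alpha-1)$, applies Markov's inequality to $(\mu/\nu)^{\alpha-1}$ under $\mu$ to show $\mu(\hat S)\le\delta$, and then decomposes $\mu(S)=\mu(S\setminus\hat S)+\mu(S\cap\hat S)\le e^{\eps'}\nu(S)+\delta$. Your approach instead applies H\"older's inequality once, directly to $\mu(S)=\ex_{x\sim\nu}[\mathbbm{1}_S(x)\,\mu(x)/\nu(x)]$ with exponents $\alpha$ and $\alpha/(\alpha-1)$, yielding the polynomial relation $\mu(S)^{\alpha/(\alpha-1)}\le e^{\privacyparam}\nu(S)$, and then converts this to the required linear-in-$\mu(S)$ form via the elementary inequality $p^{\alpha/(\alpha-1)}\ge\delta^{1/(\alpha-1)}(p-\delta)$ for $p\ge\delta$, which you verify correctly by raising to the $(\alpha-1)$st power. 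Both arguments ultimately hinge on the same moment $\ex_\nu[(\mu/\nu)^\alpha]=e^{(\alpha-1)D_\alpha(\mu\|\nu)}$ and give the same constant, so neither is tighter; the trade-off is that Mironov's version makes the probabilistic interpretation (a $\delta$-probability tail event carries the large privacy loss) transparent, while your H\"older version is shorter, avoids the case split, and exhibits the intermediate bound $\mu(S)^{\alpha/(\alpha-1)}\le e^{\privacyparam}\nu(S)$, which is a clean statement of independent interest. One small point worth being explicit about: your chain implicitly assumes $\nu(S)>0$ when dividing; this is harmless because $D_\alpha(\mu\|\nu)<\infty$ forces $\mu\ll\nu$, so $\nu(S)=0$ would give $\mu(S)=0<\delta$ and such $S$ are excluded from the max in Definition~\ref{definition:apxdp}.
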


\section{Langevin Dynamics with Bounded Movements}~\label{section:bounded}
As a first step, we analyze the divergence between the discrete and continuous processes conditioned on the event $\cond_r$ that throughout each step of size $\eta$ they stay within a ball of radius $r$ around their location at the start of the step. We will actually analyze the divergence between two discrete processes with steps of size $\eta$ and $\eta/k$ respectively, and obtain a bound on their divergence independent of $k$. The former is exactly the discrete Langevin dynamics with step size $\eta$. The Taking the limit of the latter, as $k$ goes to infinity, the former is exactly the discrete Langevin dynamics with step size $\eta$ and the latter is the continuous Langevin dynamics. Thus, and so the same bound applies to the divergence between the discrete and continuous processes. We set up discretized overdamped Langevin dynamics with step sizes $\eta, \eta/k$ as random processes which record the position at each time that is a multiple of $\eta/k$. 

Let $x_t$ denote the position of the chain using step size $\eta$ at continuous time $t$, and $x_t'$ denote the position of the chain using step size $\eta/k$ at time $t$. If $\cond_r$ does not hold at time $t^*$ (more formally, if $\max_{t \in [0, t^*]} \norm{x_{t} - x_{\lfloor {t} / \eta \rfloor \eta}}_2 > r$), we will instead let $x_t = \bot$ for all $t \geq t^*$. We want to bound the divergence after $T$ steps of size $\eta$, i.e. the divergence between the distributions of $x_{T \eta}$ and $x_{T \eta}'$. Let $X_{0 : j}$ denote the distribution of $\{x_{i \eta /k}\}_{0 \leq i \leq j}$, and define $X_{0:j}'$ analogously. By post-processing  (Fact~\ref{fact:postprocessing}), it suffices to bound the divergence between $X_{0:Tk}$ and $X_{0:Tk}'$. Note that we can sample from $X_{0:Tk}$ (resp $X_{0:Tk}'$) by starting with a sample $\{x_0\}$ (resp $\{x_0'\}$) from the distribution $X_0$ from which we start the Langevin dynamics, and applying the following randomized update $Tk$ times: 

\begin{itemize}
    \item To draw a sample from $X_{0:Tk}$, given a sample $\{x_{i\eta/k}\}_{0 \leq i \leq j}$ from $X_{0:j}$:
    \begin{itemize}
        \item If $x_{j\eta/k} = \bot$ append $x_{(j+1)\eta/k} = \bot$ to $\{x_{i\eta/k}\}_{0 \leq i \leq j}$ to get a sample from $X_{0:j+1}$.
        \item Otherwise, append $x_{(j+1)\eta/k} = x_{j\eta/k} - \frac{\eta}{k} \grad f(x_{\lfloor j / k \rfloor \eta}) + \xi_j$, where $\xi_j \sim N(0,\frac{2\eta}{k}I_d)$ to get a sample from $X_{0:j+1}$. Then if $\norm{x_{(j+1)\eta/k} - x_{\lfloor (j+1)/k\rfloor \eta}}_2 > r$ (i.e. $\cond_r$ no longer holds) replace $x_{(j+1)\eta/k}$ with $\bot$.
    \end{itemize}  
    We will denote this update by $\psi$. More formally, $\psi$ is the map from distributions over to distributions such that $X_{0:j+1} = \psi(X_{0:j})$.
    \item To draw a sample from $X_{0:Tk}'$, we instead use the update $\psi'$ that is identical to $\psi$ except $\psi'$ uses the gradient at $x'_{j \eta / k}$ instead of $x'_{\lfloor j / k \rfloor \eta}$.
\end{itemize}

We now have $X_{0:Tk} = \psi^{\circ Tk}(X_0)$ and $X_{0:Tk}' = (\psi')^{\circ Tk}(X_0)$, allowing us to use Fact~\ref{fact:composition} to bound the divergence between the two distributions:

\begin{lemma}\label{lemma:smallstepconvergence}
For any $L$-smooth $f$, any initial distribution $X_0$ over $x_0, x_0'$, and the distributions over tuples $X_{0:Tk}, X'_{0:Tk}$ as defined above, we have:
$$D_\alpha(X_{0:Tk} || X_{0:Tk}'), D_\alpha(X_{0:Tk}' || X_{0:Tk}) \leq \frac{T\alpha L^2 r^2 \eta}{4}.$$
\end{lemma}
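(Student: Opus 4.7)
\medskip

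\noindent\textbf{Proof plan.} The natural strategy is to invoke the adaptive composition theorem (Fact~\ref{fact:composition}) along the $Tk$ sub-steps of size $\eta/k$, bounding the single-sub-step Rényi divergence between $\psi$ and $\psi'$ acting on a common point-mass input, and then summing.

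\medskip

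\noindent\textbf{Step 1: Reduce to a per-sub-step divergence bound.} Since $X_{0:Tk}=\psi^{\circ Tk}(X_0)$ and $X'_{0:Tk}=(\psi')^{\circ Tk}(X_0)$, Fact~\ref{fact:composition} reduces the claim to showing that, for every fixed trajectory $\{x_{i\eta/k}\}_{0\leq i\leq j}$ (a point mass in $\Delta(\mathcal X_j)$), we have
\[
D_\alpha\bigl(\psi(\{x_{i\eta/k}\}) \,\|\, \psi'(\{x_{i\eta/k}\})\bigr) \;\leq\; \frac{\alpha L^2 r^2 \eta}{4k},
\]
and symmetrically for the reversed direction. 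Summing this bound over the $Tk$ sub-steps yields exactly $\tfrac{T\alpha L^2 r^2 \eta}{4}$.

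\medskip

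\noindent\textbf{Step 2: Bound a single sub-step.} Fix a point-mass trajectory with last coordinate $y=x_{j\eta/k}$ and anchor $a=x_{\lfloor j/k\rfloor \eta}$. If $y=\bot$, both $\psi$ and $\psi'$ deterministically append $\bot$, so the divergence is zero. Otherwise, before the $\bot$-clipping step, $\psi$ appends a sample from $N\!\bigl(y-\tfrac{\eta}{k}\grad f(a),\,\tfrac{2\eta}{k}I_d\bigr)$ and $\psi'$ appends a sample from $N\!\bigl(y-\tfrac{\eta}{k}\grad f(y),\,\tfrac{2\eta}{k}I_d\bigr)$. The clipping replacing the new coordinate with $\bot$ when it falls outside the ball is a deterministic function of this coordinate that is \emph{the same} for $\psi$ and $\psi'$, so by the post-processing inequality (Fact~\ref{fact:postprocessing}) it suffices to bound the divergence of the two pre-clip Gaussians. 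These are two isotropic Gaussians with common covariance $\tfrac{2\eta}{k}I_d$ and mean shift $\tfrac{\eta}{k}\bigl(\grad f(y)-\grad f(a)\bigr)$. Because we are conditioning on $\cond_r$, the point-mass trajectories we consider satisfy $\|y-a\|_2\leq r$, and $L$-smoothness gives $\|\grad f(y)-\grad f(a)\|_2\leq L\|y-a\|_2\leq Lr$, so the shift has norm at most $\tfrac{\eta L r}{k}$. Applying Fact~\ref{fact:gaussiandivergence} with $\sigma^2=\tfrac{2\eta}{k}$,
\[
D_\alpha\bigl(\psi(\cdot)\,\|\,\psi'(\cdot)\bigr)\;\leq\;\frac{\alpha\,(\eta Lr/k)^2}{2\cdot(2\eta/k)}\;=\;\frac{\alpha L^2 r^2 \eta}{4k},
\]
exactly the per-step bound required.

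\medskip

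\noindent\textbf{Step 3: Symmetry and conclusion.} The Rényi divergence between two Gaussians with identical covariance and shifted means is symmetric in the two arguments, so the identical bound holds for $D_\alpha(\psi'(\cdot)\,\|\,\psi(\cdot))$. Composing across the $Tk$ sub-steps yields both $D_\alpha(X_{0:Tk}\,\|\,X'_{0:Tk})$ and $D_\alpha(X'_{0:Tk}\,\|\,X_{0:Tk})$ bounded by $\tfrac{T\alpha L^2 r^2 \eta}{4}$.

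\medskip

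\noindent\textbf{Expected obstacle.} Nothing in this argument is genuinely hard; the only subtlety is ensuring the $\cond_r$ clipping is handled cleanly. The essential observation is that because $\psi$ and $\psi'$ use \emph{the same} clipping rule (applied to the new coordinate via the same anchor, which is determined by the input trajectory), the clipping step is a common deterministic post-processing and can be discarded by Fact~\ref{fact:postprocessing} before invoking the Gaussian divergence formula. Once that is cleanly set up, the rest is a direct application of Facts~\ref{fact:composition} and~\ref{fact:gaussiandivergence}, with the independence of $k$ in the final bound arising from the exact cancellation $Tk\cdot\tfrac{1}{k}=T$.
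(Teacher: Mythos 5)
Your proof is essentially the same as the paper's: reduce via Fact~\ref{fact:composition} to a per-sub-step bound, strip the common $\bot$-clipping by post-processing (Fact~\ref{fact:postprocessing}), bound the mean shift of the two Gaussians by $\frac{\eta}{k}Lr$ via smoothness and $\cond_r$, apply Fact~\ref{fact:gaussiandivergence}, and observe the $k$'s cancel. The one place you are slightly loose is the phrase ``the point-mass trajectories we consider satisfy $\|y-a\|_2\le r$'': Fact~\ref{fact:composition} demands the per-step bound hold for \emph{every} point mass in the sample space $\mathcal{X}_j$, so one must actually take $\mathcal{X}_j$ to be the set of ``good'' tuples (each non-$\bot$ entry satisfies $\cond_r$, and once $\bot$ appears it persists) and check that $\psi,\psi'$ map distributions over good tuples to distributions over good tuples --- which is exactly what the $\bot$-clipping guarantees; the paper spells this out explicitly where you only gesture at it.
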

\begin{proof}
We prove the bound for $D_\alpha(X_{0:Tk} || X_{0:Tk}')$, the bound for $D_\alpha(X_{0:Tk}' || X_{0:Tk})$ follows similarly. Let a tuple $\{x_{i\eta/k}\}_{0 \leq i \leq j}$ be \textit{good} if for $0 \leq i \leq j$ either (i) $\norm{x_{i\eta/k} - x_{\lfloor i/k\rfloor \eta}}_2 \leq r$ (i.e., $\cond_r$) or (ii) $\{x_{\ell\eta/k}\}_{i \leq \ell \leq j}$ are all $\bot$. We claim that for each $j$, for any point mass distribution $X_{0:j}$ over good $(j+1)$-tuples:

\begin{equation}\label{eq:singlemapbound}
D_\alpha(\psi(X_{0:j}), \psi'(X_{0:j})) \leq \frac{\alpha (\frac{Lr\eta}{k})^2}{2 \cdot \frac{2\eta}{k}}.
\end{equation}

By Fact~\ref{fact:postprocessing}, we can instead bound the divergence between $\tilde{\psi}(X_{0:j}), \tilde{\psi}'(X_{0:j})$ which are defined equivalently to $\psi, \psi'$ except without the deterministic step of replacing the last entry with $\bot$ if $\cond_r$ is violated. If $X_{0:j}$ is a point mass on a good tuple containing $\bot$, then $D_\alpha(\tilde{\psi}(X_{0:j})|| \tilde{\psi}'(X_{0:j})) = 0$. For $X_{0:j}$ that is a point mass on a good tuple not containing $\bot$, $D_\alpha(\tilde{\psi}(X_{0:j})|| \tilde{\psi}'(X_{0:j}))$ is just the divergence between the final values of $\tilde{\psi}(X_{0:j}), \tilde{\psi}'(X_{0:j})$. The distance between the final values in $\tilde{\psi}(X_{0:j}), \tilde{\psi}'(X_{0:j})$ prior to the addition of Gaussian noise in $\tilde{\psi}, \tilde{\psi}'$ is the value of $\frac{\eta}{k}\norm{\grad f(x_{j \eta / k}) - \grad f(x_{\lfloor j / k \rfloor \eta})}_2$ for the single tuple in the support of $X_{0:j}$, which is at most $\frac{Lr\eta}{k}$ by smoothness and because $\cond_r$ holds for all good tuples not containing $\bot$. \eqref{eq:singlemapbound} now follows by Fact~\ref{fact:gaussiandivergence}.

Then, $X_{0:Tk}, X'_{0:Tk}$ are arrived at by a composition of $Tk$ applications of $\psi, \psi'$ to the same initial distribution $X_0$. Note that $X_0$ and the distributions arrived at by applying $\psi$ or $\psi'$ any number of times to $X_0$ have support only including good tuples. Then combining Fact~\ref{fact:composition} (with the sample spaces being good tuples) and \eqref{eq:singlemapbound} we have:

$$D_\alpha(X_{0:Tk} || X'_{0:Tk}) \leq Tk \cdot \frac{\alpha \left(\frac{Lr\eta}{k}\right)^2}{2 \cdot \frac{2\eta}{k}} = \frac{T\alpha L^2 r^2 \eta}{4}.$$
\end{proof}
By taking the limit as $k$ goes to infinity and applying Fact~\ref{fact:postprocessing} we get:
\begin{corollary}\label{cor:smallstepconvergence}
For any $L$-smooth $f$ and $\eta > 0$, and any initial distribution $X_0$ let $X_{t}$ be the distribution over positions $x_t$ arrived at by running the discretized overdamped Langevin dynamics with step size $\eta$ on $f$ from $X_0$ for continuous time $t$, except that $X_{t} = \bot$ if $\cond_r$ does not hold at time $t$ for this chain. Let $X_{t}'$ be the same but for the continuous overdamped Langevin dynamics. Then for any integer $T \geq 0$:

$$D_\alpha(X_{T\eta} || X_{T\eta}'), D_\alpha(X_{T\eta}' || X_{T\eta}) \leq \frac{T\alpha L^2 r^2 \eta}{4}.$$
\end{corollary}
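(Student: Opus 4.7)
The plan is to invoke Lemma \ref{lemma:smallstepconvergence} once for each $k$ and then pass to the limit $k \to \infty$. The crucial feature of the lemma is that its bound $\frac{T\alpha L^2 r^2 \eta}{4}$ does not depend on $k$, so if we can push the bound through a limit in $k$, we recover the continuous-time statement. By Fact \ref{fact:postprocessing}, projecting the joint distributions $X_{0:Tk}$ and $X_{0:Tk}'$ onto their final coordinate (which corresponds to continuous time $T\eta$) can only decrease the divergence, so for every $k$ we obtain the same uniform bound on the divergences between the marginals at time $T\eta$.

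Next I would argue that, as $k \to \infty$, the marginal at time $T\eta$ of the step-size-$\eta/k$ Euler--Maruyama chain converges in distribution to the true SDE marginal $X_{T\eta}'$, and the marginal at time $T\eta$ of the step-size-$\eta$ chain used inside the lemma (which relies on a sampled-at-multiples-of-$\eta/k$ version of $\cond_r$) converges to $X_{T\eta}$ as defined in the corollary (which uses the continuous-time version of $\cond_r$). The former is the standard convergence of Euler--Maruyama under an $L$-Lipschitz drift; the latter follows because the functional $\sup_{t \in [j\eta, (j+1)\eta]} \norm{x_t - x_{j\eta}}_2$ is continuous in the uniform-on-compacts topology on paths, so the event determining the $\bot$-substitution passes through weak convergence of path measures.

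Finally, I would invoke joint lower semicontinuity of Rényi divergence in the weak topology of probability measures (van Erven--Harremo\"es) to conclude
$$D_\alpha(X_{T\eta} || X_{T\eta}') \leq \liminf_{k \to \infty} D_\alpha\bigl(X_{T\eta}^{(k)} || Y_{T\eta}^{(k)}\bigr) \leq \frac{T\alpha L^2 r^2 \eta}{4},$$
where $X_{T\eta}^{(k)}, Y_{T\eta}^{(k)}$ denote the marginals arising inside Lemma \ref{lemma:smallstepconvergence}, and symmetrically for the reversed divergence $D_\alpha(X_{T\eta}' || X_{T\eta})$.

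The main obstacle is purely technical: justifying that the step-size-$\eta/k$ discretization converges (as a path-valued random variable) to the continuous Langevin SDE strongly enough that the marginal distributions at time $T\eta$ converge weakly, and that the discrete $\cond_r$ check converges to its continuous-time counterpart. Once this weak-convergence scaffolding is in place, the $k$-independent bound from Lemma \ref{lemma:smallstepconvergence} combined with lower semicontinuity of $D_\alpha$ closes the argument immediately, without any additional computation.
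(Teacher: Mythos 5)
Your proposal is correct and follows the same route the paper takes: apply Lemma~\ref{lemma:smallstepconvergence} for each $k$, project to the final coordinate via Fact~\ref{fact:postprocessing}, and pass to the limit $k \to \infty$. The paper simply states this limit step without elaboration; your invocation of weak convergence of the Euler--Maruyama scheme and lower semicontinuity of R\'enyi divergence correctly supplies the technical justification the paper leaves implicit.
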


Note that if we are running the process for continuous time $\tau$, then $T = \tau/\eta$. $r$ will end up being roughly proportional to $\sqrt{\eta}$, so the above bound is then roughly proportional to $\eta$.
\section{Removing the Bounded Movement Restriction}\label{section:odconvergence}

In this section, we will prove the following ``one-sided'' version of Theorem~\ref{thm:mainthm}:

\begin{theorem}\label{thm:mainthm}
Fix any $\alpha \geq 1$. Let $R$ be a distribution satisfying $R(x) \propto e^{-f(x)}$ for 1-strongly convex and $L$-smooth $f$ with global minimum at 0. Let $P$ be the distribution arrived at by running discretized overdamped Langevin dynamics using $f$ with step size $\eta = \tilde{O}(\frac{1}{\tau L^4 \ln^2 \alpha} \cdot \frac{\eps^2}{d})$ for continuous time $\tau = \alpha \ln \frac{d \ln L}{\eps}$ (i.e. for $\tilde{O}(\frac{\alpha^2 L^4 d}{\eps^2})$ steps) from initial distribution $N(0, \frac{1}{L}I_d)$. Then we have $D_\alpha(P || R) \leq \eps$.
\end{theorem}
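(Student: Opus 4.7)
The plan is to compare $P$ to an intermediate distribution $Q$, defined as the output of the \emph{continuous} overdamped Langevin dynamics run for time $\tau$ from the same initial distribution $N(0, I_d/L)$. Applying the weak triangle inequality (Fact~\ref{fact:triangleineq}) with some $p > 1$, I write
$$D_\alpha(P \| R) \;\leq\; \tfrac{\alpha - 1/p}{\alpha - 1}\, D_{p\alpha}(P\|Q) \;+\; D_{q(\alpha - 1/p)}(Q\|R),$$
so it suffices to bound each term by $\eps/2$ at slightly inflated R\'enyi orders $\alpha_1 := p\alpha$ and $\alpha_2 := q(\alpha - 1/p)$. Choosing $p = 1 + O(1/\ln\alpha)$ keeps both close to $\alpha$ while costing only a $\log\alpha$ factor.

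For the continuous-to-target term, the $1$-strong log-concavity of $R$ gives a log-Sobolev inequality with constant $1$, so the Vempala--Wibisono theorem yields an exponential contraction $D_{\alpha_2}(Q_t\|R) \leq e^{-2t/\alpha_2}\, D_{\alpha_2}(N(0, I_d/L)\|R)$. The initial divergence is $O(d\ln L)$: using $\tfrac{1}{2}\|x\|^2 \leq f(x) \leq \tfrac{L}{2}\|x\|^2$, the pointwise density ratio $N(0, I_d/L)(x)/R(x)$ is bounded by $L^{d/2}$, so even $D_\infty(N(0, I_d/L)\|R) \leq (d/2)\ln L$. Hence $\tau = \Theta(\alpha_2 \ln(d\ln L/\eps))$ suffices to drive this term below $\eps/2$.

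The discrete-to-continuous term $D_{\alpha_1}(P\|Q)$ is the main work. I apply Corollary~\ref{cor:smallstepconvergence} at some order $\alpha_1' > \alpha_1$ to obtain $D_{\alpha_1'}(\tilde P\|\tilde Q) \leq \tau \alpha_1' L^2 r^2 /4$, where $\tilde P, \tilde Q$ are the $\bot$-truncated versions killed as soon as the per-step displacement exceeds $r$. Two ingredients then convert this into a bound on the true $D_{\alpha_1}(P\|Q)$. First, a tail bound (Section~\ref{section:tailbounds}) on intra-step displacement: combining Brownian maximal inequalities on an interval of length $\eta$ with moment bounds on $\|x_{i\eta}\|_2$ (the chain stays near the origin since $R$ is concentrated there), one shows that for $r = \Theta(\sqrt{\eta d\log(1/\delta)})$ each step violates $\cond_r$ with probability at most $\delta$, so a union bound yields $\Pr[\lnot\cond_r] \leq T\delta$. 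Second, a moment-conversion lemma: since $e^{(\alpha_1' - 1) D_{\alpha_1'}(\tilde P\|\tilde Q)}$ is a higher moment of the likelihood ratio, H\"older's inequality converts the conditional moment bound plus a small $\bot$-mass into a bound on the $\alpha_1$-th moment, paying only an additive $\tfrac{\log(1/\delta)}{\alpha_1'-\alpha_1}$ in the R\'enyi estimate. Taking $\delta$ inverse-polynomially small is cheap since $r^2$ scales only as $\log(1/\delta)$, and plugging in the stated $\eta = \tilde O(\eps^2/(\tau L^4 d \log^2 \alpha))$ yields $D_{\alpha_1}(P\|Q) \leq \eps/2$.

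The main obstacle is this last conversion. The conditioning event $\cond_r$ is a pathwise property rather than an endpoint event, so $\tilde P$ and $P$ are not related by a simple marginalization; the $\bot$-encoding of Section~\ref{section:bounded} was designed precisely so that the adaptive-composition argument of Corollary~\ref{cor:smallstepconvergence} goes through for the killed chains, but returning to the true distributions forces one to simultaneously inflate the R\'enyi order (to invoke the moment lemma) and keep the $\bot$-mass small enough that the $\log(1/\delta)$ penalty is absorbed. Balancing these choices — together with the corresponding inflation $\alpha \mapsto \alpha_2$ on the continuous side — is why the step size carries the $\log^2 \alpha$ factor in the denominator.
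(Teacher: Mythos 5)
Your high-level decomposition matches the paper's: split $D_\alpha(P\|R)$ via the weak triangle inequality through the intermediate distribution $Q$ (continuous dynamics at time $\tau$), control $D(Q\|R)$ by LSI-based exponential decay plus a warm-start bound, and control $D(P\|Q)$ via the $\bot$-killed-chain analysis of Section~\ref{section:bounded}. Your pointwise derivation of $D_\infty(N(0,I_d/L)\|R)\leq\tfrac{d}{2}\ln L$ is a clean alternative to the paper's Lemma~\ref{lemma:startingdivergence}, and your identification of the unconditioning step as ``the main obstacle'' is exactly right.

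The gap is in that unconditioning step. You claim a ``moment-conversion lemma'' in which H\"older converts the conditional bound at order $\alpha_1'$ to an unconditional bound at order $\alpha_1$, ``paying only an additive $\frac{\log(1/\delta)}{\alpha_1'-\alpha_1}$.'' This does not hold as stated, and it is not what the paper does. Two specific problems. First, the additive-penalty form looks like Fact~\ref{fact:renyitoapx}, but that relates a R\'enyi divergence to an \emph{approximate max divergence}; it does not relate divergences of truncated distributions to divergences of the full distributions. Because $\tilde P(x) \leq P(x)$ for $x\neq\bot$, the truncated likelihood ratio underestimates the true one when $P$ sits in the numerator — the inequality ``faces the wrong way'' — and one cannot recover an upper bound on $\ex_Q[(P/Q)^\alpha]$ by simply adding $\bot$-mass back in. The paper needs a specific change-of-variable trick ($y\sim\mathrm{Unif}(0,P(x))$, the step marked $(\star)$ in the proof of Theorem~\ref{lemma:conditionaldivergence}) to flip the numerator into something conditionable; you do not mention or replace this. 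Second, the paper's Lemma~\ref{lemma:expectationfromconditional} is not a fixed-$\delta$ statement: it exploits the \emph{whole family} of conditional moment bounds $\ex[Y^\theta|\cond_\delta]\leq\beta/\delta^\gamma$ across $\delta$, and integrates with an optimally chosen $\delta=\eta(y)$ depending on the tail level $y$. A single choice of $\delta$, however small, does not let you bound $\Pr[Y\geq y]$ for all $y$ simultaneously in a way that integrates to a finite expectation, and a naive decomposition $\ex[Y^{\alpha-1}]=\ex[Y^{\alpha-1}|\cond]\Pr[\cond]+\ex[Y^{\alpha-1}|\cond^c]\Pr[\cond^c]$ leaves the second term uncontrolled. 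You also omit the final Jensen step (the ``moderate $\alpha'$ to small $\alpha$'' argument) that is where the $\ln^2\alpha$ in the step size actually comes from.

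A minor point: your choice $p = 1 + O(1/\ln\alpha)$ in Fact~\ref{fact:triangleineq} makes $\alpha_2 = q(\alpha-1/p) = \Theta(\alpha\ln\alpha)$, which then forces $\tau = \Theta(\alpha\ln\alpha\cdot\ln(d\ln L/\eps))$ for the exponential decay to reach $\eps$, exceeding the theorem's stated $\tau=\alpha\ln\frac{d\ln L}{\eps}$ by a $\ln\alpha$ factor. The paper's choice $p=q=2$ keeps both inflated orders at $\Theta(\alpha)$ and absorbs the prefactor $\frac{\alpha-1/2}{\alpha-1}$ into the $\eps/3$ targets.
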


To remove the assumption that the process never moves more than $r$ away from its original position within each step of size $\eta$, we give a tail bound on the maximum value $r$ that the process moves within one of these steps. 

\begin{lemma}\label{lemma:radiustailbound}
Let $c$ be a sufficiently large constant. Let $\eta \leq \frac{2}{L+1}$ and let $X_0$ be an initial distribution over $\mathbb{R}^d$ satisfying that for all $\delta > 0$,
\begin{equation}\label{eq:startingbound}
    \Pr_{x \sim X_0}\left[\norm{x}_2 \leq \frac{c}{2 \sqrt{\eta}} \left(\sqrt{d} + \sqrt{\ln(T/\delta)}\right)\right] \geq 1 - \frac{\delta}{4(T+1)}.
\end{equation}
Let $x_t$ be the random variable given by running the discretized overdamped Langevin dynamics starting from $X_0$ for continuous time $t$. Then with probability at least $1 - \delta$ over the path $\{x_t : t \in [0, T\eta]\}$:
$$\forall t \leq T\eta: \norm{x_t - x_{\lfloor t / \eta \rfloor \eta}}_2\leq cL\left(\sqrt{d} + \sqrt{\ln(T/\delta)}\right)\sqrt{\eta}.$$
Similarly, let $x'_t$ be the random variable given by running the continuous overdamped Langevin dynamics starting from $X_0$ for continuous time $t$. Then with probability at least $1-\delta$ over the path $\{x'_t : t \in [0, T\eta]\}$:
$$\forall t \leq T\eta: \norm{x'_t - x'_{\lfloor t / \eta \rfloor \eta}}_2\leq cL\left(\sqrt{d} + \sqrt{\ln(T/\delta)}\right)\sqrt{\eta}.$$
\end{lemma}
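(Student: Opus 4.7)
The plan is to decompose the intra-step displacement into a drift piece and a Brownian piece, bound each by $O(L\sqrt{\eta}(\sqrt{d}+\sqrt{\ln(T/\delta)}))$ on a single high-probability event, and take a union bound. For the discrete process, within step $i$ we have $x_t - x_{i\eta} = -(t-i\eta)\grad f(x_{i\eta}) + \sqrt{2}(B_t - B_{i\eta})$, so by $L$-smoothness and $\grad f(0)=0$,
$$\sup_{t\in[i\eta,(i+1)\eta]}\norm{x_t - x_{i\eta}}_2 \leq \eta L\,\norm{x_{i\eta}}_2 + \sup_{t\in[i\eta,(i+1)\eta]}\sqrt{2}\,\norm{B_t - B_{i\eta}}_2.$$
It therefore suffices to control the intra-step Brownian supremum and $\norm{x_{i\eta}}_2$ uniformly in $i$.

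For the Brownian supremum, I would apply a standard maximal inequality for vector Brownian motion (Doob applied to $\norm{B_t}_2^2$, or coordinatewise reflection) together with a $\chi^2$ tail bound, yielding $\sup_{t\in[i\eta,(i+1)\eta]}\sqrt{2}\norm{B_t - B_{i\eta}}_2 \leq C\sqrt{\eta}(\sqrt{d}+\sqrt{\ln(T/\delta)})$ with probability at least $1 - \delta/(4T)$; a union bound over the $T$ steps makes this hold simultaneously for all $i$ with probability at least $1-\delta/4$. For $\norm{x_{i\eta}}_2$, I would prove by induction on $i$ that $\norm{x_{i\eta}}_2 \leq \frac{c}{2\sqrt{\eta}}(\sqrt{d}+\sqrt{\ln(T/\delta)})$ on this same event, with the base case coming from \eqref{eq:startingbound}. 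The inductive step uses the standard contraction fact that for $\eta \leq 2/(L+1)$ the map $x \mapsto x - \eta \grad f(x)$ is $(1-\eta)$-Lipschitz around its fixed point $0$; this follows from $1$-strong convexity and $L$-smoothness via the Hessian-averaged identity $\grad f(x) = \bigl(\int_0^1 \grad^2 f(tx)\,dt\bigr)x$, whose spectrum is sandwiched between $1$ and $L$. Hence $\norm{x_{(i+1)\eta}}_2 \leq (1-\eta)\norm{x_{i\eta}}_2 + \norm{\xi_i}_2$, where $\xi_i = \sqrt{2}(B_{(i+1)\eta}-B_{i\eta})$ is itself dominated by the intra-step Brownian supremum of step $i$. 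Iterating and summing $\sum_{j<i}(1-\eta)^{i-1-j} \leq 1/\eta$ closes the induction; multiplying the resulting $O(\frac{1}{\sqrt{\eta}}(\sqrt{d}+\sqrt{\ln(T/\delta)}))$ bound by $\eta L$ yields the drift bound and finishes the discrete half of the lemma.

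For the continuous process the drift becomes $-\int_{i\eta}^t \grad f(x_s)\,ds$, which depends on the running path. A Gronwall argument on $\norm{x_t - x_{i\eta}}_2 \leq L\int_{i\eta}^t(\norm{x_{i\eta}}_2 + \norm{x_s - x_{i\eta}}_2)\,ds + \sqrt{2}\norm{B_t - B_{i\eta}}_2$, using $L\eta = O(1)$, recovers the same intra-step bound up to a universal constant; and the continuous SDE is at least as contractive towards $0$ as its discretization, so the induction on the step-boundary norm carries through with only cosmetic changes. The main obstacle I anticipate is bookkeeping: arranging the failure probabilities so that the intra-step Brownian supremum (which simultaneously controls the Brownian term in the decomposition and dominates $\norm{\xi_i}_2$ in the induction) and the starting-norm bound all sit inside a single event of probability at least $1-\delta$, and choosing $c$ large enough that the inductive envelope closes while still matching the final $cL$ coefficient in the statement.
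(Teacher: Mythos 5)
Your proposal is correct and follows essentially the same route as the paper: the same drift-plus-Brownian decomposition of the intra-step displacement, the same maximal inequality for the intra-step Brownian supremum, and the same induction on $\norm{x_{i\eta}}_2$ via contractivity of the gradient step (base case from \eqref{eq:startingbound}), all glued together by union bounds over the $T$ steps. The one place you are breezier than the paper is the continuous chain's step-boundary induction --- the assertion that the continuous dynamics is ``at least as contractive as its discretization'' is justified in the paper by writing $x'_{t+\eta} = Ax'_t + \sqrt{2}\int_t^{t+\eta}A'_s\,\D{B_s}$ with all eigenvalues of $A, A'_s$ in $[-1,1]$ and invoking rotational symmetry of Brownian motion --- but this is exactly the argument your sketch calls for.
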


The proof is deferred to Section~\ref{section:tailbounds}. Intuitively, the $\sqrt{\eta}$ accounts for movement due to Brownian motion, which dominates the movement due to the gradient, and $cL(\sqrt{d} + \sqrt{\ln(T/\delta)})$ is a tail bound on norm of the gradient by smoothness. 
This gives us a bound on the R\'enyi divergence between the continuous and discrete processes conditioned on a probability $1-\delta$ event for all $0 < \delta < 1$.
By absorbing the failure probability of this event into the probability of large privacy loss in the definition of $(\privacyparam, \delta)$-differential privacy we can prove iteration complexity bounds matching those in Figure~\ref{fig:results} for running discretized overdamped Langevin dynamics with $(\privacyparam, \delta)$-differential privacy  without using the tools we develop in the rest of this section. Since these bounds do not improve on those in the ones derived from our final (unconditional) divergence bounds, we omit the proof here.
%
%
%

To prove a R\'enyi divergence bound, we need to remove the conditioning. We start with the following lemma, which takes bounds on conditional moments and gives an unconditional bound on expectation:

\begin{lemma}\label{lemma:expectationfromconditional}
Let $Y$ be a random variable distributed over $\mathbb{R}_{\geq 0}$ that has the following property (parameterized by positive parameters $\beta, \gamma < 1, \theta > 1 + \gamma$): For every $0 < \delta < 1/2$, there is a probability at least $1 - \delta$ event $\mathcal{E}_\delta$ such that $\ex\left[Y^\theta | \mathcal{E}_\delta \right] \leq \frac{\beta}{\delta^\gamma}$. Then we have:
$$\ex[Y] \leq \beta^{\frac{1}{\theta}}\left(\gamma^{\frac{1}{1+\gamma}} + \gamma^{-\frac{\gamma}{1+\gamma}}\right)^{\frac{1+\gamma}{\theta}}\left(\frac{\theta(1+\gamma)}{\theta(1+\gamma)-1}\right) \leq \beta^{1/\theta} 2^{2/\theta} \frac{\theta}{\theta - 1}.$$
\end{lemma}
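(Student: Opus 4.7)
The plan is to convert the conditional moment hypothesis into an unconditional tail bound for $Y$ and then integrate it via the layer-cake identity $\ex[Y] = \int_0^\infty \Pr[Y > t]\,dt$. First, for any $t > 0$ and any $\delta \in (0, 1/2)$, conditional Markov gives $\Pr[Y > t \mid \mathcal{E}_\delta] \leq \ex[Y^\theta \mid \mathcal{E}_\delta]/t^\theta \leq \beta/(\delta^\gamma t^\theta)$, and combining with $\Pr[\neg \mathcal{E}_\delta] \leq \delta$ yields the unconditional tail bound
\[
\Pr[Y > t] \;\leq\; \delta \;+\; \frac{\beta}{\delta^\gamma\, t^\theta}.
\]
Differentiating the right-hand side in $\delta$ and solving gives the optimizer $\delta^*(t) = (\gamma\beta/t^\theta)^{1/(1+\gamma)}$, which lies in $(0, 1/2)$ once $t$ exceeds a fixed threshold. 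Substituting this back yields $\Pr[Y > t] \leq C\, t^{-\rho}$, where $\rho := \theta/(1+\gamma)$ and $C := \bigl(\gamma^{1/(1+\gamma)} + \gamma^{-\gamma/(1+\gamma)}\bigr)\,\beta^{1/(1+\gamma)}$. The hypothesis $\theta > 1 + \gamma$ ensures $\rho > 1$, so this tail is integrable.

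Next, I combine the tail bound with the trivial $\Pr[Y > t] \leq 1$ to obtain
\[
\ex[Y] \;\leq\; \int_0^\infty \min\!\bigl\{1,\; C\, t^{-\rho}\bigr\}\, dt,
\]
splitting the integral at the threshold $t^* := C^{1/\rho}$ where $C t^{*-\rho} = 1$. The small-$t$ piece contributes $t^* = C^{1/\rho}$, the large-$t$ piece contributes $\int_{t^*}^\infty C t^{-\rho}\, dt = C^{1/\rho}/(\rho-1)$, and so $\ex[Y] \leq C^{1/\rho}\cdot \rho/(\rho-1)$. Substituting the definitions of $C$ and $\rho$ rewrites this as $\beta^{1/\theta}\, \bigl(\gamma^{1/(1+\gamma)} + \gamma^{-\gamma/(1+\gamma)}\bigr)^{(1+\gamma)/\theta}$ times a coefficient depending only on $\theta,\gamma$, matching the structure of the stated bound. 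The simpler final inequality $\ex[Y] \leq \beta^{1/\theta}\, 2^{2/\theta}\, \theta/(\theta-1)$ then follows from the easily verified $\gamma^{1/(1+\gamma)} + \gamma^{-\gamma/(1+\gamma)} \leq 2$ on $\gamma \in (0,1)$ and a routine monotonicity argument on the coefficient.

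The main obstacle is twofold. First, one has to handle the small-$t$ regime where the optimizer $\delta^*(t)$ exceeds the allowed range $1/2$; the trivial bound $\Pr[Y > t] \leq 1$ suffices there, and one must verify that the threshold $t^*$ lies in the regime where the Markov-based tail bound is valid, which reduces to a straightforward algebraic inequality of the form $C^{1/\rho} \geq (2^{1+\gamma}\gamma\beta)^{1/\theta}$. Second, matching the exact coefficient $\theta(1+\gamma)/(\theta(1+\gamma)-1)$ written in the claim is delicate: the direct pointwise optimization sketched above naturally produces the coefficient $\rho/(\rho-1) = \theta/(\theta-1-\gamma)$ instead, so squeezing out the sharper form likely requires integrating the two parts $\delta$ and $\beta/(\delta^\gamma t^\theta)$ of the tail bound separately against a globally valid (non-optimal) profile $\delta(t)$, rather than collapsing them to their pointwise minimum. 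Regardless of these refinements, the broad structure of the argument---Markov tail bound plus layer-cake integration---and the resulting dependence on $\beta,\gamma,\theta$ are unchanged, and the simpler final bound used in the rest of the paper is recovered as above.
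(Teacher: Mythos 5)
Your proposal is correct and is essentially the paper's own argument: the paper likewise uses the layer-cake identity, bounds $\Pr[Y\geq y]$ by $\eta(y)+\Pr[Y\geq y\mid \cond_{\eta(y)}]$ with conditional Markov, chooses $\eta(y)=(\gamma\beta/y^{\theta})^{1/(1+\gamma)}$ pointwise, and splits the integral at a threshold $z$ that coincides with your $t^*=C^{1/\rho}$; the check that $\eta(y)<1/2$ on $[z,\infty)$ reduces to $\eta(z)=\gamma/(1+\gamma)<1/2$, exactly as you anticipate. On the one issue you flag: your computation giving the coefficient $\rho/(\rho-1)=\theta/(\theta-1-\gamma)$ is what the paper's own choice of $z$ actually yields as well, so the coefficient $\theta(1+\gamma)/(\theta(1+\gamma)-1)$ printed in the lemma statement does not follow from either derivation and appears to be a slip (it also does not dominate $\theta/(\theta-1-\gamma)$, so the displayed second inequality should be read with the corrected coefficient); this is harmless downstream, where the lemma is only invoked with $\theta=2$ and $\gamma$ bounded well away from $1$.
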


\begin{proof}
Let $z$ be an arbitrary parameter, $\eta:[z, \infty) \rightarrow (0, 1/2)$ be an arbitrary map, and $\mathcal{E}_\delta$ be the event specified in the lemma statement for $\delta \in (0, 1)$. Using the definition of expectation and the property of $Y$ in the lemma statement, we have:

\begin{align*}
\ex[Y] &= \int_0^\infty \Pr[Y \geq y] \D{y}  \\
&\leq \int_0^z 1\ \D{y} + \int_z^\infty \Pr[Y \geq y] \D{y} \\
&\leq z+ \int_z^\infty \eta(y) + (1-\eta(y))\Pr[Y \geq y | \cond_{\eta(y)}] \D{y}\\
&\leq z + \int_z^\infty \eta(y) + \Pr[Y \geq y | \cond_{\eta(y)}] \D{y}\\
&=z + \int_z^\infty \eta(y) + \Pr[Y^\theta \geq y^\theta | \cond_{\eta(y)}] \D{y} \\
&\leq z + \int_z^\infty \eta(y) + \frac{\ex[Y^\theta | \cond_{\eta(y)}]}{y^\theta} \D{y}\\
&\leq 
z + \int_z^\infty \eta(y) + \frac{\beta}{ \eta(y)^\gamma y^\theta} \D{y}.
\end{align*}

We now choose $\eta(y) = \left(\frac{\gamma \beta}{y^\theta}\right)^{\frac{1}{1+\gamma}}$ to minimize the value of the expression in the integral. We will eventually choose $z$ such that $0 < \eta(y) < 1/2$ for all $y \geq z$ as promised. Plugging in this choice of $\eta$ gives the upper bound:

\begin{align*}
\ex[Y] &\leq z +  \beta^{\frac{1}{1+\gamma}}(\gamma^{\frac{1}{1+\gamma}} + \gamma^{-\frac{\gamma}{1+\gamma}})\int_z^\infty y^{-\frac{\theta}{1+\gamma}} \D{y}\\
&= z + \beta^{\frac{1}{1+\gamma}}(\gamma^{\frac{1}{1+\gamma}} + \gamma^{-\frac{\gamma}{1+\gamma}})\left(\frac{1}{\frac{\theta}{1+\gamma}-1}\right)\left[y^{1-\frac{\theta}{1+\gamma}}\right]_\infty^z\\
&=
z + \beta^{\frac{1}{1+\gamma}}(\gamma^{\frac{1}{1+\gamma}} + \gamma^{-\frac{\gamma}{1+\gamma}})\left(\frac{1}{\frac{\theta}{1+\gamma}-1}\right)z^{1-\frac{\theta}{1+\gamma}}.
\end{align*}

We finish by choosing $z = \beta^{\frac{1}{\theta}}\left(\gamma^{\frac{1}{1+\gamma}} + \gamma^{-\frac{\gamma}{1+\gamma}}\right)^{\frac{1+\gamma}{\theta}}$. This gives the upper bound on $\ex[Y]$ in the lemma statement. We also verify that $\eta(y)$ is a map to $(0, 1/2)$: $\eta(y) \propto y^{-\frac{\theta}{1+\gamma}}$, giving that $\eta(y) > 0$. For all $y \geq z$, since $\gamma < 1$ we have $\eta(y) \leq \eta(z) =\frac{\gamma}{\gamma+1} < 1/2$.
\end{proof}

Putting it all together, we get the following lemma:

\begin{theorem}\label{lemma:conditionaldivergence}
For any $1$-strongly convex, $L$-smooth $f$, let $P$ be the distribution of states for discretized overdamped Langevin dynamics with step size $\eta$ and $Q$ be the distribution of states for continuous overdamped Langevin dynamics, both run from any initial distribution $X_0$ satisfying \eqref{eq:startingbound} for continuous time $\tau$ that is a multiple of $\eta$ (i.e. for $\tau / \eta$ steps). Then for $\alpha > 1$, $\eps > 0$, if $\eta = \tilde{O}(\frac{1}{\tau L^4 \ln^2 \alpha} \cdot \frac{\eps^2}{d})$ we have $D_\alpha(P || Q), D_\alpha(Q || P)\leq \eps$.
\end{theorem}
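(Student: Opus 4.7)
The plan combines (i) the step-movement tail bound from Lemma~\ref{lemma:radiustailbound}, (ii) the per-event conditional \Renyi bound from Corollary~\ref{cor:smallstepconvergence}, and (iii) Lemma~\ref{lemma:expectationfromconditional} to convert a conditional moment bound into an unconditional expectation bound. The central idea is to establish a \Renyi bound at an inflated order $\alpha' > \alpha$, conditioned on the event that step-wise movements stay small. Writing the order-$\alpha'$ divergence as a $\theta = (\alpha'-1)/(\alpha-1)$-th moment of the quantity $Y = (\pi_P/\pi_Q)^{\alpha-1}$, whose expectation under $\pi_P$ equals $\exp((\alpha-1) D_\alpha(\pi_P || \pi_Q))$, one can trade that conditional moment for the unconditional expectation via Lemma~\ref{lemma:expectationfromconditional} and recover the desired order-$\alpha$ bound.

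Concretely: for each $\delta \in (0,1/2)$, set $r(\delta) = \tilde O(L\sqrt{\eta}(\sqrt{d} + \sqrt{\ln(1/\delta)}))$; by Lemma~\ref{lemma:radiustailbound} (applied to both processes and union-bounded), the event $\cond_\delta$ that all step-wise movements stay within $r(\delta)$ holds with probability at least $1-O(\delta)$. Corollary~\ref{cor:smallstepconvergence} at order $\alpha'$ with this $r(\delta)$ yields $D_{\alpha'}(\tilde{\pi}_P || \tilde{\pi}_Q) \leq \beta_\delta := \tilde O(\tau \alpha' L^4 (d + \ln(1/\delta))\eta)$ for the $\bot$-augmented path distributions $\tilde{\pi}_P, \tilde{\pi}_Q$. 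Because these distributions agree with the true paths on $\cond_\delta$, one gets $\int_{\cond_\delta} \pi_P^{\alpha'}/\pi_Q^{\alpha'-1}\,d\text{path} \leq \exp((\alpha'-1)\beta_\delta)$, which after dividing by $\Pr_{\pi_P}[\cond_\delta] \geq 1/2$ translates into the conditional moment bound $\ex[Y^\theta \mid \cond_\delta] \leq \beta\, \delta^{-\gamma}$, with $\beta = \exp(\tilde O((\alpha')^2 \tau L^4 d \eta))$ and $\gamma = \tilde O((\alpha')^2 \tau L^4 \eta)$; the factor $\delta^{-\gamma}$ comes from exponentiating the $\ln(1/\delta)$ inside $\beta_\delta$.

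Choosing $\alpha'$ slightly larger than $\alpha$ and $\eta$ so that $\gamma < 1$ and $\theta > 1 + \gamma$ (the announced $\eta = \tilde O(\eps^2/(\tau L^4 d \ln^2 \alpha))$ with $\alpha' - \alpha = \Theta(\gamma(\alpha-1))$ suffices), Lemma~\ref{lemma:expectationfromconditional} yields $\ex[Y] \leq O(\beta^{1/\theta})$. Taking logarithms, dividing by $\alpha-1$, and applying Fact~\ref{fact:postprocessing} to descend from paths to final-state marginals gives $D_\alpha(P || Q) \leq \eps$. The reverse bound $D_\alpha(Q || P) \leq \eps$ follows by invoking the symmetric version of Corollary~\ref{cor:smallstepconvergence}. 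The main obstacle is the parameter tuning in the last step: one must simultaneously achieve $\gamma < 1$, $\theta > 1 + \gamma$, and $\beta^{1/\theta} \leq \exp((\alpha-1)\eps)$ with a single choice of $\eta$, and the extra $\ln^2\alpha$ factor in the announced step size is precisely what makes this balancing possible, since the $(\alpha')^2$ appearing in $\gamma$ (from combining the $\alpha'$ inside $\beta_\delta$ with the $(\alpha'-1)$ exponent in the moment reformulation) would otherwise conflict with the requirement that $\beta^{1/\theta}$ stay small. A secondary subtlety is that Corollary~\ref{cor:smallstepconvergence} natively bounds the divergence between $\bot$-augmented distributions rather than conditional distributions; this is resolved by observing that the $\bot$-contribution only adds a nonnegative term to the divergence integral and can therefore be dropped.
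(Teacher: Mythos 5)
Your unconditioning strategy is genuinely different from the paper's, and where it is set up correctly it is arguably cleaner. The paper removes the conditioning in two separate stages on the final-state marginals: once under $Q$ (using $X'_{T\eta}(x) \leq Q(x) \leq \frac{1}{1-\delta_2}X'_{T\eta}(x)$ on the good event), and once under $P$, where the inequality $X_{T\eta}(x) \leq P(x)$ ``faces the wrong way'' and has to be handled with an auxiliary variable $y \sim \mathrm{Unif}(0, P(x))$; it then needs a final Jensen step to descend from order roughly $\alpha'/4$ to order $\alpha$. You instead condition on the path-level event $\cond_\delta$ under $\pi_P$ and exploit the fact that on that event the $\bot$-augmented and true path densities coincide exactly, so the conditional $\theta$-th moment of $Y = (\pi_P/\pi_Q)^{\alpha-1}$ with $\theta = (\alpha'-1)/(\alpha-1)$ is controlled directly by the truncated divergence; one application of Lemma~\ref{lemma:expectationfromconditional} then does the work of both of the paper's applications and of the Jensen step. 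This is legitimate precisely because the event is measurable on path space (it is not measurable with respect to the final state, which is why the paper, working with $X_{T\eta}$, cannot argue this way), and you correctly defer Fact~\ref{fact:postprocessing} to the end. To make it rigorous you should run the argument at finite $k$ on the tuple distributions of Lemma~\ref{lemma:smallstepconvergence} and take $k \to \infty$ last, since Corollary~\ref{cor:smallstepconvergence} has already post-processed down to the final state.

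The gap is in the final parameter tuning. You take $\alpha'$ ``slightly larger than $\alpha$,'' with $\alpha' - \alpha = \Theta(\gamma(\alpha-1))$ and $\gamma < 1$, hence $\alpha' < 2\alpha - 1$. But Lemma~\ref{lemma:expectationfromconditional} returns $\ex[Y] \leq \beta^{1/\theta}\, 2^{2/\theta}\, \frac{\theta}{\theta-1}$, and since $\ex[Y] = \exp((\alpha-1)D_\alpha(\pi_P\|\pi_Q))$, every constant multiplicative factor on the right costs $\frac{1}{\alpha-1}$ times its logarithm in divergence. To conclude $D_\alpha \leq \eps$ you need, among other things, $\ln\frac{\theta}{\theta-1} = \ln\frac{\alpha'-1}{\alpha'-\alpha} \leq O((\alpha-1)\eps)$ and $\frac{1}{\theta}\ln O(1) \leq O((\alpha-1)\eps)$, which force $\alpha' - \alpha = \Omega(1/\eps)$ and $\alpha' = \Omega(1/\eps)$, not $\alpha' \approx \alpha$. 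Concretely, with $\alpha = 2$ and $\eps = 10^{-2}$ your choice gives $\theta \leq 2$, so the $\frac{\theta}{\theta-1}$ factor alone yields $\ex[Y] \geq 2$ and hence $D_2 \geq \ln 2 \gg \eps$. Taking $\alpha' = \Theta(\alpha + \ln(1/\eps)/\eps)$ (mirroring the paper's $\alpha' = \frac{12\alpha\ln\alpha \ln(1/\eps)}{(\alpha-1)\eps} - 2$) repairs this, and it is exactly this $\alpha' \propto 1/\eps$, fed back into the constraints $\gamma < 1$ and $\frac{\ln\beta}{\theta} \leq (\alpha-1)\eps$, that produces the $\eps^2$ and the polylogarithmic factors in the admissible step size; your attribution of the $\ln^2\alpha$ to the $(\alpha')^2$ in $\gamma$ with $\alpha' \approx \alpha$ is therefore also off. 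With that correction, your argument goes through and matches the claimed bound on $\eta$.
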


We provide some high level intuition for the proof here. Plugging Lemma~\ref{lemma:radiustailbound} into Lemma~\ref{lemma:smallstepconvergence} gives a bound on roughly the $\alpha'$-R\'enyi divergence between $P$ conditioned on some probability $1-\delta_1$ event and $Q$ conditioned on some probability $1-\delta_2$ event for every $\delta_1, \delta_2$. We apply Lemma~\ref{lemma:expectationfromconditional} once for $P$ and once for $Q$ to remove the conditioning, giving a bound of $\approx \frac{\ln \alpha'}{\alpha' - 1}$ on the actual $\alpha'$-R\'enyi divergence between $P, Q$ if  $\eta$ is sufficiently small (as a function of $\alpha'$). Using Jensen's inequality, we can turn this into a bound of $\eps$ on the $\alpha$-R\'enyi divergence between $P, Q$ for any $\alpha$ if $\alpha'$ is large enough (which in turn requires $\eta$ to be small enough).

\begin{proof}[Proof of Theorem~\ref{lemma:conditionaldivergence}]
We prove the bound on $D_\alpha(P||Q)$. Since Corollary~\ref{cor:smallstepconvergence} provides a ``bi-directional'' divergence bound, the same proof can be used to bound $D_\alpha(Q||P)$.

For arbitrary $\delta_1, \delta_2$, plugging in $r = cL(\sqrt{d} + \sqrt{\ln(T/\delta_1)}+ \sqrt{\ln(T/\delta_2)})\sqrt{\eta}$ into Corollary~\ref{cor:smallstepconvergence} (where $c$ is the constant specified in Lemma~\ref{lemma:radiustailbound}) and using the definition $T = \tau/\eta$ we get that 
$$D_{\alpha'}(X_{T\eta}|| X_{T\eta}') \leq  \frac{3\tau {\alpha'} L^4 c^2 (d + \ln(\frac{\tau}{\eta \delta_1}) + \ln(\frac{\tau}{\eta \delta_2})) \eta}{4}$$ 

for all $k \in \mathbb{Z}^+$ and $X_{T\eta}, X_{T\eta}'$ as defined in Corollary~\ref{cor:smallstepconvergence}. Using the definition of R\'enyi divergence, this gives:

$$\int_{\mathbb{R}^d } \frac{X_{T\eta}(x)^{\alpha'}}{ X'_{T\eta}(x)^{{\alpha'} - 1}} \D{x} \leq \int_{\mathbb{R}^d } \frac{X_{T\eta}(x)^{\alpha'}}{ X'_{T\eta}(x)^{{\alpha'} - 1}}\D{x} +\frac{\Pr_{x \sim X_{T\eta}}[x = \bot]^{\alpha'}}{ \Pr_{x \sim X'_{T\eta}}[x = \bot]^{{\alpha'} - 1}}  \leq \frac{c_1({\alpha'})}{{\delta_1}^{c_2({\alpha'})}\delta_2^{c_3({\alpha'})}}, $$

where:

$$c_1({\alpha'}) = \exp\left(\frac{3\tau {\alpha'}({\alpha'} - 1) L^4 c^2 (d + 2\ln(\frac{\tau}{\eta})) \eta}{4}\right),$$
$$c_2({\alpha'}) = c_3({\alpha'}) = \frac{3\tau{\alpha'}({\alpha'}-1)L^4c^2\eta}{4}.$$

\textbf{Removing the conditioning on the continuous chain: }Let $\cond_{\delta_1}$ denote the (at least probability $1-\delta_1$) event that the conditions in Lemma~\ref{lemma:radiustailbound} are satisfied for the discrete chain and $\cond_{\delta_2}$ denote the (at least probability $1-\delta_2$) event that the conditions in Lemma~\ref{lemma:radiustailbound} are satisfied for the continuous chain. By Lemma~\ref{lemma:radiustailbound}, we have $Q(x) \geq X'_{T\eta}(x), Q(x | \cond_{\delta_2}) \leq \frac{1}{1 - \delta_2} X'_{T\eta}(x)$. Then for $\delta_2 < 1/2$:

\begin{align*}
\ex_{x \sim Q} \left[\frac{X_{T\eta}(x)^{\alpha'}}{ Q(x)^{{\alpha'}}} \biggr\vert \cond_{\delta_2} \right] &=
\int_{\mathbb{R}^d } Q(x | \cond_{\delta_2}) \frac{X_{T\eta}(x)^{\alpha'}}{ Q(x)^{\alpha'}} \D{x}\\
&\leq \frac{1}{1 - \delta_2}\int_{\mathbb{R}^d } \frac{X_{T\eta}(x)^{\alpha'}}{ X'_{T\eta}(x)^{{\alpha'} - 1}} \D{x} \\
&\leq \frac{2 \cdot c_1({\alpha'})}{{\delta_1}^{c_2({\alpha'})}\delta_2^{c_3({\alpha'})}}.
\end{align*}

This statement holds independent of $\delta_2$. We will eventually choose $\alpha'$ such that for the choice of $\eta$ specified in the lemma statement, $c_1(\alpha') < 2, c_3({\alpha'}) < 1$. Then applying Lemma~\ref{lemma:expectationfromconditional} with $Y = \frac{X_{T\eta}(x)^{{\alpha'}/2}}{ Q(x)^{{\alpha'}/2}}$ $\theta = 2$, $\beta = \frac{2c_1({\alpha'})}{\delta_1^{c_2({\alpha'})}}$, $\gamma = c_3({\alpha'})$, we get:

$$\ex_{x \sim Q}\left[\frac{X_{T\eta}(x)^{{\alpha'}/2}}{ Q(x)^{{\alpha'}/2}}\right] \leq \frac{8}{\delta_1^{c_2(\alpha')/2}}.$$

\textbf{Removing the conditioning on the discrete chain: }We now turn to removing the conditioning on $\cond_{\delta_1}$. Here we need to be a bit more careful since unlike with $X'_{T\eta}(x)$, $X_{T\eta}(x)$ is in the numerator and so the inequality $X_{T\eta}(x) \leq P(x)$ is facing the wrong way. Since $P, Q$ have the same support, we note that:

\begin{align*}
\ex_{x \sim Q}\left[\frac{X_{T\eta}(x)^{{\alpha'}/2}}{ Q(x)^{{\alpha'}/2}}\right] &= \ex_{x \sim P}\left[\frac{X_{T\eta}(x)^{{\alpha'}/2-1}}{ Q(x)^{{\alpha'}/2-1}} \cdot \frac{X_{T \eta}(x)}{P(x)} \right]\\
&\stackrel{(\star)}{=} \frac{\alpha'}{2}\ex_{x \sim P, y \sim Unif(0, P(x))}\left[ \frac{ y^{\alpha'/2-1}}{Q(x)^{\alpha'/2-1}} \cdot \mathbb{I}\left[y \leq X_{T\eta}(x)\right] \right]\\
&= \frac{\alpha'}{2} \ex_{x \sim P, y \sim Unif(0, P(x))}\left[ \frac{ y^{\alpha'/2-1}}{Q(x)^{\alpha'/2-1}} \biggr\vert y \leq  X_{T\eta}(x) \right]\\
& \qquad \qquad \cdot \Pr_{x \sim P, y \sim Unif(0, P(x))}\left[y \leq X_{T\eta}(x) \right]\\
&= \frac{\alpha'}{2}\ex_{x \sim P, y \sim Unif(0, P(x))}\left[ \frac{ y^{\alpha'/2-1}}{Q(x)^{\alpha'/2-1}} \biggr\vert \cond_{\delta_1}\right] \cdot (1-\delta_1).
\end{align*}

$(\star)$ follows as for any given $x$, we have:

\begin{align*}
X_{T\eta}(x)^{{\alpha'}/2-1} &= \frac{1}{X_{T\eta}(x)}X_{T\eta}(x)^{\alpha'/2}\\
&= \int_0^{X_{T\eta}(x)} \frac{1}{X_{T\eta}(x)} \frac{\alpha'}{2} y^{\alpha'/2 - 1} \D{y}\\
&= \frac{P(x)}{X_{T\eta}(x)} \int_0^{X_{T\eta}(x)} \frac{1}{P(x)} \frac{\alpha'}{2} y^{\alpha'/2 - 1} \D{y}\\
&= \frac{P(x)}{X_{T\eta}(x)} \int_0^{P(x)} \frac{1}{P(x)} \frac{\alpha'}{2} y^{\alpha'/2 - 1} \cdot \mathbb{I}\left[y \leq X_{T\eta}(x)\right] \D{y}\\
&= \frac{P(x)}{X_{T\eta}(x)} \frac{\alpha'}{2} \ex_{y \sim Unif(0, P(x))}\left[y^{\alpha'/2 - 1} \cdot \mathbb{I}\left[y \leq X_{T\eta}(x)\right]\right].\\
\end{align*}




In turn, for all $\delta_1 < 1/2$, we have
$$\ex_{x \sim P, y \sim Unif(0, P(x))}\left[ \frac{ y^{\alpha'/2-1}}{Q(x)^{\alpha'/2-1}} \biggr\vert \cond_{\delta_1}\right] \leq \frac{32}{\alpha' \delta_1^{c_2(\alpha')/2}}.$$

If $c_2(\alpha')/2 < 1/2$ (which is equivalent to $c_2(\alpha') = c_3(\alpha') < 1$), by applying Lemma~\ref{lemma:expectationfromconditional} for $\theta = 2$ with $X = \frac{y^{\alpha'/4 - 1/2}}{Q(x)^{\alpha'/4 - 1/2}}, \beta = \frac{32}{\alpha'}, \gamma = c_2(\alpha')/2$ we get:

$$\ex_{x \sim P, y \sim Unif(0, P(x))}\left[\frac{y^{\alpha'/4 - 1/2}}{Q(x)^{\alpha'/4 - 1/2}}\right] \leq \frac{19}{\sqrt{\alpha'}} \implies$$
\begin{align*}
\ex_{x\sim Q}\left[\frac{P(x)^{\alpha'/4+1/2}}{Q(x)^{\alpha'/4+1/2}}\right] &= \left(\frac{\alpha'}{4} + \frac{1}{2}\right) \ex_{x \sim P, y \sim Unif(0, P(x))}\left[\frac{y^{\alpha'/4 - 1/2}}{Q(x)^{\alpha'/4 - 1/2}}\right] \\
&\leq \frac{19(\alpha'/4 + 1/2)}{\sqrt{\alpha'}} \\
&\leq 15 \sqrt{\alpha'}.
\end{align*}

\textbf{From moderate $\alpha'$-R\'enyi divergence to small $\alpha$-R\'enyi divergence: } If $\eps \geq \frac{3 \ln \alpha}{\alpha - 1}$, without loss of generality we can assume e.g. $\alpha \geq 4$ (by monotonocity of R\'enyi divergences, if $\alpha < 4$ it suffices to bound the $4$-R\'enyi divergence instead of the $\alpha$-R\'enyi divergence at the loss of a constant in the bound for $\eta$). Then for $\alpha' = 4\alpha - 2$ the preceding inequality lets us conclude the lemma holds. Otherwise, for $1 < \kappa < \alpha' / 4 + 1/2$, for $\alpha = \frac{\alpha' / 4 + 1/2}{\kappa}$, by Jensen's inequality we get:

$$\frac{1}{\alpha - 1} \ln \ex_{x\sim Q}\left[\frac{P(x)^\alpha}{Q(x)^\alpha}\right]\leq\frac{1}{\alpha - 1} \ln \left( \ex_{x\sim Q}\left[\frac{P(x)^{\alpha\kappa}}{Q(x)^{\alpha\kappa}}\right]^{1/\kappa}\right) \leq \frac{\ln 15 + \frac{1}{2}\ln \alpha + \frac{1}{2}\ln \kappa}{(\alpha - 1)\kappa}.$$ 

Choosing $\kappa = \frac{3 \ln \alpha \cdot \ln 1/\eps}{(\alpha - 1) \eps}$ then gives $D_\alpha(P || Q) \leq \eps$ as desired (note that for $\eps < \frac{3 \ln \alpha}{\alpha - 1}$ we have $\kappa > 1$ as is required). Now, we just need to verify that $c_1(\alpha') < 2, c_2(\alpha') = c_3(\alpha') < 1$ holds for $\alpha' = \frac{12 \alpha \ln \alpha \cdot \ln 1/\eps}{(\alpha - 1) \eps} - 2$. Since $c_2(\alpha') = c_3(\alpha') < \ln(c_1(\alpha')) / d$, it just suffices to show $c_1(\alpha') < 2$. This holds if:

$$\frac{3\tau {\alpha'}({\alpha'} - 1) L^4 c^2 (d + 2\ln(\frac{\tau}{\eta})) \eta}{4} < \ln 2,$$

which is given by choosing $\eta = \tilde{O}(\frac{1}{\tau L^4 \ln^2 \alpha} \cdot \frac{\eps^2}{d})$ with a sufficiently small constant hidden in $\tilde{O}$.
\end{proof}

We now apply results from \cite{VempalaW19} and the weak triangle inequality for R\'enyi divergence to get a bound on the number of iterations of discrete overdamped Langevin dynamics needed to achieve $\alpha$-R\'enyi divergence $\eps$:

\begin{lemma}\label{lemma:startingdivergence}
If $R(x) = e^{-f(x)}$ is a probability distribution over $\mathbb{R}^d$ with stationary point $0$ and $f$ is $1$-strongly convex and $L$-smooth, then for all $\alpha \geq 1$ we have:
$$D_\alpha\left(N\left(0, \frac{1}{L}I_d\right) || R\right) \leq \frac{d}{2} \ln L.$$
\end{lemma}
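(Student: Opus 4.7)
\textbf{Proof plan for Lemma~\ref{lemma:startingdivergence}.}

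The plan is to directly compute the R\'enyi divergence using the closed-form density of $Q := N(0, \tfrac{1}{L} I_d)$ and then bound the integrand using the smoothness and strong convexity assumptions on $f$. Write $q(x) = (L/2\pi)^{d/2}\exp(-L\norm{x}_2^2/2)$ and $r(x) = e^{-f(x)}$. The integrand in the definition of $D_\alpha(Q\|R)$ is
\[
\frac{q(x)^\alpha}{r(x)^{\alpha-1}} = \left(\frac{L}{2\pi}\right)^{\alpha d/2} \exp\!\left(-\tfrac{\alpha L}{2}\norm{x}_2^2 + (\alpha-1) f(x)\right).
\]
Since $f$ has minimum at $0$, $L$-smoothness yields $f(x) \le f(0) + \tfrac{L}{2}\norm{x}_2^2$. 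Substituting this into the exponent collapses the quadratic in $\norm{x}_2^2$ to $-\tfrac{L}{2}\norm{x}_2^2$, leaving an upper bound of the form $C\cdot e^{-L\norm{x}_2^2/2}$ for a constant depending on $f(0)$ and $L$. Integrating against a Gaussian normalizing constant then reduces the problem to bounding $f(0)$.

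Next I would bound $f(0)$ using the fact that $r$ is already normalized, i.e.\ $\int e^{-f(x)}\,\D{x} = 1$. The $1$-strong convexity bound $f(x) \ge f(0) + \tfrac{1}{2}\norm{x}_2^2$ gives $1 \ge e^{-f(0)}(2\pi)^{d/2}$, hence $f(0) \le \tfrac{d}{2}\ln(2\pi)$. (The matching lower bound $f(0) \ge \tfrac{d}{2}\ln(2\pi/L)$ from smoothness is not needed here.) Plugging this upper bound on $f(0)$ into the integrated expression should produce a clean cancellation of the $(2\pi)$ factors, leaving precisely $L^{(\alpha-1)d/2}$, so that
\[
D_\alpha(Q\|R) = \frac{1}{\alpha-1}\ln \int \frac{q(x)^\alpha}{r(x)^{\alpha-1}}\D{x} \le \frac{1}{\alpha-1}\ln L^{(\alpha-1)d/2} = \frac{d}{2}\ln L.
\]
The case $\alpha = 1$ follows by taking the limit (which equals the KL divergence), or by redoing the calculation with $\ln(q/r)$ in place of the $\alpha$-power integrand.

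There is no real obstacle here; the proof is essentially a bookkeeping exercise combining the two quadratic sandwich bounds on $f$ with the normalization constraint. The only subtlety is using the strong convexity bound in the right place (to control $f(0)$ via the normalization integral) and the smoothness bound in the other place (to kill the growing quadratic in the R\'enyi integrand), since swapping them would give a divergent integral. The fact that the bound comes out independent of $f(0)$ and of any further detail of $f$ beyond its two quadratic envelopes is a nice sanity check, and in particular the answer $\tfrac{d}{2}\ln L$ vanishes when $L = 1$, matching the fact that $R$ is then exactly $N(0, I_d)$ whenever $f$ is quadratic.
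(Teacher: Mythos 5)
Your proposal is correct and follows essentially the same route as the paper: you first derive $D_\alpha(N(0,\tfrac1L I_d)\,\|\,R) \le f(0) + \tfrac{d}{2}\ln\tfrac{L}{2\pi}$ using the smoothness envelope $f(x)\le f(0)+\tfrac{L}{2}\norm{x}_2^2$ (the paper imports this step as Lemma~4 of \citet{VempalaW19} rather than recomputing it), and then bound $f(0)\le \tfrac d2\ln(2\pi)$ via the normalization integral and the strong-convexity envelope, which is the same observation the paper phrases as ``the $1$-strongly convex $f$ maximizing $f(\bzero)$ corresponds to $R=N(0,I_d)$.''
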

\begin{proof}
This follows from Lemma 4 in \cite{VempalaW19}, which gives the bound $D_\alpha(N(0, \frac{1}{L}I_d) || R) \leq f(\bzero) + \frac{d}{2} \ln \frac{L}{2\pi}$. We then note that the $1$-strongly convex, $L$-smooth $f$ with the maximum $f(\bzero)$ is given when $R$ is $N(0, I_d)$, which has density $R(x) = e^{-\left(\frac{d}{2} \ln (2 \pi) + \frac{1}{2}x^\top x \right)}$.
\end{proof}

It is well-known that $1$-strong convexity of $f$ implies that $p \propto e^{-f}$ satisfies log-Sobolev inequality with constant $1$ (see e.g. \cite{BakryE1985}). We then get:

\begin{lemma}[Theorem 2, \cite{VempalaW19}]\label{lemma:divergencedecay}
Fix any $f$ that is $1$-strong\-ly convex. Let $Q_t$ be the distribution arrived at by running overdamped Langevin dynamics using $f$ for continuous time $t$ from initial distribution $Q_0$. Then for the distribution $R$ satisfying $R(x) \propto e^{-f(x)}$ and any $\alpha \geq 1$:
$$D_\alpha(Q_t || R) \leq e^{-2t/\alpha} D_\alpha(Q_0|| R).$$
\end{lemma}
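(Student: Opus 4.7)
My plan is to recover the Vempala--Wibisono argument by tracking how the Rényi divergence decays along the Fokker--Planck evolution of the continuous overdamped Langevin dynamics, and then invoking the Bakry--Émery log-Sobolev inequality (LSI) for $R$. Since $f$ is $1$-strongly convex, $R \propto e^{-f}$ satisfies LSI with constant $1$ (as noted just before the lemma). Let $q_t$ denote the density of $Q_t$, let $r_t = q_t/R$, and define
$$F_\alpha(t) := \int R(x)\, r_t(x)^\alpha \,\D{x} = \ex_R[r_t^\alpha],$$
so that $D_\alpha(Q_t\|R) = \tfrac{1}{\alpha-1}\log F_\alpha(t)$. The density $q_t$ evolves by the Fokker--Planck equation $\partial_t q_t = \nabla\!\cdot\!\bigl(q_t \nabla \log(q_t/R)\bigr)$, and $R$ is stationary.

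Next I would differentiate $F_\alpha$ in time and integrate by parts. A straightforward calculation gives
$$\frac{d}{dt} F_\alpha(t) = -\alpha(\alpha-1)\int R\, r_t^{\alpha-2}\,\lVert \nabla r_t\rVert^2 \D{x} = -\frac{4(\alpha-1)}{\alpha}\,\mathcal{I}_\alpha(t),$$
where $\mathcal{I}_\alpha(t) := \int R\, \lVert \nabla r_t^{\alpha/2}\rVert^2 \D{x}$ is the ``Rényi Fisher information.'' Thus
$$\frac{d}{dt} D_\alpha(Q_t\|R) = \frac{1}{\alpha-1}\cdot \frac{\dot F_\alpha}{F_\alpha} = -\frac{4}{\alpha}\cdot\frac{\mathcal{I}_\alpha}{F_\alpha},$$
and the problem reduces to lower-bounding $\mathcal{I}_\alpha / F_\alpha$ in terms of $D_\alpha$.

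The main obstacle is exactly this lower bound: LSI with constant $1$ applied to $g = r_t^{\alpha/2}$ yields
$$\mathcal{I}_\alpha(t) \geq \tfrac{1}{2}\bigl(\ex_R[r_t^\alpha \log r_t^\alpha] - F_\alpha\log F_\alpha\bigr) = \tfrac{1}{2}\bigl(\alpha\,\ex_R[r_t^\alpha \log r_t] - F_\alpha\log F_\alpha\bigr),$$
and the non-obvious step is converting $\ex_R[r_t^\alpha \log r_t]$ into something involving $\log F_\alpha$. To do this I would consider $H(\beta) := \log \ex_R[r_t^\beta]$; this is convex in $\beta$ (as the log of a moment) with $H(1)=0$ and $H(\alpha) = \log F_\alpha$. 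Convexity gives $H'(\alpha) \geq (H(\alpha)-H(1))/(\alpha-1)$, i.e.
$$\ex_R[r_t^\alpha \log r_t] \;\geq\; \frac{F_\alpha \log F_\alpha}{\alpha - 1}.$$
Plugging back into the LSI bound,
$$\mathcal{I}_\alpha \;\geq\; \frac{F_\alpha \log F_\alpha}{2}\left(\frac{\alpha}{\alpha-1}-1\right) = \frac{F_\alpha \log F_\alpha}{2(\alpha-1)}.$$

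Finally I would combine: $\frac{d}{dt} D_\alpha \leq -\frac{4}{\alpha}\cdot \frac{\log F_\alpha}{2(\alpha-1)} = -\frac{2}{\alpha} D_\alpha$. Grönwall's inequality then yields $D_\alpha(Q_t\|R) \leq e^{-2t/\alpha}D_\alpha(Q_0\|R)$, as claimed. The case $\alpha=1$ (KL divergence) follows from the standard de Bruijn identity and LSI, or by taking the limit $\alpha\to 1^+$. I expect the technical subtleties to be mild regularity issues in justifying the integration by parts and differentiation under the integral, which are routine under the strong log-concavity assumption on $f$.
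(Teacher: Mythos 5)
Your proof is correct and is essentially the argument behind Theorem 2 of Vempala and Wibisono, which is exactly what the paper invokes here (it cites the result rather than reproving it). The key steps --- the Fokker--Planck computation giving $\frac{d}{dt}D_\alpha = -\tfrac{4}{\alpha}\,\mathcal{I}_\alpha/F_\alpha$, the LSI applied to $r_t^{\alpha/2}$, and the convexity of $\beta \mapsto \log \ex_R[r_t^\beta]$ to convert the entropy term into $F_\alpha \log F_\alpha/(\alpha-1)$ --- all match the cited source's Lemmas 5 and 6.
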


\begin{proof}[Proof of Theorem~\ref{thm:mainthm}]
We will prove the bound for $\alpha \geq 3/2$ - the bound for $1 \leq \alpha < 3/2$ follows by just applying monotonicity to the bound for $\alpha = 3/2$, at the loss of a multiplicative constant on $\tau, \eta$, and the iteration complexity.

Let $R$ be the distribution arrived at by running continuous overdamped Langevin dynamics using $f$ for time $\tau$ from initial distribution $N(0, \frac{1}{L}I_d)$. $N(0, \frac{1}{L}I_d)$ satisfies \eqref{eq:startingbound}, so from Theorem~\ref{lemma:conditionaldivergence} we have $D_{2\alpha} (P || Q) \leq \eps / 3$. From Lemmas~\ref{lemma:startingdivergence} and~\ref{lemma:divergencedecay} we have $D_{2\alpha}(Q || R) \leq \eps / 3$. Then, we use the weak triangle inequality of R\'enyi divergence (Fact~\ref{fact:triangleineq}) with $p, q = 2$ to conclude that $D_\alpha(P || R) \leq \eps$.
\end{proof}
\subsection{Langevin Dynamics with Bounded Gradients}

With only a minor modification to the analysis of the strongly convex and smooth case, we can also give a discretization error bound when $f$ is $B$-Lipschitz instead of strongly convex (while still $L$-smooth). We have the following radius tail bound analogous to Lemma~\ref{lemma:radiustailbound}:

\begin{lemma}\label{lemma:radiustailbound-lipschitz}
For all $\eta \leq 1$ and any $B$-Lipschitz, $L$-smooth $f$, let $x_t$ be the random variable given by running the discretized overdamped Langevin dynamics starting from an arbitrary initial distribution for continuous time $t$. Then with probability $1 - \delta$ over $\{x_t: t \in [0, T\eta]\}$, for all $t \leq T\eta$ and for a sufficiently large constant $c$:

$$\norm{x_t - x_{\lfloor t / \eta \rfloor \eta}}_2 \leq c (B + \sqrt{d} + \sqrt{\ln(T/\delta)})\sqrt{\eta}).$$

Similarly, if $x_t'$ is the random variable given by running continuous overdamped Langevin dynamics starting from an arbitrary initial distribution for time $t$, with probability $1-\delta$ over $x_t'$ for all $t \leq T\eta$:

$$ \norm{x'_t - x'_{\lfloor t / \eta \rfloor \eta}}_2 \leq c (B + \sqrt{d} + \sqrt{\ln(T/\delta)})\sqrt{\eta}).$$
\end{lemma}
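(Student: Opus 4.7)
The plan is to mimic the proof of Lemma~\ref{lemma:radiustailbound}, but replace the initial-position/gradient-norm argument used in the strongly convex case with the much simpler global gradient bound supplied by Lipschitzness. Within a step $[i\eta, (i+1)\eta]$, the discretization is exactly
$$x_t - x_{i\eta} = -(t - i\eta)\, \grad f(x_{i\eta}) + \sqrt{2}\bigl(B_t - B_{i\eta}\bigr),$$
so by the triangle inequality and $\norm{\grad f(x_{i\eta})}_2 \leq B$,
$$\norm{x_t - x_{i\eta}}_2 \;\leq\; B\eta + \sqrt{2}\,\norm{B_t - B_{i\eta}}_2 \;\leq\; B\sqrt{\eta} + \sqrt{2}\,\norm{B_t - B_{i\eta}}_2,$$
using $\eta \leq 1$. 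Thus the entire task reduces to controlling $\sup_{t \in [i\eta,(i+1)\eta]}\norm{B_t - B_{i\eta}}_2$ uniformly in $i$.

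First I would establish a one-step Brownian tail bound of the form
$$\Pr\Bigl[\sup_{s \in [0, \eta]}\norm{B_s}_2 \;\geq\; c'\bigl(\sqrt{d} + \sqrt{\ln(T/\delta)}\bigr)\sqrt{\eta}\Bigr] \;\leq\; \delta/T.$$
This is a standard fact: rescaling $B_{s\eta}/\sqrt{\eta}$ to a standard Brownian path on $[0,1]$, the supremum of its $\ell_2$-norm has expectation $O(\sqrt{d})$ (e.g.\ via Doob's $L^2$ maximal inequality applied to the submartingale $\norm{B_s}_2^2$) and Gaussian concentration on Wiener space gives the extra $\sqrt{\ln(T/\delta)}$ deviation term. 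A union bound over the $T$ steps gives that, except on a $\delta$-probability event, $\sup_{t \leq T\eta}\norm{B_t - B_{\lfloor t/\eta\rfloor\eta}}_2 \leq c'(\sqrt{d} + \sqrt{\ln(T/\delta)})\sqrt{\eta}$. Combining with the drift bound and absorbing constants into $c$ gives the desired inequality for the discrete chain.

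For the continuous chain, the only change is that the drift is $\int_{i\eta}^{t} \grad f(x'_s)\,\D{s}$ rather than $(t - i\eta)\grad f(x'_{i\eta})$, but Lipschitzness still gives $\norm{\grad f(x'_s)}_2 \leq B$ pointwise, so $\norm{\int_{i\eta}^t \grad f(x'_s)\D{s}}_2 \leq B\eta \leq B\sqrt{\eta}$ by the same calculation. The Brownian part is identical, so the same tail bound applies verbatim.

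There is no real obstacle here beyond the Brownian maximal inequality, which is standard; in fact the Lipschitz setting is strictly simpler than the strongly convex setting of Lemma~\ref{lemma:radiustailbound}, where one must first control $\norm{x_{i\eta}}_2$ (using the initial-distribution hypothesis \eqref{eq:startingbound} plus a contraction-style argument along the chain) in order to convert $L$-smoothness into a pointwise gradient bound. Since $B$-Lipschitzness gives that pointwise gradient bound directly and globally, no assumption on the initial distribution is required, which is why the statement allows an arbitrary starting distribution.
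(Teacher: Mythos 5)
Your proposal is correct and is essentially the paper's own proof: bound the drift over each step of length $\eta$ by $B\eta \leq B\sqrt{\eta}$ using Lipschitzness (for both the discrete and continuous chains), control the Brownian increment over each step via the multivariate Brownian motion maximal tail bound already stated in Section 7, and finish with a union bound over the $T$ steps and the triangle inequality. The only cosmetic difference is that you sketch a derivation of the Brownian maximal bound (Doob plus Gaussian concentration) where the paper simply invokes its stated fact.
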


The proof is deferred to Section~\ref{section:tailbounds}. This gives:

\begin{theorem}\label{lemma:conditionaldivergence-lipschitz}
For any $B$-Lipschitz, $L$-smooth function $f$, let $P$ be the distribution of states for discretized overdamped Langevin dynamics with step size $\eta$ and $Q$ be the distribution of states for continuous overdamped Langevin dynamics, both run from arbitrary initial distribution for continuous time $\tau$ that is a multiple of $\eta$. Then for $\alpha > 1$, $\eps > 0$, if $\eta = \tilde{O}(\frac{1}{\tau L^4 \ln^2 \alpha} \cdot \frac{\eps^2}{B^2 + d})$ we have $D_\alpha(P || Q), D_\alpha(Q || P)  \leq \eps$.
\end{theorem}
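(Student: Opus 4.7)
The plan is to follow the proof of Theorem \ref{lemma:conditionaldivergence} essentially verbatim, replacing every invocation of the radius tail bound Lemma \ref{lemma:radiustailbound} with Lemma \ref{lemma:radiustailbound-lipschitz}, and dropping the assumption \eqref{eq:startingbound} on the initial distribution (which is no longer needed since the Lipschitz tail bound is unconditional on the starting point). First, I would instantiate Corollary \ref{cor:smallstepconvergence} with the radius $r = c(B + \sqrt{d} + \sqrt{\ln(T/\delta_1)} + \sqrt{\ln(T/\delta_2)})\sqrt{\eta}$ coming from Lemma \ref{lemma:radiustailbound-lipschitz} applied to both chains. Since $r^2 = O((B^2 + d + \ln(T/\delta_1) + \ln(T/\delta_2))\eta)$, the Corollary yields a conditional bound of the form
\[
D_{\alpha'}(X_{T\eta} \| X'_{T\eta}) \leq O\!\left(\tau \alpha'^2 L^2 (B^2 + d + \ln(\tau/\eta\delta_1) + \ln(\tau/\eta\delta_2))\,\eta\right),
\]
where $X_{T\eta}, X'_{T\eta}$ are the discrete/continuous processes truncated to $\bot$ when the bounded-movement event $\mathcal{E}_r$ fails, and I use $T = \tau/\eta$.

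Next, I would rewrite the integrated divergence as $c_1(\alpha')/(\delta_1^{c_2(\alpha')}\delta_2^{c_3(\alpha')})$ exactly as in the original proof, so that the dependence on $\delta_1,\delta_2$ is a pure power. The conditioning on the continuous chain is then removed by comparing $Q(x\mid\mathcal{E}_{\delta_2})$ with $X'_{T\eta}(x)$, and applying Lemma \ref{lemma:expectationfromconditional} with $Y = (X_{T\eta}(x)/Q(x))^{\alpha'/2}$, $\theta = 2$, $\gamma = c_3(\alpha')$, to bound $\mathbb{E}_{x \sim Q}[(X_{T\eta}(x)/Q(x))^{\alpha'/2}]$ unconditionally in $\delta_2$. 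The conditioning on the discrete chain is then removed by the same uniform-variable trick used in the original proof (writing $X_{T\eta}(x)^{\alpha'/2-1}$ as an integral of $y^{\alpha'/2-1} \mathbb{I}[y \leq X_{T\eta}(x)]$ with $y \sim \mathrm{Unif}(0, P(x))$), followed by another invocation of Lemma \ref{lemma:expectationfromconditional}. This step needs the constants $c_2(\alpha') = c_3(\alpha') < 1$ and $c_1(\alpha') < 2$, which will be guaranteed by the smallness condition on $\eta$.

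At this point I have a bound on $\mathbb{E}_{x\sim Q}[(P(x)/Q(x))^{\alpha'/4 + 1/2}]$ of order $\sqrt{\alpha'}$. A final application of Jensen's inequality converts this moderate-$\alpha'$ moment bound into the desired $\alpha$-Rényi divergence bound of $\eps$, with $\alpha'$ chosen polylogarithmically larger than $\alpha$ (specifically $\alpha' = \Theta(\alpha \ln \alpha \ln(1/\eps)/((\alpha-1)\eps))$, as in the strongly convex proof). Substituting back, the requirement $c_1(\alpha') < 2$ becomes $\tau \alpha'^2 L^2 (B^2 + d) \eta \lesssim 1$ (up to logarithmic factors), which is satisfied by the step-size choice $\eta = \tilde{O}(\eps^2/(\tau L^4 (\ln^2\alpha)(B^2+d)))$ stated in the theorem. (The extra factor of $L^2$ in the step size relative to what the moment chain strictly demands is absorbed into the $\tilde{O}$ and keeps the bound parallel in form to the strongly convex case.)

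I do not expect any genuine obstacle here, since the entire machinery from the proof of Theorem \ref{lemma:conditionaldivergence} is agnostic to whether the gradient tail bound comes from strong convexity plus a Gaussian start or from the global Lipschitz hypothesis; the only thing one exploits is smoothness (to bound $\|\nabla f(x_{j\eta/k}) - \nabla f(x_{\lfloor j/k\rfloor \eta})\|$) together with a high-probability radius bound on within-step movement. The closest thing to a pitfall is bookkeeping: one must ensure that $c_1,c_2,c_3$ remain small enough for Lemma \ref{lemma:expectationfromconditional} to be applicable at the chosen $\alpha'$, and one must verify that dropping the hypothesis \eqref{eq:startingbound} does not break any step. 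The latter is immediate because Lemma \ref{lemma:radiustailbound-lipschitz} holds for an arbitrary initial distribution.
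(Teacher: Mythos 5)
Your proposal is correct and matches the paper's proof exactly: the paper likewise states that Theorem~\ref{lemma:conditionaldivergence-lipschitz} follows identically to Theorem~\ref{lemma:conditionaldivergence} with Lemma~\ref{lemma:radiustailbound-lipschitz} substituted for Lemma~\ref{lemma:radiustailbound}. Your additional observations (that the initial-distribution hypothesis \eqref{eq:startingbound} can be dropped, and that the analysis strictly only needs $L^2$ rather than $L^4$ in the step size) are accurate.
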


The proof of Theorem~\ref{lemma:conditionaldivergence-lipschitz} follows identically to Theorem~\ref{lemma:conditionaldivergence}, except using Lemma~\ref{lemma:radiustailbound-lipschitz} instead of Lemma~\ref{lemma:radiustailbound}.
\section{Making The Bound Bi-Directional}\label{section:bidirectional}
In this section, we show that with slight modifications to the proof of Theorem~\ref{thm:mainthm}, $D_\alpha(P||R)$ and $D_\alpha(R||P)$ can be simultaneously bounded, proving Theorem~\ref{thm:maindpthm}. 

Note that Theorem~\ref{lemma:conditionaldivergence} provides bounds on both $D_\alpha(P||Q)$ and $D_\alpha(Q||P)$ for $Q$ that is the finite time distribution of the continuous chain. So, we just need to show that the following claim holds: for an appropriate choice of initial distribution, $D_\alpha(Q||R), D_\alpha(R||Q)$  are both small after sufficiently many iterations. To show this claim, we use the following results, all of which are slight modifications of the results in \cite{VempalaW19}. For completeness, we provide the proofs of these claims at the end of the section. We first need a lemma analogous to Lemma~\ref{lemma:divergencedecay} to show that $D_\alpha(R||Q)$ decays exponentially:

\begin{lemma}\label{lemma:divergencedecay2}
Fix any $f$ that is $1$-strongly convex. Let $Q_t$ be the distribution arrived at by running overdamped Langevin dynamics using $f$ for continuous time $t$ from initial distribution $Q_0$ such that $-\log Q_0$ is 1-strongly convex. Then for the distribution $R$ satisfying $R(x) \propto e^{-f(x)}$, any $\alpha > 1$, and any $t$:
$$D_\alpha(R || Q_t) \leq e^{-t/\alpha} D_\alpha(R|| Q_0).$$
\end{lemma}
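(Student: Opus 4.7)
The plan is to adapt the entropy method of \citet{VempalaW19} underlying Lemma~\ref{lemma:divergencedecay} to bound $D_\alpha(R\|Q_t)$ instead of $D_\alpha(Q_t\|R)$. The critical difference is that the usual analysis requires a log-Sobolev inequality (LSI) for the \emph{fixed} target of the Langevin flow; for the reversed divergence we will need LSI for the \emph{evolving} distribution $Q_t$, which is exactly why the hypothesis imposes $1$-strong log-concavity on $Q_0$.

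The first step is to show that $Q_t$ remains $1$-strongly log-concave for every $t \geq 0$. Setting $V_t = -\log Q_t$, the Fokker--Planck equation $\partial_t Q_t = \nabla\cdot(Q_t(\nabla f - \nabla V_t))$ translates into
\[
\partial_t V_t \;=\; \Delta V_t - \Delta f + \nabla V_t\cdot\nabla f - \|\nabla V_t\|_2^2 .
\]
A standard Hessian comparison (or synchronous coupling argument for Langevin SDEs with strongly convex drift) shows that if $\nabla^2 V_0 \succeq I$ and $\nabla^2 f \succeq I$, then $\nabla^2 V_t \succeq I$ is preserved along the flow. By Bakry--\'Emery, $Q_t$ then satisfies LSI with constant at least $1$ for all $t\geq 0$.

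The second step is to differentiate $F_\alpha := \int R^\alpha Q_t^{1-\alpha}\,dx$ in time. Two integrations by parts, using $R \propto e^{-f}$, give
\[
\partial_t F_\alpha \;=\; -\alpha(\alpha-1)\!\int R^\alpha Q_t^{1-\alpha}\|\nabla \log(Q_t/R)\|_2^2\,dx \;=\; -\tfrac{4(\alpha-1)}{\alpha}\!\int Q_t\bigl\|\nabla h^{\alpha/2}\bigr\|_2^2\,dx,
\]
where $h := R/Q_t$. Applying the LSI for $Q_t$ to the function $g = h^{\alpha/2}$ yields $\int Q_t\|\nabla h^{\alpha/2}\|_2^2 \geq \tfrac12\,\mathrm{Ent}_{Q_t}(h^\alpha)$. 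Next, log-convexity of $\beta \mapsto \log F_\beta$ (a standard property of moments) together with $F_1 = 1$ gives $\mathbb{E}_{Q_t}[h^\alpha \log h] \geq F_\alpha\, D_\alpha(R\|Q_t)$, which rearranges to $\mathrm{Ent}_{Q_t}(h^\alpha) \geq F_\alpha\, D_\alpha(R\|Q_t)$. Combining these estimates with $\partial_t D_\alpha(R\|Q_t) = \partial_t F_\alpha/((\alpha-1)F_\alpha)$ gives $\partial_t D_\alpha(R\|Q_t) \leq -(2/\alpha)\,D_\alpha(R\|Q_t)$, and Gronwall yields the desired decay (in fact with rate $2/\alpha$, which is at least as strong as the stated $1/\alpha$).

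I expect the main obstacle to be the first step, propagating $1$-strong log-concavity of $Q_t$ forward in time. Ordinary log-concavity is preserved under convolution via Pr\'ekopa--Leindler, but strong log-concavity can deteriorate along semigroups; the matching hypothesis that both $Q_0$ and $e^{-f}$ are $1$-strongly log-concave is precisely what prevents this deterioration and yields the $1$-LSI constant needed to close the loop. Once this preservation is in hand, the rest of the argument is a direct transcription of the \citet{VempalaW19} entropy method with $R$ and $Q_t$ exchanging roles.
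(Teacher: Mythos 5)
Your overall strategy — establish a log-Sobolev inequality for the time-marginals $Q_t$ and then run the Vempala--Wibisono entropy method with the roles of $Q_t$ and $R$ swapped — is exactly the paper's strategy, and your step 2 (the time-derivative of $F_\alpha$, the application of LSI to $g=h^{\alpha/2}$, and the use of log-convexity of $\beta\mapsto\log F_\beta$ to convert $\mathrm{Ent}_{Q_t}(h^\alpha)$ into $F_\alpha\,D_\alpha(R\|Q_t)$) is correct and is essentially the paper's invocation, via Fact~\ref{fact:swap}, of Lemmas 5 and 6 of \citet{VempalaW19}.

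The gap is in step 1. You assert that $1$-strong log-concavity of $Q_t$ is \emph{preserved} along the flow, and justify it by a one-line appeal to ``a standard Hessian comparison or synchronous coupling argument.'' Neither works as stated. Synchronous coupling gives Wasserstein-$2$ contraction between evolutions of two different initial laws; it does not control $\nabla^2(-\log Q_t)$. And if you actually write out the evolution of $V_t = -\log Q_t$ under Fokker--Planck, the Hessian $\nabla^2 V_t$ satisfies a PDE with third-derivative terms $(\nabla^3 V_t)(\nabla f)$ and $(\nabla^3 f)(\nabla V_t)$ that have no sign and do not obviously admit a maximum-principle comparison keeping $\nabla^2 V_t\succeq I$. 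Put differently: pushforward under a Lipschitz map (which is what the contracting flow gives you) preserves an LSI constant but does \emph{not} in general preserve strong log-concavity, so the contraction estimate you cite does not yield the Hessian bound you need. The paper sidesteps this entirely and proves only the weaker (but sufficient) statement that the LSI constant of $Q_t$ stays $\geq 1/2$: it decomposes one discrete Langevin step into a $(1-\eta/2)$-Lipschitz gradient map (Lemma 3.7 of \citet{HardtRS16}) followed by convolution with $N(0,2\eta I)$, tracks how each of those operations affects the LSI constant via Lemma 16 of \citet{VempalaW19} and \citet{WangW16}, and takes $\eta\to 0$. That gives the fixed point $1/2$ and hence the stated rate $e^{-t/\alpha}$; your unproven claim of constant $1$ would give $e^{-2t/\alpha}$, which is \emph{not} what the lemma asserts — a further sign that the authors could not establish it. To repair your proof, replace step 1 with the paper's Lipschitz-plus-Gaussian LSI-constant bookkeeping (yielding constant $1/2$), and your steps 2--3 then go through verbatim with $\kappa=1/2$.
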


This proof follows similarly to Lemma 2 in~\cite{VempalaW19}. If $D_\alpha(R||Q_0)$ and $D_(Q_0||R)$ were both initially not too large, Lemma~\ref{lemma:divergencedecay2} along with Lemma~\ref{lemma:divergencedecay} would be enough to arrive at Theorem~\ref{thm:maindpthm}. However, for any initial distribution $Q_0$, there is some $R$ satisfying the conditions of Lemma~\ref{lemma:divergencedecay2} such that for sufficiently large $\alpha$ one of $D_\alpha(R||Q_0)$ and $D_\alpha(Q_0||R)$ is infinite. The following hypercontractivity property of the Langevin dynamics gives that as long as $D_\alpha(Q_0||R)$ is finite for some small $\alpha$, it will become finite for larger $\alpha$ after a short amount of time:

\begin{lemma}[Lemma 14, \cite{VempalaW19}]\label{lemma:hypercontractivity}
Fix any $f$ that is $1$-strong\-ly convex. Let $Q_t$ be the distribution arrived at by running overdamped Langevin dynamics using $f$ for continuous time $t$ from initial distribution $Q_0$. Fix any $\alpha_0 > 1$, and let $\alpha_t = 1 + e^{2t}(\alpha_0 - 1)$. Then for the distribution $R$ satisfying $R(x) \propto e^{-f(x)}$:
$$D_{\alpha_t}(Q_t || R) \leq \frac{1 - 1/\alpha_0}{1-1/\alpha_t}  D_{\alpha_0}(Q_0|| R).$$
\end{lemma}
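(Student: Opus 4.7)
\textbf{Proof plan for Lemma~\ref{lemma:hypercontractivity}.} The natural approach is to differentiate the R\'enyi divergence along the continuous Langevin flow and exploit the log-Sobolev inequality (LSI) that $R$ satisfies with constant $1$ by Bakry--\'Emery, since $f$ is $1$-strongly convex. Let $h_t = Q_t / R$ be the density ratio and set
\[
G_\alpha(t) \;=\; \ex_{x \sim R}\!\left[h_t(x)^\alpha\right], \qquad D_\alpha(Q_t \| R) \;=\; \tfrac{1}{\alpha - 1}\ln G_\alpha(t).
\]
The Fokker--Planck equation for overdamped Langevin dynamics can be written as $\partial_t Q_t = \grad \cdot (R\, \grad h_t)$, from which $\partial_t h_t = R^{-1} \grad \cdot (R\, \grad h_t)$. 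Integration by parts then yields
\[
\partial_t G_\alpha(t) \;=\; -\alpha(\alpha-1)\ex_{R}\!\left[h_t^{\alpha-2}\norm{\grad h_t}^2\right] \;=\; -\tfrac{4(\alpha-1)}{\alpha}\ex_R\!\left[\norm{\grad(h_t^{\alpha/2})}^2\right].
\]

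Applying LSI with constant $1$ to the function $g = h_t^{\alpha/2}$ gives $2\,\ex_R[\norm{\grad g}^2] \geq \mathrm{Ent}_R[g^2] = \mathrm{Ent}_R[h_t^\alpha]$, so
\[
\partial_t G_\alpha(t) \;\leq\; -\tfrac{2(\alpha-1)}{\alpha}\,\mathrm{Ent}_R[h_t^\alpha].
\]
The next step is to allow $\alpha$ to vary with $t$ and compute the total derivative of $\ln G_{\alpha_t}(t)$. Using $\partial_\alpha G_\alpha(t) = \ex_R[h_t^\alpha \ln h_t] = \tfrac{1}{\alpha}\bigl(\mathrm{Ent}_R[h_t^\alpha] + G_\alpha(t)\ln G_\alpha(t)\bigr)$, the chain rule gives
\[
\tfrac{d}{dt}\ln G_{\alpha_t}(t) \;\leq\; \tfrac{\dot\alpha_t}{\alpha_t G_{\alpha_t}(t)}\bigl(\mathrm{Ent}_R[h_t^{\alpha_t}] + G_{\alpha_t}(t)\ln G_{\alpha_t}(t)\bigr) - \tfrac{2(\alpha_t - 1)}{\alpha_t G_{\alpha_t}(t)}\mathrm{Ent}_R[h_t^{\alpha_t}].
\]
The choice $\dot\alpha_t = 2(\alpha_t - 1)$, i.e.\ $\alpha_t = 1 + e^{2t}(\alpha_0 - 1)$, is exactly what makes the entropy terms cancel, leaving
\[
\tfrac{d}{dt}\ln G_{\alpha_t}(t) \;\leq\; \tfrac{\dot\alpha_t}{\alpha_t}\ln G_{\alpha_t}(t) \;=\; \tfrac{d}{dt}(\ln \alpha_t)\cdot \ln G_{\alpha_t}(t).
\]

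Writing $F(t) = \ln G_{\alpha_t}(t) = (\alpha_t - 1)\, D_{\alpha_t}(Q_t \| R)$, this is the Gr\"onwall-style inequality $\tfrac{d}{dt}\ln F(t) \leq \tfrac{d}{dt}\ln \alpha_t$. Integrating from $0$ to $t$ yields $F(t) \leq F(0)\cdot \alpha_t / \alpha_0$, i.e.\
\[
(\alpha_t - 1)\, D_{\alpha_t}(Q_t \| R) \;\leq\; \tfrac{\alpha_t}{\alpha_0}(\alpha_0 - 1)\, D_{\alpha_0}(Q_0 \| R),
\]
which rearranges to the claimed bound $D_{\alpha_t}(Q_t \| R) \leq \tfrac{1 - 1/\alpha_0}{1 - 1/\alpha_t}\, D_{\alpha_0}(Q_0 \| R)$.

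The only real obstacle is bookkeeping: one must justify the integration by parts (standard under mild decay/regularity assumptions on $Q_t$, which follow from smoothness of $f$ and the Gaussian smoothing of the flow), and one must verify that the variable-$\alpha$ chain-rule computation and the cancellation identity $\dot\alpha_t = 2(\alpha_t - 1)$ are the right choice. Everything else reduces to LSI plus a one-line ODE argument; strong log-concavity of $Q_0$ is not actually used in the bound (only nonnegativity of $D_{\alpha_0}(Q_0\|R)$ is needed), so the conclusion holds under the stated hypotheses.
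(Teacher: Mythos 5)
Your proposal is correct and is essentially a faithful reconstruction of the argument behind the cited result: the paper itself does not prove this lemma but imports it directly as Lemma 14 of \cite{VempalaW19}, whose proof is exactly your computation (differentiate $\ln \ex_R[h_t^{\alpha_t}]$ along the flow, apply the LSI of $R$ to $h_t^{\alpha_t/2}$, and choose $\dot\alpha_t = 2(\alpha_t-1)$ so the entropy terms cancel, then integrate the resulting Gr\"onwall inequality). Your observation that strong log-concavity of $Q_0$ is not needed is also consistent with the lemma as stated.
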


Given this lemma, we can now settle for an initial distribution where $D_\alpha(R||Q_0)$ is not too large for all $\alpha$, and $D_\alpha(Q_0||R)$ is not too large for $\alpha$ slightly larger than 1. Lemma~\ref{lemma:hypercontractivity} then says that $D_\alpha(Q_0||R)$ will be eventually be not too large after time $O(\log \alpha)$, at which point we can apply Lemmas~\ref{lemma:divergencedecay} and~\ref{lemma:divergencedecay2}. We now just need to show that our choice of initial distribution $N(0, I_d)$ satisfies these conditions:

\begin{lemma}\label{lemma:privateinitialdistribution}
Let $Q_0 = N(0, I_d)$. If $R(x) = e^{-f(x)}$ is a probability distribution over $\mathbb{R}^d$ with stationary point $0$ and $f$ is 1-strongly convex and $L$-smooth, then for all $\alpha \geq 1$ we have:
\[D_\alpha(R || Q_0) \leq d \log L.\]
In addition:
\[D_{1+1/L}(Q_0 || R) \leq \frac{dL \log L}{2}.\]
\end{lemma}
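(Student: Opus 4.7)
The plan is to bound both Rényi divergences by working directly with the ratio of the densities, using the strong convexity and smoothness of $f$ to sandwich it on either side, together with matching estimates on the normalizing constant $Z=\int e^{-f(x)}\D{x}$.

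First I would record the two elementary bounds on $Z$. Since $f$ has its minimum at $0$, strong convexity gives $f(x)\ge f(0)+\tfrac12\norm{x}_2^2$ and smoothness gives $f(x)\le f(0)+\tfrac{L}{2}\norm{x}_2^2$. Integrating $e^{-f}$ against these two Gaussian envelopes yields
\[
e^{-f(0)}(2\pi/L)^{d/2}\ \le\ Z\ \le\ e^{-f(0)}(2\pi)^{d/2}.
\]
These two inequalities are what make the two halves of the lemma work.

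For the first bound, I would prove the stronger statement $D_\infty(R\|Q_0)\le \tfrac{d}{2}\log L$ and invoke monotonicity of Rényi divergence to get it for all $\alpha\ge 1$. Concretely,
\[
\frac{R(x)}{Q_0(x)}\ =\ \frac{(2\pi)^{d/2}}{Z}\,e^{\,\norm{x}_2^2/2 - f(x)}\ \le\ \frac{(2\pi)^{d/2}}{Z}\,e^{-f(0)}\ \le\ L^{d/2},
\]
where the first inequality uses strong convexity and the second uses the lower bound on $Z$. Taking the essential supremum gives $D_\infty(R\|Q_0)\le \tfrac{d}{2}\log L\le d\log L$ (since $L\ge 1$).

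For the second bound, the exponent $\alpha=1+1/L$ is chosen exactly so that the relevant Gaussian moment integral converges. Using smoothness and the upper bound on $Z$,
\[
\frac{Q_0(x)}{R(x)}\ =\ \frac{Z}{(2\pi)^{d/2}}\,e^{\,f(x)-\norm{x}_2^2/2}\ \le\ e^{(L-1)\norm{x}_2^2/2}.
\]
I would then compute the moment directly under $x\sim Q_0=N(0,I_d)$:
\[
\ex_{x\sim Q_0}\!\left[\left(\frac{Q_0(x)}{R(x)}\right)^{1/L}\right]\ \le\ \int (2\pi)^{-d/2}e^{-\norm{x}_2^2/2}\,e^{(L-1)\norm{x}_2^2/(2L)}\,\D{x}\ =\ \int (2\pi)^{-d/2}e^{-\norm{x}_2^2/(2L)}\,\D{x}\ =\ L^{d/2}.
\]
Applying $D_{1+1/L}(Q_0\|R)=L\log \ex_{x\sim Q_0}[(Q_0/R)^{1/L}]$ yields $D_{1+1/L}(Q_0\|R)\le L\cdot\tfrac{d}{2}\log L=\tfrac{dL\log L}{2}$.

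The only real subtlety is the choice of the order $\alpha=1+1/L$: any strictly larger exponent would send the Gaussian integral to $+\infty$ when $L$ is close to $1$ (the envelope $e^{(L-1)\norm{x}_2^2/2}$ is already nearly the reciprocal of the $Q_0$ density when $L$ is large), so the lemma is essentially tight and there is no wiggle room to push $\alpha$ further without a different argument. Everything else reduces to the two envelope inequalities and a one-line Gaussian integral.
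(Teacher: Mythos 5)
Your proof is correct and follows essentially the same route as the paper's: both halves reduce to the strong-convexity/smoothness envelopes on $f$ together with the extremal bounds on the normalizing constant (equivalently, on $f(\bzero)$), followed by a Gaussian integral. The one small difference is that for the first bound you pass through $D_\infty(R\|Q_0)\le\tfrac{d}{2}\log L$ and invoke monotonicity, which is slightly cleaner and stronger than the paper's direct computation of the $\alpha$-R\'enyi integral (which yields $\tfrac{\alpha}{\alpha-1}\cdot\tfrac{d}{2}\log L$ and needs a separate monotonicity step for $\alpha<2$).
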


Putting it all together, we can now prove Theorem~\ref{thm:maindpthm}.
\begin{proof}[Proof of Theorem~\ref{thm:maindpthm}]
Let $Q_t$ be the distribution of the continuous overdamped Langevin dynamics using $f$ run from initial distribution $N(0, I_d)$ for time $t$. Assume without loss of generality that $\alpha \geq 2$, since if $\alpha \leq 2$ we can use monotonicity of R\'enyi divergences to bound e.g. $D_\alpha(P||R)$ by $D_2(P||R)$. 

If $\tau$ is at least a sufficiently large constant times $\alpha \ln \frac{d \ln L}{\epsilon}$, Lemma~\ref{lemma:privateinitialdistribution} and Lemma~\ref{lemma:divergencedecay2} give that $D_{2\alpha}(R||Q_\tau) \leq \epsilon / 3$. Theorem~\ref{lemma:conditionaldivergence} gives that $D_{2\alpha}(Q_\tau||P) \leq \epsilon / 3$. Fact~\ref{fact:triangleineq} with $p, q = 2$ gives that $D_\alpha(R||P) \leq \epsilon$.

 Lemma~\ref{lemma:hypercontractivity} and Lemma~\ref{lemma:privateinitialdistribution} give that at time $t = \frac{1}{2} \log ((2\alpha - 1)L)$, $D_{2\alpha}(Q_t || R) \leq d \log L$. Then Lemma~\ref{lemma:divergencedecay} gives that, $D_{2\alpha}(Q_\tau || R) \leq \epsilon / 3$.  Theorem~\ref{lemma:conditionaldivergence} gives that $D_{2\alpha}(P||Q_\tau) \leq \epsilon / 3$. Fact~\ref{fact:triangleineq} with $p, q = 2$ again gives that $D_\alpha(P||R) \leq \epsilon$.
\end{proof}

\subsection{Proof of Lemma~\ref{lemma:divergencedecay2}}

To prove Lemma~\ref{lemma:divergencedecay2}, we modify the proofs of Lemma 4 and 5 of~\cite{VempalaW19}. To describe the modifications, we reintroduce the following definitions from that paper:

\begin{definition}
We say that a distribution $Q$ has LSI constant $\kappa$ if for all smooth functions $g:\mathbb{R}^n \rightarrow \mathbb{R}$ for which $\ex_{x \sim Q}[g(x)^2] < \infty$:

\[\ex_{x \sim Q}\left[g(x)^2 \log\left(g(x)^2\right)\right] - \ex_{x \sim Q}\left[g(x)^2\right] \log\left(\ex_{x \sim Q}\left[g(x)^2\right]\right) \leq \frac{2}{\kappa} \ex_{x \sim Q}\left[\norm{\grad{g(x)}}^2\right].\]
\end{definition}

\begin{definition}
We define for $\alpha \neq 0, 1$:

$$F_\alpha(Q||R) = \ex_{x \sim R}\left[ \frac{Q(x)^\alpha}{R(x)^{\alpha}}\right],$$
$$G_\alpha(Q||R) = \ex_{x \sim R}\left[\frac{Q(x)^\alpha}{R(x)^{\alpha}} \norm{\grad \log \frac{Q(x)}{R(x)}}_2^2\right] = \frac{4}{\alpha^2} \ex_{x \sim R} \left[\norm{\grad \left(\frac{Q(x)}{R(x)}\right)^{\alpha/2}}_2^2\right].$$

For $\alpha = 0, 1$ these quantities are defined as their limit as $\alpha$ goes to $0, 1$ respectively.
\end{definition}
Unlike~\cite{VempalaW19}, we extend this definition to negative values of $\alpha$, which allows us to swap the arguments $Q, R$:

\begin{fact}\label{fact:swap}
$F_{1-\alpha}(Q||R) = F_{\alpha}(R||Q), G_{1 - \alpha}(Q||R) = G_{\alpha}(R||Q).$ 
We also recall that $D_{1-\alpha}(Q||R) = \frac{1-\alpha}{\alpha} D_{\alpha}(R||Q)$.
\end{fact}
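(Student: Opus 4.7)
The plan is to verify each of the three identities in Fact~\ref{fact:swap} by direct algebraic manipulation, relying only on rewriting the defining integrals. First, I would rewrite $F_\alpha(Q||R)$ in its symmetric form by absorbing the sampling measure:
\[
F_\alpha(Q||R) \;=\; \ex_{x \sim R}\!\left[\frac{Q(x)^\alpha}{R(x)^\alpha}\right] \;=\; \int \frac{Q(x)^\alpha}{R(x)^{\alpha-1}}\,\D{x}.
\]
Substituting $1-\alpha$ for $\alpha$ then yields $\int Q(x)^{1-\alpha} R(x)^\alpha\,\D{x} = \int \frac{R(x)^\alpha}{Q(x)^{\alpha-1}}\,\D{x}$, which is exactly the symmetric form of $F_\alpha(R||Q)$. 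This proves the first identity.

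For the second identity, I would do the same rewriting, observing that because $\grad \log \frac{Q(x)}{R(x)} = -\grad \log \frac{R(x)}{Q(x)}$ the squared-norm factor $\big\|\grad \log \frac{Q(x)}{R(x)}\big\|_2^2$ is invariant under swapping $Q$ and $R$. Thus
\[
G_{1-\alpha}(Q||R) \;=\; \int \frac{R(x)^\alpha}{Q(x)^{\alpha-1}} \left\|\grad \log \tfrac{Q(x)}{R(x)}\right\|_2^2 \D{x} \;=\; \int \frac{R(x)^\alpha}{Q(x)^{\alpha-1}} \left\|\grad \log \tfrac{R(x)}{Q(x)}\right\|_2^2 \D{x} \;=\; G_\alpha(R||Q).
\]
The only subtlety here is making sure the extension to negative indices is well-defined, but since the integrand remains nonnegative and the integrals equal the finite quantities $F_\alpha(R||Q)$ and $G_\alpha(R||Q)$, there is nothing further to check.

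Finally, for the R\'enyi-divergence identity, I would use the logarithmic form $D_\alpha(\mu||\nu) = \frac{1}{\alpha-1}\ln F_\alpha(\mu||\nu)$ (valid on the extended range of $\alpha$ by the same symmetric integral). Applying the first identity of the fact gives $F_{1-\alpha}(Q||R) = F_\alpha(R||Q)$, and so
\[
D_{1-\alpha}(Q||R) \;=\; \tfrac{1}{-\alpha}\ln F_{1-\alpha}(Q||R) \;=\; \tfrac{1}{-\alpha}\ln F_\alpha(R||Q) \;=\; \tfrac{\alpha-1}{-\alpha}\cdot D_\alpha(R||Q) \;=\; \tfrac{1-\alpha}{\alpha} D_\alpha(R||Q),
\]
as desired. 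Since each claim reduces to a single change of variables or a sign cancellation, I do not expect any real obstacle; the only thing worth being careful about is that the extension of $F_\alpha$, $G_\alpha$, and $D_\alpha$ to $\alpha < 0$ is consistent with the formulas on the original range, which follows from the symmetric integral representation above.
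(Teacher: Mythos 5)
Your proof is correct, and the paper itself gives no proof of Fact~\ref{fact:swap} (it is simply asserted), so there is no paper argument to compare against; your direct algebraic rewriting of the defining integrals is the natural verification. One small observation: the paper actually \emph{defines} the extension of $D_\alpha$ to negative orders via the identity $D_{1-\alpha}(\mu||\nu) = \frac{1-\alpha}{\alpha}D_\alpha(\nu||\mu)$ (see the R\'enyi divergence definition in \cref{section:prelim}), so for the last part you are not proving a new claim but rather checking that that definitional extension is consistent with the integral formula $D_\alpha(\mu||\nu)=\frac{1}{\alpha-1}\ln F_\alpha(\mu||\nu)$ applied at $1-\alpha$. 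Your check does exactly that, and your treatment of $F_\alpha$ and $G_\alpha$ — absorbing the sampling density into the integrand, substituting $\alpha\mapsto 1-\alpha$, and using $\grad\log\frac{Q}{R}=-\grad\log\frac{R}{Q}$ — is exactly how one should verify the first two identities.
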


\begin{proof}[Proof of Lemma~\ref{lemma:divergencedecay2}]
\cite{BakryE1985} shows that since the initial distribution satisfies that $-\log Q_0$ is 1-strongly convex, $Q_0$ has LSI constant 1. Consider instead running the discrete overdamped Langevin dynamics with step size $\eta$ starting with $Q_0$. In one step, we apply a gradient descent step that is $(1-\eta/2)$-Lipschitz (see e.g. \citep[Lemma 3.7]{HardtRS16}), and then add Gaussian noise $N(0, 2\eta I_d)$. Lemma 16 in~\cite{VempalaW19} shows that applying a $(1-\eta/2)$-Lipschitz map to a distribution with LSI constant $c$ results in a distribution with LSI constant at least $c/(1-\eta/2)^2$. Adding Gaussian noise $N(0, 2\eta I_d)$ to a distribution with LSI constant $c$ results in a distribution with LSI constant at least $\frac{1}{1/c + 2\eta}$ (see e.g.~\citep[Proposition 1.1]{WangW16}). Putting it together, we get that after one step of the discrete dynamics, the LSI constant of the distribution goes from $c$ to at least:

$$\frac{1}{\frac{(1-\eta/2)^2}{c}+2\eta} = \frac{c}{1 - (1 - 2c)\eta + \eta^2/4}.$$

Then, we have that $1 - (1 - 2c)\eta + \eta^2/4 \leq 1$, i.e. the LSI constant does not decrease after one step, as long as $\eta \leq 4(1 - 2c)$. Taking the limit as $\eta$ goes to 0, we conclude that $Q_t$'s LSI constant can never decrease past 1/2, i.e. $Q_t$ has LSI constant at least 1/2 for all $t \geq 0$.

Now, since $Q_t$ has LSI constant at least 1/2, we can repeat the proof of Lemma 5 in \cite{VempalaW19} with the distributions swapped to show that $\frac{G_{\alpha}(R||Q_t)}{F_{\alpha}(R||Q_t)} \geq \frac{1}{\alpha^2} D_\alpha(R||Q_t)$. Applying Fact~\ref{fact:swap} to the proof of Lemma 6 in \cite{VempalaW19}, we can show that $\frac{\D{}}{\D{t}} D_\alpha(R||Q_t) = -\alpha \frac{G_{\alpha}(R||Q_t)}{F_{\alpha}(R||Q_t)}$. Combining these two inequalities and integrating gives the lemma.
\end{proof}

\subsection{Proof of Lemma~\ref{lemma:privateinitialdistribution}}

The proof of Lemma~\ref{lemma:privateinitialdistribution} follows similarly to that of Lemma~\ref{lemma:startingdivergence}.

\begin{proof}[Proof of Lemma~\ref{lemma:privateinitialdistribution}]
Since $f$ is 1-strongly convex and $L$-smooth, we have:

$$f(\bzero)+\frac{1}{2}\norm{x}_2^2 \leq f(x) \leq f(\bzero) + \frac{L}{2}\norm{x}_2^2.$$

Then:

\begin{align*}
\exp((\alpha - 1) D_\alpha(R||Q_0)) &= \int_{\mathbb{R}^d} \frac{R(x)^\alpha}{Q_0(x)^{\alpha-1}} \D{x}\\
&= (2\pi)^{d(\alpha-1)/2}\int_{\mathbb{R}^d} \exp\left(-\alpha f(x) + \frac{\alpha-1}{2}\norm{x}_2^2\right) \D{x}\\
&\leq \frac{(2\pi)^{d(\alpha-1)/2}}{e^{\alpha f(\bzero)}}\int_{\mathbb{R}^d} \exp\left(-\frac{1}{2}\norm{x}_2^2\right) \D{x}\\
&=\frac{(2\pi)^{d\alpha/2}}{e^{\alpha f(\bzero)}}. 
\end{align*}

Taking logs and using that the $L$-smooth $f$ that minimizes $f(\bzero)$ is $N(0, \frac{1}{L}I_d)$ with density $\exp(-\frac{d}{2} \log (2\pi / L) - L \norm{x}_2^2)$:

$$D_\alpha(R||Q_0) \leq \frac{\alpha}{\alpha-1} \cdot \left(\frac{d}{2} \log 2\pi - f(\bzero)\right) \leq \frac{\alpha}{\alpha - 1} \cdot \frac{d}{2} \log L.$$

 For $\alpha \geq 2$, the above bound is thus at most $d \log L$ as desired, and for $1 \leq \alpha \leq 2$ we can just use monotonicity of R\'enyi divergences to bound $D_\alpha(R||Q_0)$ by $D_2(R||Q_0)$.

Similarly:

\begin{align*}
\exp((1/L) D_{1+1/L}(Q_0||R)) &= \int_{\mathbb{R}^d} \frac{Q_0(x)^{1+1/L}}{R(x)^{1/L}} \D{x}\\
&= (2\pi)^{-d(1+1/L)/2}\int_{\mathbb{R}^d} \exp\left(-\frac{1+1/L}{2}\norm{x}_2^2 + f(x)/L \right) \D{x}\\
&\leq \frac{e^{f(\bzero)/L}}{(2\pi)^{d(1+1/L)/2}}\int_{\mathbb{R}^d} \exp\left(-\frac{1}{2L}\norm{x}_2^2\right) \D{x}\\
&= \frac{e^{f(\bzero)/L} L^{d/2}}{(2\pi)^{d/2L}}.
\end{align*}

Taking logs, and using that the $1$-strongly convex $f$ that maximizes $f(\bzero)$ is $N(0, I_d)$ with density $\exp(-\frac{d}{2} \log (2\pi) - L \norm{x}_2^2)$:
$$D_{1+1/L}(Q_0 || R) \leq L \left[f(\bzero)/L + \frac{d}{2} \log L - \frac{d}{2L} \log (2\pi)\right] \leq \frac{dL \log L}{2}.$$

\end{proof}

\section{Underdamped Langevin Dynamics}\label{sec:underdamped}

Our approach can also be used to show a bound on the discretization error of \textit{underdamped} Langevin dynamics. We again start by bounding the divergence between two discrete processes with step sizes $\eta$ and $\eta/k$, whose limits as $k$ goes to infinity are the discretized and continuous underdamped Langevin dynamics. 
Again let $x_t$ denote the position of the chain using step size $\eta$ at continuous time $t$, and $x_t'$ denote the position of the chain using step size $\eta/k$. Let $v_t, v_t'$ denote the same but for velocity instead of position. If e.g. for the first chain we ever have $\norm{x_{t^*} - x_{\lfloor t^* / \eta \rfloor \eta}}_2 > r$ we will let $(x_t, v_t)$ equal $\bot$ for all $t \geq t^*$. We want to bound the divergence between the distributions $X_{0:Tk}$ over $\{(x_{i\eta/k}, v_{i\eta/k})\}_{0 \leq i \leq Tk}$ and $X'_{0:Tk}$ over $\{(x_{i\eta/k}', v_{i\eta/k}')\}_{0 \leq i \leq Tk}$. A sample from $X_{0:Tk}$ or $X'_{0:Tk}$ can be constructed by applying the following operations $Tk$ times to $\{(x_0, v_0)\}$ sampled from an initial distribution $X_0$:

\begin{itemize}
    \item To construct a sample from $X_{0:Tk}$, given a sample $\{(x_{i\eta/k}, v_{i\eta/k})\}_{0 \leq i \leq j}$ from $X_{0:j}$:
    \begin{itemize}
        \item If $(x_{j\eta/k}, v_{j\eta/k}) = \bot$ append $(x_{i\eta/k}, v_{i\eta/k}) = \bot$ to $\{(x_{i\eta/k}, v_{i\eta/k})\}_{0 \leq i \leq j}$.
        \item Otherwise, append $(x_{(j+1)\eta/k}, v_{(j+1)\eta/k})$ where:
        $$v_{(j+1)\eta/k} = (1 - \gamma \frac{\eta}{k})v_{j \eta /k} - \mu \frac{\eta}{k} \grad f(x_{\lfloor j/k \rfloor \eta}) + \xi_j,$$
        $$x_{(j+1)\eta/k} = x_{j \eta/k} + \frac{\eta}{k} v_{(j+1)\eta/k},$$
        and $\xi_j \sim N(0, 2\gamma \mu \frac{\eta}{k} I_d)$. Then if $\norm{x_{(j+1)\eta/k} - x_{\lfloor (j+1)/k\rfloor \eta}}_2 > r$ (i.e. $\cond_r$ no longer holds) replace $(x_{(j+1)\eta/k}, v_{(j+1)\eta/k})$ with $\bot$.
    \end{itemize}  
    Let $\psi$ denote this update, i.e. $X_{0:j+1} = \psi(X_{0:j})$.
    \item To construct a sample from $X'_{0:Tk}$, the update (which we denote $\psi'$) is identical to $\psi$ except we use the gradient at $x'_{j \eta / k}$ instead of $x'_{\lfloor j / k \rfloor \eta}$ to compute $v_{(j+1)\eta/k}$.
\end{itemize}

We remark that unlike in our analysis of the overdamped Langevin dynamics, for finite $k$, $X_{0:Tk}, X'_{0:Tk}$ do \textit{not} actually correspond to the SDE \eqref{eq:discreteud} with step size $\eta, \eta/k$. However, we still have the property that the limit of $X_{0:Tk}$ (resp. $X_{0:Tk}'$) as $k$ goes to infinity follows a discretized (resp. continuous) underdamped Langevin dynamics, which is all that is needed for our analysis. Similarly to the overdamped Langevin dynamics we have:

\begin{lemma}\label{lemma:smallstepconvergence-ud}
For any $L$-smooth $f$ and $X_{0:Tk}, X'_{0:Tk}$ as defined in Section~\ref{sec:underdamped}, we have:

$$D_\alpha(X_{0:Tk} || X'_{0:Tk}), D_\alpha(X'_{0:Tk} || X_{0:Tk}) \leq \frac{T\alpha L^2 r^2 \eta}{4} \cdot \frac{\mu}{\gamma}.$$
\end{lemma}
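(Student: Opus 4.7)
The plan is to follow the same composition-based argument as in Lemma~\ref{lemma:smallstepconvergence}, carefully tracking how the per-step divergence changes because (i) the noise in a single step of size $\eta/k$ now has covariance $2\gamma\mu(\eta/k)I_d$ rather than $(2\eta/k)I_d$, and (ii) the ``gradient step'' that differs between $\psi$ and $\psi'$ gets a multiplicative $\mu$ in front of it. Everything else, notably the reduction to single-step divergences via Fact~\ref{fact:composition}, the post-processing trick that lets us ignore the deterministic position update $x \mapsto x + (\eta/k)v$, and the device of conditioning on $\cond_r$ via $\bot$-states so that ``good'' tuples are the only inputs the composition theorem ever sees, carries over verbatim.

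Concretely, I would first argue (just as in Lemma~\ref{lemma:smallstepconvergence}) that it suffices to bound $D_\alpha(\tilde\psi(X_{0:j})\,\|\,\tilde\psi'(X_{0:j}))$ where $\tilde\psi,\tilde\psi'$ drop the final deterministic $\bot$-replacement step, and where $X_{0:j}$ is a point mass on a good tuple not containing $\bot$. Conditioned on such a tuple, both $\tilde\psi(X_{0:j})$ and $\tilde\psi'(X_{0:j})$ are deterministic functions of the single Gaussian vector $\xi_j \sim N(0, 2\gamma\mu(\eta/k)I_d)$: the $v$-update adds $\xi_j$ to two deterministic vectors that differ by $\mu(\eta/k)\bigl(\grad f(x'_{j\eta/k}) - \grad f(x'_{\lfloor j/k\rfloor \eta})\bigr)$, whose norm is at most $\mu L r \eta/k$ by $L$-smoothness and the fact that $\cond_r$ holds; the subsequent position update $x \mapsto x+(\eta/k)v_{\text{new}}$ is a deterministic post-processing of $v_{\text{new}}$, so by Fact~\ref{fact:postprocessing} it does not affect the divergence. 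Applying Fact~\ref{fact:gaussiandivergence} to the two shifted Gaussians yields
\begin{equation*}
D_\alpha\bigl(\psi(X_{0:j})\,\|\,\psi'(X_{0:j})\bigr) \;\leq\; \frac{\alpha\,(\mu L r \eta/k)^2}{2\cdot 2\gamma\mu(\eta/k)} \;=\; \frac{\alpha L^2 r^2 \eta}{4k}\cdot\frac{\mu}{\gamma}.
\end{equation*}

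Finally, composing $Tk$ such steps via Fact~\ref{fact:composition} (noting as in the overdamped case that $X_0$ and all distributions obtained from it by $\psi,\psi'$ are supported on good tuples, so the point-mass hypothesis of Fact~\ref{fact:composition} is satisfied) gives the claimed bound $\tfrac{T\alpha L^2 r^2 \eta}{4}\cdot\tfrac{\mu}{\gamma}$, independent of $k$. The reverse direction $D_\alpha(X'_{0:Tk}\,\|\,X_{0:Tk})$ is symmetric, since the same Gaussian-shift bound applies with the roles of $\psi,\psi'$ swapped. I do not expect any real obstacle here: the main subtlety is simply being careful that the randomness in each step lives only in the $v$-coordinate with covariance $2\gamma\mu(\eta/k)I_d$, which changes the denominator of the Gaussian divergence and introduces the $\mu/\gamma$ factor; conceptually the argument is a routine adaptation of Lemma~\ref{lemma:smallstepconvergence}.
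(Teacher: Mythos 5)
Your proposal is correct and follows the same route as the paper: reduce to per-step divergences via Fact~\ref{fact:composition} on good tuples, use Fact~\ref{fact:postprocessing} to discard the deterministic position update so that only the velocity update matters, bound the gradient discrepancy by $\mu L r\eta/k$ via smoothness, and apply Fact~\ref{fact:gaussiandivergence} with noise covariance $2\gamma\mu(\eta/k)I_d$ to obtain the extra $\mu/\gamma$ factor. The paper's own proof is only a sketch of exactly this adaptation, so your write-up is if anything more complete.
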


The proof follows almost exactly as did the proof of Lemma~\ref{lemma:smallstepconvergence}: we note that the updates to position are deterministic, and so by Fact~\ref{fact:postprocessing} we just need to control the divergence between velocities, which can be done using the same analysis as in Lemma~\ref{lemma:smallstepconvergence}. The multiplicative factor of $\mu/\gamma$ appears because the ratio of the Gaussian's standard deviation in any direction to the gradient step's multiplier is $\sqrt{\gamma/\mu}$ times what it was in the overdamped Langevin dynamics. Next, similar to Lemma~\ref{lemma:radiustailbound}, we have the following tail bound on $r$:

\begin{lemma}\label{lemma:radiustailbound-ud}
Fix any $\gamma \geq 2$, and define

$$v_{\max} := c \sqrt{\gamma \mu}\left(\sqrt{\tau d} +  \sqrt{\ln(1/\delta)}\right).$$

Fix any $\eta \leq \frac{\gamma}{\mu L}$, and any distribution over $x_0, v_0$ satisfying that

\begin{equation}\label{eq:startingbound-ud}
\Pr\left[\mu f(x_0) + \frac{\norm{v_0}_2^2}{2} \leq \frac{1}{2} v_{\max}^2\right] \geq 1 - \delta,
\end{equation}

let $x_t, v_t$ be the random variable given by running the discretized underdamped Langevin dynamics starting from $x_0, v_0$ drawn from this distribution for time $t$. Then with probability $1 - \delta$ over $\{(x_t, v_t) : t \in [0, \tau]\}$, for all $t \leq \tau$ that are multiples of $\eta$ and for a sufficiently large constant $c$:

$$\norm{x_{t+\eta} - x_{t}}_2 \leq v_{\max} \eta.$$

Similarly, if $x_t$ is the random variable given by running continuous underdamped Langevin dynamics starting from $x_0, v_0$ drawn from this distribution for time $t$, with probability $1-\delta$ over $\{(x'_t, v'_t) : t \in [0, \tau]\}$ for all $t \leq \tau$:

$$ \norm{x_t - x_{\lfloor t / \eta \rfloor \eta}}_2 \leq v_{\max} \eta.$$
\end{lemma}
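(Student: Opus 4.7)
The plan is to reduce the position-increment bound $\norm{x_{t+\eta} - x_t}_2 \leq v_{\max}\eta$ to a uniform velocity bound $\sup_{s \leq \tau}\norm{v_s}_2 \leq v_{\max}$, since in both the continuous and discretized dynamics $\dot x_s = v_s$, so $\norm{x_{t+\eta} - x_t}_2 \leq \int_t^{t+\eta}\norm{v_s}_2\,ds \leq \eta\sup_{s \in [t,t+\eta]}\norm{v_s}_2$. Without loss of generality $f \geq 0$ (shift by the minimum value), so the energy Lyapunov $E_t := \mu f(x_t) + \norm{v_t}_2^2/2$ upper bounds $\norm{v_t}_2^2/2$, and it suffices to show $\sup_{t \leq \tau} E_t$ is at most a constant multiple of $v_{\max}^2/2$.

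For the continuous dynamics, Itô's lemma gives the identity
$$dE_t = \bigl(-\gamma\norm{v_t}_2^2 + \gamma\mu d\bigr)\,dt + \sqrt{2\gamma\mu}\,v_t^\top dB_t,$$
since the $\nabla f(x_t)$ drift from $df(x_t) = \nabla f(x_t)^\top v_t\,dt$ cancels the $-\mu v_t^\top \nabla f(x_t)\,dt$ term in $v_t^\top dv_t$. A second cancellation is the heart of the argument: choosing $\lambda = 1/\mu$ in Itô's formula applied to $e^{\lambda E_t}$ makes the coefficient of $\norm{v_t}_2^2$ equal to $\lambda(\lambda\mu-1)\gamma = 0$, because the drift's $-\lambda\gamma\norm{v_t}_2^2$ is exactly offset by the quadratic-variation contribution $\lambda^2\gamma\mu\norm{v_t}_2^2$. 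Consequently $\tilde M_t := \exp(E_t/\mu - \gamma d \, t)$ is a nonnegative local martingale, hence a supermartingale. Doob's maximal inequality, applied on the event $\{E_0 \leq v_{\max}^2/2\}$ (which fails with probability at most $\delta$), yields
$$\Pr\!\Bigl[\sup_{t\leq \tau} E_t \geq K\Bigr] \leq \exp\!\bigl(-(K - v_{\max}^2/2 - \mu\gamma d\tau)/\mu\bigr),$$
and setting this $\leq \delta$ gives $K = v_{\max}^2/2 + \mu\gamma d\tau + \mu\ln(1/\delta)$. Using $v_{\max}^2 \geq c^2 \gamma\mu(\tau d + \ln(1/\delta))$, the assumption $\gamma \geq 2$, and $c$ sufficiently large, this $K$ is at most $v_{\max}^2$, so $\sup_{t\leq\tau} \norm{v_t}_2 \leq \sqrt{2}\,v_{\max}$; the $\sqrt 2$ and the union-bound failure probability are absorbed into $c$.

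For the discretized dynamics, $dE_t$ gains an extra drift $\mu v_t^\top\bigl(\nabla f(x_t) - \nabla f(x_{\lfloor t/\eta\rfloor\eta})\bigr)\,dt$, which by $L$-smoothness and $x_t - x_{\lfloor t/\eta\rfloor\eta} = \int_{\lfloor t/\eta\rfloor\eta}^t v_s\,ds$ is at most $\mu L\norm{v_t}_2\norm{x_t - x_{\lfloor t/\eta\rfloor\eta}}_2\,dt$. Two applications of Cauchy--Schwarz (on the integral defining $x_t - x_{\lfloor t/\eta\rfloor\eta}$ and on the time integral over one step) show the total extra drift over $[0,\tau]$ is at most $O(\mu L\eta)\int_0^\tau \norm{v_s}_2^2\,ds$. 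The hypothesis $\eta \leq \gamma/(\mu L)$ (with a small constant absorbed into $c$) then makes this at most $\tfrac{\gamma}{2}\int_0^\tau \norm{v_s}_2^2\,ds$, to be paired with the $-\gamma\norm{v_t}_2^2$ drift. Re-running the exponential supermartingale argument with $\lambda = 1/(2\mu)$ so that $\lambda^2\gamma\mu\norm{v_t}_2^2 = \gamma\norm{v_t}_2^2/2$ cancels the remaining $-\tfrac{\gamma}{2}\norm{v_t}_2^2$ drift produces a bound of the same qualitative form, with constants absorbed into $c$.

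The main obstacle is bookkeeping the supermartingale in the discretized case: the exact $\lambda=1/\mu$ cancellation that made the continuous case clean is spoiled by the frozen-gradient error, and recovering it requires the new choice $\lambda = 1/(2\mu)$ together with the step-size assumption to absorb smoothness error into the dissipation. A mild circularity --- the bound on the extra drift depends on the velocity bound we are trying to prove --- is resolved by applying the supermartingale argument to the process stopped at $\tau^* := \inf\{s : \norm{v_s}_2 > 2v_{\max}\}$ and then showing $\Pr[\tau^* \leq \tau] \leq \delta$ via Doob. The final step is a union bound with the initial-condition event and absorbing all multiplicative constants into $c$ in the definition of $v_{\max}$, delivering $\norm{x_{t+\eta} - x_t}_2 \leq v_{\max}\eta$ with probability at least $1-\delta$ as claimed.
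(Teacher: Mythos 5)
Your proposal is correct and shares the paper's skeleton --- reduce the position increment to a uniform velocity bound via $\dot x_t = v_t$, control the Hamiltonian $E_t=\mu f(x_t)+\norm{v_t}_2^2/2$ through the It\^o identity $\D{E_t}=(\gamma\mu d-\gamma\norm{v_t}_2^2)\D{t}+\sqrt{2\gamma\mu}\,v_t^\top\D{B_t}$, and absorb the frozen-gradient error $\mu v_t^\top(\grad f(x_t)-\grad f(x_{\lfloor t/\eta\rfloor\eta}))$ into half the dissipation using smoothness, Cauchy--Schwarz/AM--GM, and $\eta\le\gamma/(\mu L)$ --- but your concentration step is genuinely different. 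The paper rewrites the stochastic integral as a one-dimensional Brownian motion run for time $V_t=\int_0^t\norm{v_s}_2^2\D{s}$ and beats the drift $-\gamma V$ by a union bound over the intervals $[kv_{\max}^2,(k+1)v_{\max}^2)$; you instead exponentiate, tune $\lambda$ so the quadratic-variation contribution cancels the dissipation, and invoke Doob for nonnegative supermartingales, which yields the tail bound $\exp(-(K-E_0-\mu\gamma d\tau)/\mu)$ in one stroke and is arguably cleaner. Three small repairs: (i) for $\lambda=1/(2\mu)$ the displayed identity $\lambda^2\gamma\mu\norm{v_t}_2^2=\gamma\norm{v_t}_2^2/2$ is false; the cancellation you need is $\lambda^2\gamma\mu=\lambda\cdot\tfrac{\gamma}{2}$, which your choice of $\lambda$ does satisfy. (ii) In the discrete case the error bound is cumulative (it involves $\int_{\lfloor t/\eta\rfloor\eta}^{t}\norm{v_s}_2^2\D{s}$, not just $\norm{v_t}_2^2$), so $e^{\lambda E_t-\lambda\gamma\mu dt}$ is not literally a supermartingale; the clean fix is to exponentiate only the martingale part, noting $\exp(\lambda M_t-\lambda^2\gamma\mu V_t)$ is always a nonnegative supermartingale, and then feed the integrated pathwise inequality $E_t\le E_0+\gamma\mu dt+(M_t-\tfrac{\gamma}{2}V_t)$ into Doob. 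With that formulation your worried-about circularity disappears and the stopping time $\tau^*$ is unnecessary, since the AM--GM bound on the error never references $v_{\max}$. (iii) Your explicit normalization $f\ge 0$ is needed so that $E_t\ge\norm{v_t}_2^2/2$; the paper uses this implicitly, so making it explicit is the right call.
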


The proof is deferred to Section~\ref{section:tailbounds}. We note that the correct tail bound likely has a logarithmic dependence on $\tau$ and not a polynomial one. However, based on similar convergence bounds (e.g. \cite{VempalaW19, MaCCFBJ19}), we conjecture that the time $\tau$ needed for continuous underdamped Langevin dynamics to converge in R\'enyi divergence has a logarithmic dependence on $d, 1/\eps$. So, improving the dependence on $\tau$ in this tail bound will likely not improve the final iteration complexity's dependence on $d, 1/\eps$ by more than logarithmic factors. In addition, settling for a polynomial dependence on $\tau$ makes the proof rather straightforward. Putting it all together, we get:

\begin{theorem}\label{lemma:conditionaldivergence-ud}
For any $1$-strongly convex, $L$-smooth function $f$, let $P$ be the distribution of states for discretized underdamped Langevin dynamics with step size $\eta$ and $Q$ be the distribution of states for continuous underdamped Langevin dynamics, both run from any initial distribution on $x_0, v_0$ satisfying \eqref{eq:startingbound-ud}, for continuous time $\tau$ that is a multiple of $\eta$. Then for $\alpha > 1$, $\eps > 0$, if $\eta = \tilde{O}(\min\{\frac{1}{ L \tau \mu \ln \alpha } \cdot \frac{\eps}{\sqrt{d}}, \frac{\gamma}{\mu L}\})$ we have $D_\alpha(P || Q), D_\alpha(Q || P)\leq \eps$.
\end{theorem}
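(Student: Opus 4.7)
The argument mirrors the proof of Theorem~\ref{lemma:conditionaldivergence} step by step, with each overdamped ingredient replaced by its underdamped counterpart. First I take $k \to \infty$ in Lemma~\ref{lemma:smallstepconvergence-ud} and combine with post-processing (Fact~\ref{fact:postprocessing}) to obtain, for any $\alpha' > 1$, a bound
\[ D_{\alpha'}(X_{T\eta} || X'_{T\eta}) \leq \frac{T \alpha' L^2 r^2 \eta}{4} \cdot \frac{\mu}{\gamma}, \]
where $X_{T\eta}, X'_{T\eta}$ denote the discrete and continuous underdamped chains terminated at $\bot$ the first time the bounded-movement event $\cond_r$ is violated (the same bound also holds with the two arguments swapped). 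Substituting $r = v_{\max}\,\eta$ from Lemma~\ref{lemma:radiustailbound-ud} with $v_{\max}^2 = O(\gamma \mu (\tau d + \ln(1/\delta_1) + \ln(1/\delta_2)))$ --- using one confidence event of probability $1-\delta_i$ per chain --- the $\mu/\gamma$ factor absorbs the $\gamma\mu$ from $v_{\max}^2$, yielding a conditional bound
\[ D_{\alpha'}(X_{T\eta} || X'_{T\eta}) = O\bigl(\tau \alpha' L^2 \mu^2 (\tau d + \ln(1/\delta_1) + \ln(1/\delta_2))\, \eta^2\bigr) \]
on an event of probability at least $(1-\delta_1)(1-\delta_2)$. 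In moment form this reads $c_1(\alpha') / (\delta_1^{c_2(\alpha')} \delta_2^{c_3(\alpha')})$ for constants $c_1, c_2, c_3$ whose shape parallels those in the overdamped proof but now carry $\eta^2$ where the overdamped argument carried $\eta$.

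From here the two conditioning-removal steps of Theorem~\ref{lemma:conditionaldivergence} transfer essentially verbatim: apply Lemma~\ref{lemma:expectationfromconditional} with $\theta = 2$ to strip the conditioning on the continuous chain via $Q(x) \geq X'_{T\eta}(x)$, then apply it a second time using the change-of-variables trick that rewrites $X_{T\eta}(x)^{\alpha'/2 - 1}$ as an expectation over $y \sim \text{Unif}(0, P(x))$ to strip the conditioning on the discrete chain. The output is an unconditional moment bound of the form $\ex_{x \sim Q}[(P(x)/Q(x))^{\alpha'/4 + 1/2}] = O(\sqrt{\alpha'})$, valid whenever $c_1(\alpha') < 2$ and $c_2(\alpha') = c_3(\alpha') < 1$. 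A final Jensen's inequality with $\alpha' = \Theta(\alpha \ln \alpha \cdot \ln(1/\eps) / ((\alpha - 1) \eps))$ then converts this into $D_\alpha(P || Q) \leq \eps$, and the symmetric argument gives $D_\alpha(Q || P) \leq \eps$.

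To extract the step-size constraint I back-solve $c_1(\alpha') < 2$: in the underdamped setting this reads roughly $\exp(\tau^2 (\alpha')^2 L^2 \mu^2 d \, \eta^2) < 2$, forcing $\eta \lesssim 1/(\alpha' \tau L \mu \sqrt{d})$; substituting $\alpha' \asymp \ln \alpha / \eps$ produces $\eta = \tilde{O}(\eps / (L \tau \mu \sqrt{d} \ln \alpha))$. Intersecting with the separate precondition $\eta \leq \gamma/(\mu L)$ imposed by Lemma~\ref{lemma:radiustailbound-ud} gives exactly the claimed step size.

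The main obstacle is bookkeeping rather than a new idea: the underdamped radius tail bound $r \propto \eta$ (versus $\sqrt{\eta}$ in the overdamped case) strengthens the per-step divergence by an extra factor of $\eta$, which is what produces the $\eps/\sqrt{d}$ scaling of the final step size rather than the $\eps^2/d$ scaling of Theorem~\ref{lemma:conditionaldivergence}. The one genuinely underdamped-specific point requiring care is that hypothesis \eqref{eq:startingbound-ud} on $(x_0, v_0)$ must be correctly threaded through Lemma~\ref{lemma:radiustailbound-ud} --- namely that the initial energy bound actually controls the single-step displacement along the entire trajectory of length $\tau$, not merely within the first step --- and in confirming that the $\gamma/(\mu L)$ precondition is compatible with the step size chosen above rather than binding in a non-trivial regime of parameters.
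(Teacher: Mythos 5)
Your proposal matches the paper's proof essentially step for step: plug the radius bound $r = v_{\max}\eta$ from Lemma~\ref{lemma:radiustailbound-ud} into Lemma~\ref{lemma:smallstepconvergence-ud}, rerun the two conditioning-removal applications of Lemma~\ref{lemma:expectationfromconditional} and the Jensen step from Theorem~\ref{lemma:conditionaldivergence}, and back-solve $c_1(\alpha')<2$ for $\eta$, intersecting with the $\eta \leq \gamma/(\mu L)$ precondition. Your bookkeeping is correct (indeed, your $\mu^2$ factor in the conditional divergence bound is what the theorem's final $\eta = \tilde{O}(\frac{1}{L\tau\mu\ln\alpha}\cdot\frac{\eps}{\sqrt{d}})$ requires; the paper's displayed $c_1(\alpha')$ appears to drop one factor of $\mu$), so no changes are needed.
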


\begin{proof}
The proof follows similarly to that of Theorem~\ref{lemma:conditionaldivergence}. From Lemma~\ref{lemma:smallstepconvergence-ud}, plugging in the tail bound of Lemma~\ref{lemma:radiustailbound-ud} for $r$ (which holds since we assume $\eta \leq \frac{\gamma}{\mu L}$) we get the divergence bound:
$$D_{\alpha'}(X_{T, k}, X_{T, k}') \leq  \frac{3 \mu \tau {\alpha'} L^2 c^2 ( \tau d + \ln(\frac{1}{\delta_1}) + \ln(\frac{1}{\delta_2})) \eta^2}{4}$$ 

We can then just follow the proof of Theorem~\ref{lemma:conditionaldivergence} as long as:

$$c_1({\alpha'}) = \exp\left(\frac{3 \mu \tau^2 d {\alpha'} (\alpha' - 1) L^2 c^2 \eta^2}{4}\right) < 2,$$

For $\alpha' = \frac{12 \alpha \ln \alpha \ln 1 / \epsilon}{(\alpha - 1)\epsilon} - 2$. This follows if $\eta = \tilde{O}(\frac{1}{ L \tau \mu \ln \alpha } \cdot \frac{\epsilon}{\sqrt{d}})$ as assumed in the lemma statement.
\end{proof}

We give here some intuition for why the proof achieves an iteration complexity for underdamped Langevin dynamics with a quadratically improved dependence on $d, \eps$ compared to overdamped Langevin dynamics. The tail bound on the maximum movement within each step of size $\eta$ (and in turn the norm of the discretization error due to the gradient) has a quadratically stronger dependence on $\eta$ in the underdamped case than in the overdamped case. In turn, in underdamped Langevin dynamics the ``privacy loss'' of hiding this error with Brownian motion also improves quadratically as a function of $\eta$.
\section{Proofs of Tail Bounds on Movement}\label{section:tailbounds}
In this section we give the proofs of Lemmas~\ref{lemma:radiustailbound},~\ref{lemma:radiustailbound-lipschitz}, and~\ref{lemma:radiustailbound-ud}, which provide tail bounds for the maximum movement within each step of the Langevin dynamics in the three settings we consider. We first recall some facts about Gaussians, Brownian motion, and gradient descent:

\begin{fact}[Univariate Gaussian Tail Bound]
For $X \sim N(0, \sigma^2)$ and any $x \geq 0$, we have
$$\Pr[X \geq x] = \Pr[X \leq -x] \leq \exp\left(- \frac{x^2}{2 \sigma^2}\right).$$
\end{fact}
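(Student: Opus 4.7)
The plan is to establish the tail bound via a standard Chernoff-type argument. By the symmetry of the centered Gaussian density about $0$, we have $\Pr[X \le -x] = \Pr[X \ge x]$, so the equality in the statement is immediate, and only the upper bound on $\Pr[X \ge x]$ needs work. For any parameter $t > 0$, Markov's inequality applied to the nonnegative random variable $e^{tX}$ gives
$$\Pr[X \ge x] \;=\; \Pr\!\left[e^{tX} \ge e^{tx}\right] \;\le\; e^{-tx}\,\ex\!\left[e^{tX}\right].$$

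The next ingredient is the Gaussian moment generating function. For $X \sim N(0, \sigma^2)$, completing the square inside the defining Gaussian integral (or quoting the standard formula) yields $\ex[e^{tX}] = e^{t^2 \sigma^2 / 2}$. Substituting into the Markov bound gives
$$\Pr[X \ge x] \;\le\; \exp\!\left(-tx + \tfrac{t^2 \sigma^2}{2}\right).$$
Since this inequality holds for every $t > 0$, I would minimize the right-hand side over $t$. The exponent is a convex quadratic in $t$ whose minimizer is $t^* = x/\sigma^2$, and plugging in collapses the exponent to $-x^2/(2\sigma^2)$, matching the claimed bound.

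I do not anticipate a genuine obstacle here: both ingredients (Markov's inequality and the Gaussian MGF) are textbook. The only mild subtlety is ensuring the Markov step is valid, which requires $t^* > 0$; this holds for $x > 0$. The boundary case $x = 0$ is handled separately and trivially, since $\Pr[X \ge 0] = \tfrac{1}{2} \le 1 = e^{0}$. Thus the bound holds for all $x \ge 0$ as stated.
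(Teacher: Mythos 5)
Your proof is correct and is the standard Chernoff-bound derivation of the Gaussian tail bound. The paper states this as a \emph{Fact} without supplying a proof, treating it as a textbook result, so there is no in-paper argument to compare against; your write-up would serve as a valid self-contained justification. All steps check out: the symmetry of the centered Gaussian gives the stated equality, Markov's inequality applied to $e^{tX}$ for $t>0$ together with the moment generating function $\ex[e^{tX}] = e^{t^2\sigma^2/2}$ yields $\Pr[X \ge x] \le \exp(-tx + t^2\sigma^2/2)$, and optimizing at $t^* = x/\sigma^2$ gives $\exp(-x^2/(2\sigma^2))$. Your handling of the boundary case $x=0$ is also correct and appropriately flagged, since the Chernoff optimization requires $t^*>0$, which only holds for $x>0$.
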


\begin{fact}[Isotropic Multivariate Normal Tail Bound]
For $X \sim N(0, I_d)$ and any $x \geq 0$, we have
$$\Pr[\norm{X}_2 \geq \sqrt{d} + x] \leq \exp\left(-\frac{x^2}{2}\right).$$
\end{fact}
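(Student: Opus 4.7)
The plan is to exploit the fact that the Euclidean norm is a $1$-Lipschitz function of its argument and to invoke Gaussian concentration for Lipschitz functions. The map $g(x) := \norm{x}_2$ satisfies $|g(x) - g(y)| \leq \norm{x - y}_2$ by the reverse triangle inequality, so it is $1$-Lipschitz with respect to the Euclidean metric on $\mathbb{R}^d$.

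First I would control the mean. Since $\norm{X}_2^2$ is a chi-squared random variable with $d$ degrees of freedom, $\ex[\norm{X}_2^2] = d$, and Jensen's inequality applied to the concave function $\sqrt{\cdot}$ gives $\ex[\norm{X}_2] \leq \sqrt{d}$. Next I would apply the Borell--Tsirelson--Ibragimov--Sudakov inequality, which states that for any $1$-Lipschitz $g : \mathbb{R}^d \to \mathbb{R}$ and $X \sim N(0, I_d)$, $\Pr[g(X) \geq \ex[g(X)] + x] \leq \exp(-x^2/2)$. Taking $g(\cdot) = \norm{\cdot}_2$ and combining with the mean bound, the event $\norm{X}_2 \geq \sqrt{d} + x$ is contained in the event $\norm{X}_2 \geq \ex[\norm{X}_2] + x$, so $\Pr[\norm{X}_2 \geq \sqrt{d} + x] \leq \exp(-x^2/2)$, which is exactly the stated bound.

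The main obstacle, if one wanted a self-contained derivation, would be establishing Gaussian concentration from scratch; this is typically handled by the Gaussian isoperimetric inequality or a log-Sobolev / Herbst argument, neither of which is elementary. A more direct alternative would be to Chernoff-bound $\norm{X}_2^2$ using the moment generating function of the chi-squared distribution (the Laurent--Massart approach), but converting the resulting $\chi^2$-type tail back into exactly the form $\sqrt{d} + x$ with exponent $-x^2/2$ requires an extra quadratic-formula step and loses some tightness, so the Lipschitz-concentration route is the cleanest.
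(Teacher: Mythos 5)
The paper states this as a \emph{Fact} and gives no proof of its own, so there is no in-paper argument to compare against. Your proposal is correct and is the standard route: $\norm{\cdot}_2$ is $1$-Lipschitz, $\ex[\norm{X}_2] \leq \sqrt{\ex[\norm{X}_2^2]} = \sqrt{d}$ by Jensen, and the Borell--TIS Gaussian concentration inequality for Lipschitz functions then gives $\Pr[\norm{X}_2 \geq \ex[\norm{X}_2] + x] \leq \exp(-x^2/2)$, of which the stated bound is a consequence since $\{\norm{X}_2 \geq \sqrt{d} + x\} \subseteq \{\norm{X}_2 \geq \ex[\norm{X}_2] + x\}$. Your side remark about the Laurent--Massart chi-squared route is also accurate: that approach gives a bound of the form $\Pr[\norm{X}_2^2 \geq d + 2\sqrt{dt} + 2t] \leq e^{-t}$, which after completing the square does not land exactly on the clean $\sqrt{d} + x$ form with exponent $-x^2/2$, so the Lipschitz-concentration derivation is the natural one to cite here.
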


\begin{fact}[Univariate Brownian Motion Tail Bound]
Let $B_t$ be a standard (one-dim\-ensional) Brownian motion. For any $0 \leq a \leq b$, we have:

$$\Pr\left[\sup_{t \in [a, b]} [B_t - B_a] \geq x\right] = 2 \cdot \Pr[N(0, b-a) \geq x] \leq 2 \exp\left(-\frac{x^2}{2(b-a)}\right) $$
\end{fact}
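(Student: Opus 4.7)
The plan is to invoke the classical reflection principle for Brownian motion. First I would reduce to the base case $a = 0$ by the stationarity and independence of Brownian increments: the process $W_s := B_{a+s} - B_a$ for $s \in [0, b-a]$ is itself a standard Brownian motion, so
$$\sup_{t \in [a,b]} (B_t - B_a) = \sup_{s \in [0,\, b-a]} W_s,$$
and it suffices to prove $\Pr[\sup_{s \in [0,T]} W_s \ge x] = 2 \Pr[W_T \ge x]$ for $T = b-a$.

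Next I would apply the reflection principle. Define the stopping time $\tau = \inf\{s \ge 0 : W_s = x\}$; by continuity of Brownian paths, the event $\{\sup_{s \in [0,T]} W_s \ge x\}$ coincides with $\{\tau \le T\}$. By the strong Markov property, conditionally on $\tau \le T$ the post-$\tau$ increment $W_T - W_\tau = W_T - x$ is a centered Gaussian (with variance $T - \tau$) independent of $\mathcal{F}_\tau$, and therefore symmetric about $0$. In particular $\Pr[W_T \ge x \mid \tau \le T] = \Pr[W_T - x \ge 0 \mid \tau \le T] = 1/2$. Since $\{W_T \ge x\} \subseteq \{\tau \le T\}$, this gives
$$\Pr[W_T \ge x] = \Pr[W_T \ge x,\, \tau \le T] = \tfrac{1}{2}\Pr[\tau \le T],$$
which rearranges to the claimed identity
$$\Pr\Bigl[\sup_{s \in [0,T]} W_s \ge x\Bigr] = 2\Pr[W_T \ge x] = 2\Pr[N(0, T) \ge x].$$

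Finally, the exponential tail bound follows by applying the Univariate Gaussian Tail Bound already stated in the excerpt to $W_T \sim N(0, T) = N(0, b-a)$, yielding $\Pr[N(0, b-a) \ge x] \le \exp(-x^2/(2(b-a)))$ and hence the factor of $2$ in the final inequality. The only delicate step is the use of the strong Markov property to justify that the reflected path has the same law as the original post-$\tau$; this is entirely standard, and everything else is direct bookkeeping.
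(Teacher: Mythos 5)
Your argument is correct and is exactly the standard reflection-principle proof that the paper is implicitly invoking (the paper states this as a known fact, remarking only that it ``is also known as the reflection principle,'' and supplies no proof of its own). The reduction to $a=0$ via stationary independent increments, the strong Markov property at the hitting time $\tau$, and the symmetry of the post-$\tau$ increment are all the right ingredients, and the final inequality follows from the univariate Gaussian tail bound as you say.
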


The preceding fact is also known as \textit{the reflection principle}.

\begin{fact}[Multivariate Brownian Motion Tail Bound]
Let $B_t$ be a standard $d$-dim\-ensional Brownian motion. For any $0 \leq a \leq b$, we have:

$$\Pr\left[\sup_{t \in [a, b]} \norm{B_t - B_a}_2 \geq \sqrt{b-a} \left( \sqrt{d} + x \right) \right] \leq 2 \exp (-x^2 / 4). $$
\end{fact}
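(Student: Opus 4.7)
The plan is to reduce to the case $[a,b]=[0,1]$ by Brownian scaling and translation, and then apply Doob's maximal inequality to a radial exponential martingale. Setting $\tilde B_s = (B_{a+(b-a)s} - B_a)/\sqrt{b-a}$ gives a standard $d$-dimensional Brownian motion on $[0,1]$, so the claim reduces to showing $\Pr[\sup_{s \in [0,1]} \norm{\tilde B_s}_2 \geq \sqrt{d}+x] \leq 2e^{-x^2/4}$.

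For any $\lambda > 0$, I would work with the radial exponential martingale
\[
M_t = (1+2\lambda t)^{-d/2}\exp\!\left(\frac{\lambda \norm{\tilde B_t}_2^2}{1+2\lambda t}\right),
\]
verifying via It\^o's formula that the corresponding $f(x,t)$ satisfies $\partial_t f + \tfrac12 \Delta f = 0$, and via a standard $\chi^2$ moment computation that $\ex[M_t]=1$ for every $t$ (so $M_t$ is a true non-negative martingale, not merely a local one). Doob's maximal inequality then yields $\Pr[\sup_{t \in [0,1]} M_t \geq c] \leq 1/c$. If $\norm{\tilde B_\tau}_2 \geq R$ for some $\tau \in [0,1]$, then $M_\tau \geq (1+2\lambda\tau)^{-d/2}\exp(\lambda R^2/(1+2\lambda\tau))$; a direct derivative check shows this lower bound is decreasing in $\tau$, so its minimum on $[0,1]$ is attained at $\tau=1$. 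Therefore
\[
\Pr\!\left[\sup_{t\in[0,1]}\norm{\tilde B_t}_2 \geq R\right] \leq (1+2\lambda)^{d/2}\exp\!\left(-\frac{\lambda R^2}{1+2\lambda}\right).
\]

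The final step is to optimize in $\lambda$. Setting $R = \sqrt{d}+x$ and reparameterizing $v = \lambda/(1+2\lambda) \in (0, 1/2)$, the log of the bound becomes $-\tfrac{d}{2}\log(1-2v) - v(\sqrt{d}+x)^2$; the optimum satisfies $1-2v = d/(\sqrt{d}+x)^2$, and substituting back and using $\log(1+y)\leq y$ for $y\geq 0$ collapses the exponent to $-x^2/2$. This gives the strictly stronger bound $e^{-x^2/2}$, which comfortably implies the stated $2e^{-x^2/4}$.

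The approach is conceptually straightforward; the main obstacle is just the bookkeeping. The one subtle point is upgrading $M_t$ from a local martingale (as It\^o delivers) to a true martingale on the compact interval $[0,1]$, which is immediate once the Gaussian computation $\ex[M_t]=1$ is checked at every $t$. After that, the proof is a short one-parameter optimization with a clean closed form.
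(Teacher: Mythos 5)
The paper states this as a background Fact and gives no proof of its own, so there is nothing to compare against; the only question is whether your argument stands on its own, and it does. The scaling reduction to $[0,1]$ is standard, the function $f(x,t)=(1+2\lambda t)^{-d/2}\exp(\lambda\norm{x}_2^2/(1+2\lambda t))$ is indeed space-time harmonic (so $M_t$ is a nonnegative local martingale, and the $\chi^2$ computation $\ex[M_t]=1$ — valid since $\lambda t/(1+2\lambda t)<1/2$ for all $\lambda,t>0$ — upgrades it to a true martingale), the monotonicity in $\tau$ of the lower bound on $M_\tau$ is a correct one-line derivative check, and Doob then gives $(1+2\lambda)^{d/2}\exp(-\lambda R^2/(1+2\lambda))$. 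One bookkeeping caution in the last step: with $1-2v=d/R^2$ the exponent is $\tfrac{d}{2}\log(R^2/d)-\tfrac{R^2-d}{2}$, and applying $\log(1+y)\le y$ to $y=R^2/d-1$ only yields exponent $\le 0$; you must instead write $\tfrac{d}{2}\log(R^2/d)=d\log(1+x/\sqrt{d})\le \sqrt{d}\,x$ and subtract $\tfrac{R^2-d}{2}=\sqrt{d}\,x+\tfrac{x^2}{2}$ to land on $-x^2/2$ (and note the optimization requires $x>0$, the case $x=0$ being trivial). With that reading your bound $e^{-x^2/2}$ is correct and strictly stronger than the stated $2e^{-x^2/4}$, so the Fact follows.
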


\begin{fact}[Discrete Gradient Descent Contracts]\label{fact:dgdc}
Let $f: \mathbb{R}^d \rightarrow \mathbb{R}$ be a 1-strongly convex, $L$-smooth function. Then for $\eta \leq \frac{2}{L+1}$, we have $\norm{x - \eta \grad f(x) - x' + \eta \grad f(x')}_2 \leq (1 - \frac{\eta L}{L+1}) \norm{x - x'}_2 \leq (1 - \frac{\eta}{2}) \norm{x - x'}_2$ for any $x, x' \in \mathbb{R}^d$.
\end{fact}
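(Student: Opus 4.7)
The plan is to use the standard smooth-and-strongly-convex contraction argument based on co-coercivity of the shifted gradient. Write $g(x) := x - \eta \nabla f(x)$, and expand the squared norm:
\begin{equation*}
\norm{g(x) - g(x')}_2^2 = \norm{x - x'}_2^2 - 2\eta \langle \nabla f(x) - \nabla f(x'), x - x' \rangle + \eta^2 \norm{\nabla f(x) - \nabla f(x')}_2^2.
\end{equation*}
The goal is to control the inner product using both the strong convexity ($m=1$) and smoothness ($L$) of $f$ simultaneously.

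Next I would establish the co-coercivity-type inequality
\begin{equation*}
\langle \nabla f(x) - \nabla f(x'), x - x' \rangle \ \geq\ \frac{L}{L+1} \norm{x-x'}_2^2 + \frac{1}{L+1} \norm{\nabla f(x) - \nabla f(x')}_2^2.
\end{equation*}
The standard derivation is to set $h(x) := f(x) - \tfrac{1}{2}\norm{x}_2^2$, which is convex (since $\nabla^2 f \succcurlyeq I$) and $(L-1)$-smooth (since $\nabla^2 f \preccurlyeq LI$). Convex $(L-1)$-smooth functions satisfy the usual co-coercivity $\langle \nabla h(x) - \nabla h(x'), x - x' \rangle \geq \tfrac{1}{L-1} \norm{\nabla h(x) - \nabla h(x')}_2^2$; substituting $\nabla h(y) = \nabla f(y) - y$ and rearranging gives the displayed inequality. (For $L = 1$, $f$ has Hessian $I$ and the bound is immediate.)

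Plugging this into the expansion, I get
\begin{equation*}
\norm{g(x) - g(x')}_2^2 \ \leq\ \Bigl(1 - \frac{2\eta L}{L+1}\Bigr)\norm{x-x'}_2^2 + \eta\Bigl(\eta - \frac{2}{L+1}\Bigr)\norm{\nabla f(x) - \nabla f(x')}_2^2.
\end{equation*}
The assumption $\eta \leq \tfrac{2}{L+1}$ makes the second term non-positive, so $\norm{g(x)-g(x')}_2^2 \leq (1 - \tfrac{2\eta L}{L+1})\norm{x-x'}_2^2$. Taking square roots and using $\sqrt{1-a} \leq 1 - a/2$ for $a \in [0,1]$ yields $\norm{g(x) - g(x')}_2 \leq (1 - \tfrac{\eta L}{L+1}) \norm{x - x'}_2$, which is the first claimed inequality. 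For the second inequality I just note that since $f$ is both $1$-strongly convex and $L$-smooth we have $L \geq 1$, so $\tfrac{L}{L+1} \geq \tfrac{1}{2}$ and hence $1 - \tfrac{\eta L}{L+1} \leq 1 - \tfrac{\eta}{2}$.

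There is no real obstacle here; the only step requiring care is verifying co-coercivity in the right form, and the choice $\eta \leq 2/(L+1)$ is exactly what is needed to cancel the $\norm{\nabla f(x) - \nabla f(x')}_2^2$ term. An alternative (and perhaps even shorter) route would be to write $\nabla f(x) - \nabla f(x') = H(x - x')$ with $H = \int_0^1 \nabla^2 f(x' + t(x-x'))\,dt$ satisfying $I \preccurlyeq H \preccurlyeq LI$, so that $g(x) - g(x') = (I - \eta H)(x - x')$ and one just bounds the operator norm $\norm{I - \eta H}$, but this only recovers the weaker factor $\max(|1-\eta|, |1-\eta L|)$ rather than the tighter $1 - \tfrac{\eta L}{L+1}$, so the co-coercivity approach above is the right one.
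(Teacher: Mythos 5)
Your proof is correct, and it is essentially the argument behind the paper's cited source (the paper does not prove this fact itself but points to \citet[Lemma 3.7]{HardtRS16}, whose proof is exactly this co-coercivity computation). The only step you gloss over — that $\tfrac{2\eta L}{L+1}\leq 1$ so that $\sqrt{1-a}\leq 1-a/2$ applies — follows immediately from $\eta\leq\tfrac{2}{L+1}$ and $(L+1)^2\geq 4L$, so there is no gap.
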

See e.g. \citep[Lemma 3.7]{HardtRS16} for a proof of this fact.

Since we assume $f$'s global minimum is at $0$ (and thus $\grad f(0) = 0$), as a corollary we have $\norm{x - \eta \grad f(x)}_2 \leq (1 - \eta/2) \norm{x}_2$. We also have as a corollary:

\begin{fact}[Continuous Gradient Descent Contracts]\label{fact:cgdc}
Let $f: \mathbb{R}^d \rightarrow \mathbb{R}$ be a 1-strongly convex, $L$-smooth function. Then for any $x_0, x_0' \in \mathbb{R}^d$ and $x_t, x_t'$ that are solutions to the differential equation $\D{x_t} = -\grad f(x_t) \D{t}$ we have $\norm{x_t - x_t'}_2 \leq e^{-t/2} \norm{x_0 - x_0'}_2$. 
\end{fact}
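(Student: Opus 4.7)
The plan is to prove this directly using Gronwall's inequality. I would set $g(t) := \norm{x_t - x_t'}_2^2$ and differentiate, using that both trajectories satisfy the gradient flow ODE:
$$\dot g(t) = 2 \langle x_t - x_t', \dot x_t - \dot x_t' \rangle = -2 \langle x_t - x_t', \grad f(x_t) - \grad f(x_t') \rangle.$$
By $1$-strong convexity of $f$, the inner product on the right is at least $\norm{x_t - x_t'}_2^2 = g(t)$, so $\dot g(t) \leq -2 g(t)$. Gronwall's inequality then yields $g(t) \leq e^{-2t} g(0)$, and taking square roots gives $\norm{x_t - x_t'}_2 \leq e^{-t} \norm{x_0 - x_0'}_2$, which in fact beats the stated $e^{-t/2}$ bound by a factor of two in the exponent. (If one prefers to exactly match the exponent used in Fact~\ref{fact:dgdc}, the same argument using the weaker convexity estimate $\langle x-y, \grad f(x) - \grad f(y) \rangle \geq \tfrac{L}{L+1}\norm{x-y}_2^2 \geq \tfrac12 \norm{x-y}_2^2$ produces the $e^{-t/2}$ bound directly.)

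An alternative route, more in the spirit of the paper's limiting arguments in Sections~\ref{section:bounded} and~\ref{sec:underdamped}, would be to realize the continuous flow as the $\eta \to 0$ limit of discrete gradient descent and invoke Fact~\ref{fact:dgdc} as a black box. Running discrete gradient descent with step size $\eta = t/n$ for $n$ iterations starting from $x_0$ and $x_0'$, the fact applied $n$ times contracts the distance by $(1-\eta/2)^n = (1 - t/(2n))^n$. Standard ODE approximation (valid because $\grad f$ is $L$-Lipschitz) gives that the discrete endpoints converge to $x_t$ and $x_t'$ as $n \to \infty$, and the contraction factor tends to $e^{-t/2}$.

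I do not foresee any real obstacle here: both arguments are textbook calculus, and the only minor subtlety in the direct proof is knowing that $g$ is differentiable along the flow, which is immediate from standard ODE theory since $\grad f$ is Lipschitz. The Gronwall approach is shorter and self-contained, so that is the route I would take.
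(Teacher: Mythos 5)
Your primary Gronwall argument is correct and takes a genuinely different route from the paper's. The paper proves this fact precisely via your second alternative: it invokes Fact~\ref{fact:dgdc} on discrete gradient descent with step size $\eta = t/k$ and takes the limit $\lim_{k\to\infty}(1-t/2k)^k = e^{-t/2}$. Your direct route is shorter, self-contained, and in fact gives the sharper rate $e^{-t}$; the paper's factor-of-two loss in the exponent is inherited from the $(1-\eta/2)$ contraction factor in Fact~\ref{fact:dgdc}, which is itself not tight as $\eta \to 0$. Your route also isolates that only $1$-strong convexity drives the contraction rate, via the monotonicity estimate $\langle x-y, \grad f(x)-\grad f(y)\rangle \geq \norm{x-y}_2^2$, with $L$-smoothness needed only to ensure the gradient flow is well-posed and differentiable in $t$. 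Since $e^{-t} \leq e^{-t/2}$, your sharper bound would serve equally well everywhere the paper uses this fact.
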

\begin{proof}
This follows by noting that the $x_t$ is the limit as integer $k$ goes to $\infty$ of applying $k$ discrete gradient descent steps to $x_0$ with $\eta = t/k$. So, the contractivity bound we get for $x_t$ is $\norm{x_t}_2 \leq \lim_{k \rightarrow \infty} (1 - t/2k)^k \norm{x_0}_2 = e^{-t/2} \norm{x_0}_2$.
\end{proof}

\subsection{Proof of Lemma~\ref{lemma:radiustailbound}}

\begin{proof}
We consider the discrete chain first. For each timestep starting at $t$ that is a multiple of $\eta$, using smoothness we have:

\begin{align*}
\max_{t' \in [t, t+ \eta)} \norm{x_{t'} - x_t}_2 &= \max_{t' \in [t, t+ \eta)} \norm{-(t' - t)\grad f(x_t) + \sqrt{2}\int_t^{t'} \D{B_s}}_2\\
&\leq \eta\norm{\grad f(x_t)}_2 +  \sqrt{2} \max_{t' \in [t, t+ \eta)} \norm{\int_t^{t'} \D{B_s}}_2\\
&\leq \eta L \norm{x_t}_2 +  \sqrt{2} \max_{t' \in [t, t+ \eta)} \norm{\int_t^{t'} \D{B_s}}_2.
\end{align*}

Using the tail bound for multivariate Brownian motion, $\max_{t' \in [t, t+ \eta)} \norm{\int_t^{t'} \D{B_s}}_2$ is at most $\frac{c}{2 \sqrt{2}}\left(\sqrt{d} + \sqrt{\ln(T/\delta)}\right)\sqrt{\eta}$ with probability at least $1-\frac{\delta}{2T}$ for each timestep. So it suffices to show that with probability at least $1 - \frac{\delta}{2}$, for all $0 \leq t < T\eta$ that are multiples of $\eta$, $\norm{x_t}_2 \leq \frac{c}{2\sqrt{\eta}}\left(\sqrt{d} + \sqrt{\ln(T/\delta)}\right)$. From \eqref{eq:startingbound}, with probability $1 - \frac{\delta}{T+1}$, $\norm{x_0}_2 \leq \frac{c}{2\sqrt{\eta}}\left(\sqrt{d} + \sqrt{\ln(T/\delta)}\right)$. We will show that if $\norm{x_t}_2 \leq \frac{c}{2\sqrt{\eta}}\left(\sqrt{d} + \sqrt{\ln(T/\delta)}\right)$ then with probability  $1-\frac{\delta}{T+1}$ we have $\norm{x_{t+\eta}}_2 \leq \frac{c}{2\sqrt{\eta}}\left(\sqrt{d} + \sqrt{\ln(T/\delta)}\right)$, completing the proof for the discrete case by a union bound. This follows because by Fact~\ref{fact:dgdc} the gradient descent step is $(1-\eta/2)$-Lipschitz for the range of $\eta$ we consider. This gives that after the gradient descent step but before adding Gaussian noise, $x_{t+\eta}$ has norm at most $(1-\eta/2)\norm{x_t}_2 \leq (1-\eta/2)\frac{c}{2\sqrt{\eta}}\left(\sqrt{d} + \sqrt{\ln(T/\delta)}\right)$. Then, $\norm{x_{t+\eta}}_2 > \frac{c}{2\sqrt{\eta}}\left(\sqrt{d} + \sqrt{\ln(T/\delta)}\right)$ only if $\sqrt{2} \norm{\int_t^{t+\eta} \D B_s}_2$ is larger than $c\sqrt{\eta}\left(\sqrt{d} + \sqrt{\ln(T/\delta)}\right)$, which happens with probability at most $\frac{\delta}{T+1}$ by the multivariate Gaussian tail bound.

We now consider the continuous chain. For all $t$ that are multiples of $\eta$:

\begin{align*}
\max_{u \in [t, t + \eta)}\norm{x'_{u} - x'_{t}}_2 &= \max_{u \in [t, t + \eta)}\norm{\int_{t}^u -\grad f(x'_s) \D{s} + \sqrt{2} \D{B_s}}_2\\
&\leq \eta L\max_{u \in [t, t + \eta)}\norm{x'_u}_2 + \max_{u \in [t, t + \eta)}\norm{\sqrt{2}\int_{t}^u \D{B_s}}_2.
\end{align*}

As with the discrete chain, the multivariate Brownian motion tail bound gives that

$$\max_{u \in [t, t + \eta)} \norm{\sqrt{2}\int_{t}^u \D{B_s}}_2 \leq \frac{c}{2}\left(\sqrt{d} + \sqrt{\ln(T/\delta)}\right)\sqrt{\eta},$$

with probability at least $1-\frac{\delta}{2T}$. 
So it suffices to show that at all times between 0 and $T \eta$, $\norm{x'_u}_2 \leq \frac{c}{2\sqrt{\eta}}\left(\sqrt{d} + \sqrt{\ln(T/\delta)}\right)$ with probability at least $1 - \frac{\delta}{2}$. 
We first claim that with probability at least $1 - \frac{\delta}{4}$, for all $t$ that are multiples of $\eta$, $\norm{x'_t}_2 \leq \frac{c}{4\sqrt{\eta}}\left(\sqrt{d} + \sqrt{\ln(T/\delta)}\right)$. 
This is true for $x'_0$ with probability at least $1 - \frac{\delta}{4(T+1)}$ by \eqref{eq:startingbound}. 
By contractivity of continuous gradient descent, $x'_{t + \eta}$ is equal to $Ax'_t + \sqrt{2}\int_t^{t+\eta} A_s' \D{B_s} $ for some $A$ which has eigenvalues in $[-e^{-\eta/2}, e^{- \eta/2}]$ and a set of matrices $\{A_s' | s \in [0, \eta]\}$ with eigenvalues in $[-e^{-(\eta-s)/2}, e^{-(\eta-s)/2}]$\footnote{In particular, recalling the proof of Facts~\ref{fact:dgdc} and~\ref{fact:cgdc}, we can write $A$ explicitly as $\lim_{k \rightarrow \infty} \prod_{j=0}^{k-1} (I_d - \frac{\eta}{k}\grad^2 f(z_j))$, where $z_j$ is some point on the path from $0$ to $x'_{t + \frac{j\eta}{k}}$. Each $A_s$ can be written similarly, except only considering the gradient descent process from time $t+s$ to $t+\eta$.}.
Then conditioning on the claim holding for $x'_t$, $\norm{x'_{t+\eta}}_2$ exceeds $\frac{c}{4\sqrt{\eta}}\left(\sqrt{d} + \sqrt{\ln(T/\delta)}\right)$ only if the norm of $\sqrt{2}\int_t^{t+\eta} A_s' \D{B_s}$ exceeds $\frac{c(1-e^{-\eta/2})}{4\sqrt{\eta}}\left(\sqrt{d} + \sqrt{\ln(T/\delta)}\right) \geq \frac{c(1-e^{-.5}))\sqrt{\eta} }{4}\left(\sqrt{d} + \sqrt{\ln(T/\delta)}\right)$. 
Since Brownian motion is rotationally symmetric, and all $A_s'$ have eigenvalues in $[-1, 1]$, this occurs with probability upper bounded by the probability $\sqrt{2}\int_t^{t+\eta} \D{B_s}$ exceeds this bound, which is at most $\frac{\delta}{4(T+1)}$ by the Brownian motion tail bound. 
The claim follows by taking a union bound over all $t$ that are multiples of $\eta$.

Then, conditioning on the event in the claim, for each corresponding interval $[t, t + \eta)$ since gradient descent contracts we have

\begin{align*}
\max_{u \in [t, t+\eta)} \norm{x'_u}_2 &\leq \norm{x'_t}_2 + \max_{u \in [t, t+\eta)} \norm{\sqrt{2}\int_{t}^u \D{B_s}}_2\\
&\leq \frac{c}{4\sqrt{\eta}}\left(\sqrt{d} + \sqrt{\ln(T/\delta)}\right) + \max_{u \in [t, t+\eta)} \norm{\sqrt{2}\int_{t}^u \D{B_s}}_2.
\end{align*}

We conclude by using the multivariate Brownian motion tail bound to observe that 

$$\max_{u \in [t, t+\eta)} \norm{\sqrt{2}\int_{t}^u \D{B_s}}_2 \leq \frac{c}{4\sqrt{\eta}}\left(\sqrt{d} + \sqrt{\ln(T/\delta)}\right),$$

with probability at least $1 - \frac{\delta}{4T}$, and then taking a union bound over all intervals.
\end{proof}

\subsection{Proof of Lemma~\ref{lemma:radiustailbound-lipschitz}}

\begin{proof}
By $B$-Lipschitzness of $f$, the movement in any interval of length $\eta$ due to the gradient step in both the discrete and continuous case is at most $2B\eta$. By the multivariate Brownian motion tail bound, in both the discrete and continuous cases the maximum movement due to the addition of Gaussian noise is at most $c(\sqrt{d} + \sqrt{\ln(T/\delta)})\sqrt{\eta}$ with probability at least $1 - \frac{\delta}{T}$ in each interval of length $\eta$, and then the lemma follows by a union bound and triangle inequality.
\end{proof}

\subsection{Proof of Lemma~\ref{lemma:radiustailbound-ud}}

\begin{proof}
We can assume $\delta < 1/2$, at a loss of a multiplicative constant. We first focus on the continuous chain. It suffices to show the maximum norm of the velocity over $[0, \tau)$ is $v_{\max}$ with the desired probability. We will instead focus on bounding the Hamiltonian, defined as follows:

$$\phi_t = \mu f(x'_t) + \norm{v'_t}_2^2 / 2.$$

Analyzing the rate of change, by Ito's lemma we get

\begin{align*}
\D{\phi_t} &= \frac{\del \phi_t}{\del x'_t} \cdot \D{x'_t} + \frac{\del \phi_t}{\del v'_t} \cdot \D{v'_t} + \frac{1}{2} \left[ \sum_{i, j \in [d]} \frac{\del^2 \phi_t}{\del (v'_t)_i \del (v'_t)_j} \frac{\D{(v'_t)_i}}{\D{B_t}} \frac{\D{(v'_t)_j}}{\D{B_t}} \right] \D{t}\\
&= \mu \grad f(x'_t) \cdot v'_t \D{t} + v'_t \cdot (- \mu \grad f(x'_t) \D{t} - \gamma v'_t \D{t} + \sqrt{2 \gamma \mu} \D{B_t}) + 2 \gamma \mu d \cdot \D{t}\\
&= \gamma (2 \mu d - \norm{v'_t}_2^2) \D{t} + \sqrt{2 \gamma \mu}(v'_t \cdot \D{B_t}).
\end{align*}

So, we can write the Hamiltonian at any time as a function of the initial Hamiltonian $\phi_0$ and the random variables $B_t$ and $v'_t$ as:

$$\phi_t = \phi_0 - \gamma \int_0^t \norm{v'_s}_2^2 \D{s} + \sqrt{2 \gamma \mu}\int_0^t \norm{v'_s}_2 \frac{v'_s}{\norm{v'_s}_2}  \cdot \D{B_s} + 2 \gamma \mu d t.$$

Let $V_t$ denote $\int_0^t \norm{v'_s}_2^2 \D{s}$. By scalability of Brownian motion, we can define a Brownian motion $B'_t$ jointly distributed with $B_t$ such that $\D{B_t} = \frac{1}{\norm{v'_t}_2}  \frac{\D{}}{\D{t}}\int_{0}^{V_t} \D{B'_s}$. Then, we have:

$$\phi_t = \phi_0 - \gamma V_t + \sqrt{2 \gamma \mu}\int_0^{V_t} \frac{v'_{g(s)}}{\norm{v'_{g(s)}}_2} \cdot \D{B'_s} + 2 \gamma \mu dt,$$

Where $g(r)$ is the value $r'$ such that $\int_0^{r'} \norm{v'_s}_2^2 \D{s} = r$. We can then use the rotational symmetry of Brownian motion to define another Brownian motion $B''_t$ jointly distributed with $B'_t$ such that $u \cdot \D{B''_t} =  \frac{v'_{g(t)}}{\norm{v'_{g(t)}}_2} \cdot \D{B'_t}$ for a fixed unit vector $u$, giving:

$$\phi_t = \phi_0 - \gamma V_t + \sqrt{2 \gamma \mu}\int_0^{V_t} u \cdot \D{B''_s} + 2 \gamma \mu dt.$$

We will show that with probability at least $1 - \delta$ over $B''_t$, the maximum of $\phi'(V) := \phi_0 - \gamma V + \sqrt{2 \gamma \mu} \int_0^V u \cdot \D{B''_s}$ over $V \in [0, \infty)$ is at most $\frac{1}{4} v_{\max}^2$. Under this event, if $c$ is sufficiently large then for all $t \in [0, \tau)$ we have $\phi_t \leq \frac{1}{4}v_{\max}^2 + 2 \gamma \mu d \tau \leq \frac{1}{2} v_{\max}^2$, giving the desired velocity bound.

We first claim that with probability at at least $1 - \frac{\delta}{2}$. for all non-negative integers $k$, we have $\phi'(k v_{\max}^2) \leq - \frac{(k-1) v_{\max}^2}{2}$. For sufficiently large $c$, this holds for $k = 0$ with probability at least $1 - \frac{\delta}{4}$ by \eqref{eq:startingbound-ud}. Conditioning on this event, for $k > 0$ if $\phi'(k v_{\max}^2) \geq - \frac{(k-1) v_{\max}^2}{2}$, then:

$$
\sqrt{2\gamma \mu} \int_0^{k v_{\max}^2} u \cdot \D{B''_s} = N(0, 2 k \gamma \mu v_{\max}^2) \geq - \frac{(k-1) v_{\max}^2}{2} - \phi_0 + k \gamma v_{\max}^2 
\geq (\gamma - 1) k v_{\max}^2,
$$

Which occurs with probability at most $\exp(-\frac{(\gamma - 1)^2 k^2 v_{\max}^4}{4k\gamma \mu v_{\max}^2}) \leq \exp(-\frac{k v_{\max}^2}{8 \mu}) $. If the constant $c$ in $v_{\max}$ is sufficiently large, then this is less than $\frac{\delta^{k+2}}{2}$. Taking a union bound over all $k$, we get the claim. Next, we claim that in each interval $[k v_{\max}^2, (k+1)v_{\max}^2)$, the maximum increase of $\phi'(V)$ is more than $(\frac{k+1}{2})v_{\max}^2$ with probability at most $\frac{\delta^{k+2}}{2}$. Taking a union bound over all intervals, this claim along with the previous claim this gives the desired bound on $\phi'(V)$ with probability $1 - \delta$. This claim follows by observing that in the interval $[k v_{\max}^2, (k+1)v_{\max}^2)$, $\phi'(V)$ increases more than $\max_{V \in [k v_{\max}^2, (k+1)v_{\max}^2)} \left[\int_{k v_{\max}^2}^V u \cdot \D{B''_s}\right]$, which is at most $(\frac{k+1}{2})v_{\max}^2$ with probability at most $\exp(- \frac{(\frac{k+1}{2})^2 v_{\max}^4}{8 v_{\max}^2}) \leq \frac{\delta^{k+1}}{2}$.

The discrete chain is analyzed similarly. We have:
\begin{align*}
\D{\phi_t} &= \frac{\del \phi_t}{\del x_t} \cdot \D{x_t} + \frac{\del \phi_t}{\del v_t} \cdot \D{v_t} + \frac{1}{2} \left[ \sum_{i, j \in [d]} \frac{\del^2 \phi_t}{\D{(v_t)_i} \D{(v_t)_j}} \frac{\D{(v_t)_i}}{\D{B_t}} \frac{\D{(v_t)_j}}{\D{B_t}} \right] \D{t}\\
&= \mu \grad f(x_t) \cdot v_t \D{t} + v_t \cdot (- \mu \grad f(x_{\lfloor\frac{t}{\eta}\rfloor\eta}) \D{t} - \gamma v_t \D{t} + \sqrt{2 \gamma \mu} \D{B_t}) + 2 \gamma \mu d \cdot \D{t}\\ 
&=\mu (\grad f(x_t) - \grad f(x_0)) \cdot v_t \D{t} - \gamma \norm{v_t}_2^2 \D{t} + \sqrt{2 \gamma \mu}(v \cdot \D{B_t}) + 2 \gamma \mu d \cdot \D{t}\\ 
&\leq \mu L \norm{x_t - x_{\lfloor\frac{t}{\eta}\rfloor\eta}}_2 \norm{v_t}_2 \D{t} - \gamma \norm{v_t}_2^2 \D{t} + \sqrt{2 \gamma \mu}(v \cdot \D{B_t}) + 2 \gamma \mu d \cdot \D{t} \\
&= \mu L \norm{\int_{\lfloor\frac{t}{\eta}\rfloor\eta}^t v_s \D{s}}_2 \norm{v_t}_2 \D{t} - \gamma \norm{v_t}_2^2 \D{t} + \sqrt{2 \gamma \mu}(v \cdot \D{B_t}) + 2 \gamma \mu d \cdot \D{t}\\
&\leq \mu L \left( \int_{\lfloor\frac{t}{\eta}\rfloor\eta}^t \norm{v_s}_2 \norm{v_t}_2 \D{s} \right) \D{t} - \gamma \norm{v_t}_2^2 \D{t} + \sqrt{2 \gamma \mu}(v \cdot \D{B_t}) + 2 \gamma \mu d \cdot \D{t}\\
&\leq \frac{\mu L}{2} \left( \int_{\lfloor\frac{t}{\eta}\rfloor\eta}^t \norm{v_s}_2^2 + \norm{v_t}_2^2 \D{s} \right) \D{t} - \gamma \norm{v_t}_2^2 \D{t} + \sqrt{2 \gamma \mu}(v \cdot \D{B_t}) + 2 \gamma \mu d \cdot \D{t}.
\end{align*}

Integrating, we get:

\begin{align*}
\phi_t &\leq \phi_0 - (\gamma - \frac{\mu L \eta}{2}) \int_0^t \norm{v_s}_2^2 \D{s} + \sqrt{2 \gamma \mu}\int_0^t \norm{v_s}_2 \frac{v_s}{\norm{v_s}_2}  \cdot \D{B_s} + 2 \gamma \mu d t\\
&\leq \phi_0 - \frac{\gamma}{2} \int_0^t \norm{v_s}_2^2 \D{s} + \sqrt{2 \gamma \mu}\int_0^t \norm{v_s}_2 \frac{v_s}{\norm{v_s}_2}  \cdot \D{B_s} + 2 \gamma \mu d t.
\end{align*}

At this point we repeat the analysis from the continuous case (only losing a multiplicative constant due to the $\gamma / 2$ multiplier not being $\gamma$).
\end{proof}
\section{Discussion and Open Questions}
\label{section:discussion}
Our work raises several interesting questions. While our bounds are for log-smooth and strongly log-concave distributions, it would be interesting to relax these assumptions. The known results for the continuous process in the underdamped case are only for weaker measures, and it is compelling to extend them to \Renyi divergence. 
Our result has a seemingly curious property: the finite time behaviour of the discrete chain is shown to be close in \Renyi divergence to the target distribution, yet we do not know if the stationary distribution of the discrete chain satisfies this property. 
Addressing this gap in our understanding is left to future work. 
There are several variants of these methods that have been studied (e.g. Metropolis Adjusted Langevin Algorithm, Hamiltonian Monte Carlo, Stochastic Gradient Langevin Dynamics) and extending our techniques to these methods would be interesting. Finally, applying these tools to specific non-convex functions of interest such as the Rayleigh quotient may lead to more practical efficient algorithms for problems such as private PCA~\citep{KapralovT}.

We note that our bound on iteration complexity for the overdamped Langevin dynamics are proportional to $\tilde{O}(1/\eps^2)$, as opposed to e.g. a $O(1/\eps^{1/2})$ dependence in \cite{MFWB19} for KL-divergence. In many differential privacy applications we would set $\eps$ to be not too small a constant, so this gap may be acceptable from a practical standpoint. Obtaining better dependencies on $\eps$ remains an interesting question. We believe the loss of a $1/\eps^2$ factor in our ``unconditioning'' argument is unavoidable, and so alternate analyses may be needed to improve this dependence.

\bibliographystyle{plainnat}
\bibliography{ref}
\end{document}